\definecolor{mygray}{RGB}{224,224,224}
\providecommand{\A}{A}
\providecommand{\F}{\mathcal{F}}
\providecommand{\G}{\mathcal{G}}
\providecommand{\C}{\mathcal{C}}
\providecommand{\att}{\rightarrowtail}
\providecommand{\natt}{\not\rightarrowtail}
\providecommand{\nat}{\mathbb{N}}
\providecommand{\stg}{\textit{stg}}
\providecommand{\gr}{\textit{gr}}
\providecommand{\na}{\textit{na}}
\providecommand{\cf}{\textit{cf}}
\providecommand{\cft}{\textit{cf2}\xspace}
\providecommand{\cfo}{\textit{cf1.5}\xspace}
\providecommand{\icft}{\textit{tfcf2}\xspace}
\providecommand{\stgt}{\textit{stg2}\xspace}
\providecommand{\stgo}{\textit{stg1.5}\xspace}
\providecommand{\istgt}{{\textit{tfstg2}}\xspace}
\providecommand{\compordinal}{component ordinal\xspace}
\renewcommand{\hat}{\widehat}
\DeclareMathOperator{\SCC}{SCC}
\begin{document}
\title{SCC-recursiveness in infinite argumentation (extended version)}
%
%
\author{Uri Andrews\inst{1}\orcidID{0000-0002-4653-7458}
	 \and
Luca San Mauro\inst{2}\orcidID{0000-0002-3156-6870}
}
\authorrunning{U. Andrews, L. San Mauro}
%
\institute{University of Wiscosin--Madison, Madison, WI 53706, USA
\email{andrews@math.wisc.edu}\\
\url{http://math.wisc.edu/$\sim$andrews} \and
University of Bari, Bari, Italy
\email{luca.sanmauro@gmail.com}\\
\url{https://www.lucasanmauro.com/}}
\maketitle              
\begin{abstract}

Argumentation frameworks (AFs) are a foundational tool in artificial intelligence for modeling structured reasoning and conflict. 
SCC-recursiveness is a well-known design principle in which the evaluation of arguments is decomposed according to the strongly connected components (SCCs) of the attack graph, proceeding recursively from "higher" to "lower" components.  
While SCC-recursive semantics such as \cft and \stgt have proven effective for finite AFs, Baumann and Spanring showed the failure of SCC-recursive semantics to generalize reliably to infinite AFs due to issues with well-foundedness. 

We propose two approaches to extending SCC-recursiveness to the infinite setting.  
We systematically evaluate these semantics using Baroni and Giacomin’s established criteria, showing in particular that directionality fails in general. We then examine these semantics' behavior in finitary frameworks, where we find some of our semantics satisfy directionality.
These results advance the theory of infinite argumentation and lay the groundwork for reasoning systems capable of handling unbounded or evolving domains.

\keywords{infinite argumentation \and 
SCC-recursiveness \and 
cf2 \and 
stg2.}
\end{abstract}

\section{Introduction}

Formal argumentation is a major research area in artificial intelligence that models reasoning and debate by representing arguments and their interactions in a structured form. A foundational concept is that of \emph{Argumentation Framework} (AF), where arguments are treated as abstract elements and their conflicts are captured by a binary attack relation, denoted $a\att b$, indicating that accepting argument $a$ provides a reason to reject argument $b$. Dung \cite{dung1995acceptability} introduced several \emph{semantics}---that is, formal criteria to identify acceptable collections of arguments called \emph{extensions}---to accomodate various reasoning contexts. A known challenge in Dung's approach arises in scenarios involving odd-length cycles: e.g., $a\att b\att c\att a$, which can lead to counterintuitive outcomes. Baroni and Giacomin \cite{BaroniGiacomin03} proposed the  \cf2 semantics which yields more intuitive results in the presence of such cycles. This idea was further developed by Baroni, Giacomin, and Guida \cite{BaroniGiacominGuida}  into a  general framework of \emph{SCC-recursiveness}, where the acceptability of arguments is determined recursively based on the structure of the \emph{strongly connected components} (SCCs) of the argumentation framework's attack graph. These approaches have proven particularly successful for finite argumentation frameworks.

Infinite AFs, where an infinite number of arguments is considered, are important because many real-world domains involve potentially unbounded or dynamically generated arguments. Such arguments may arise from, e.g., inductive definitions, logical deductions, recursive structures, or ongoing information streams. In recent years, interest in infinite AFs has grown (see, e.g., \cite{verheij2003deflog,caminada2014grounded-inf,baumann2015infinite,baroni2012computing,bistarelli2021weighted}); our previous work contributes to this line of research by exploring the computational complexity of natural reasoning problems in infinite AFs (see \cite{andrews2024FCR,andrews2024NMR}). More broadly, the study of infinite AFs often involves analyzing the limiting behavior of semantics---understanding which properties persist in unbounded contexts and designing algorithms that converge toward correct conclusions about infinite frameworks.


Baumann and Spanring \cite{BS-unrestricted} identified a fundamental issue with SCC-recur\-siveness, particularly in the context of \cft and \stgt semantics, when applied to infinite AFs. Specifically, in the infinite setting, the recursive definition may be ill-founded, rendering the  semantics themselves not well-defined. Consequently, Baroni and Giacomin’s semantics---despite their naturalness and elegance in the finite case---do not reliably extend to the infinite domain.

 In this paper, we propose two solutions to address this problem.
The first approach involves extending the recursion \emph{transfinitely}, which we see (Remark \ref{rmk:Equivalence With Gaggl} below) is equivalent to alternative characterizations previously proposed by Gaggl and Woltran \cite{Gaggl-Woltran-cf2} and Dvo\v r\'ak and Gaggl \cite{Dvorak-Gaggl-stg2} in the finite setting. The second approach is to introduce two new semantics, which we term \cfo and \stgo (Definition \ref{def:1.5 semantics}). These are defined by modifying the standard definitions of \cft and \stgt by preserving the initial notion of strongly connected components throughout the recursive process, rather than redefining them at each step. 

We analyze these new semantics in light of the evaluation criteria proposed by Baroni and Giacomin \cite{BaroniGiacomin}, identifying in each case exactly which of these criteria are preserved in the infinite setting  (Table \ref{table:properties}). Additionally, we use our semantics to revisit the question raised by Baumann and Spanring \cite{baumann2015infinite} concerning whether  every \emph{finitary} AF---that is, an AF in which  each argument is attacked by only finitely many others---necessarily admits a \cft or \stgt extension.  Finally, we provide a deeper exploration of the finitary case, proving that   \cfo and \stgo are, in general, more well-behaved than the transfinite extensions of \cft or \stgt (Table  \ref{table:finitary-properties}).


\section{Background}

\subsection{Logic background}
Recall that the ordinals extend the counting
numbers into the infinite. We will use the following fundamental properties of the collection of ordinals:
\begin{itemize}
    \item The collection of ordinals is linearly ordered, and every non-empty subset of the collection of ordinals has a least element.
\end{itemize}

A well-ordering is order of an ordinal. That is, it is a linear order so that every non-empty subset has a least element. Recall that the ordinal $\omega$ is the least infinite ordinal. Also recall  that every ordinal $\alpha$ is either the
successor of another ordinal $\beta$, i.e., $\alpha=\beta+1$, or is a limit ordinal, i.e., $\alpha =sup(Y)$, which is the
supremum of the ordinals in the set $Y$. The ordinal $\omega$ is the first limit ordinal, and is the supremum of the finite
ordinals, i.e., the natural numbers.

We let $\nat^{<\nat}$ represent the collection of \emph{strings}, i.e., finite sequences, of natural numbers and $\nat^\nat$ represent infinite sequences of natural numbers. 
A string $\sigma\in \nat^{<\nat}$ is a \emph{prefix} of a string $\tau\in \nat^{<\nat}$ or of a sequence $\tau \in \nat^{\nat}$, written $\sigma\preceq \tau$, if there is some $\rho\in \nat^{<\nat}$ or $\rho \in \nat^{\nat}$ so that $\tau = \sigma^\smallfrown\rho$, i.e, $\tau$ is the concatenation of $\sigma$ with $\rho$. Two strings $\sigma$ and $\tau$ are comparable if $\sigma\preceq \tau$ or $\tau \preceq \sigma$.
A \emph{tree} is a subset of $\nat^{<\nat}$ which is closed under prefixes. A \emph{path} $\pi$ through a tree $T$ is an element of $\nat^\nat$ so that every prefix of $\pi$ is in $T$. 
Note that since our trees may be infinitely branching, even trees which contain arbitrarily long strings (i.e., are of infinite \emph{height}) may not have paths. Consider for example, the tree containing the empty string and all strings $i^\smallfrown\sigma$ where the length of $\sigma$ is $\leq i$.

Finally, the proofs of Theorems \ref{thm:finitary existence cf2} and \ref{thm:finitary existence stg} will employ the compactness theorem for propositional logic \cite[\S XI.4, Theorem 4.5]{EFT}. That is, a theory $T$ of propositional logic is consistent if and only if every finite subset of $T$ is consistent.
\subsection{Argumentation theoretic background}\label{sec:AF Background}

We briefly review some key concepts of Dung-style   argumentation theory (for an overview of this area, we refer the reader to the surveys \cite{baroni2009semantics,dunne2009complexity}). 

An \emph{argumentation framework} (AF) $\F$  is a pair $(A_\F,R_\F)$ consisting of  a set $A_\F$ of arguments and an attack relation $R_\F\subseteq A_\F\times A_\F$. If some argument $a$ attacks some argument $b$, 
 we often write $a\att b$ instead of $(a, b)\in R_\F$.   Collections of arguments $S\subseteq A_\F$ are called \emph{extensions}. For any extension $S$, denote by $\F\restriction_S$ the sub-framework of $\F$ with respect to $S$: i.e.,  $\F\restriction_S = (A_\F \cap S, R_\F \cap (S \times S))$.

 For an extension $S$, the symbols $S^+$ and $S^-$ denote, respectively, the arguments that $S$ attacks and the arguments that attack $S$: 
 \[
 S^+=\{x : (\exists y \in  S)(y \att x)\};
 S^-=\{x : (\exists y \in  S)(x \att y) \}.
 \]
 $S$ \emph{defends} an argument $a$, if  any argument that attacks $a$ is attacked by some argument in $S$ (i.e., $\{a\}^-\subseteq S^+$). The \emph{range} $S^{\oplus}$ of $S$ as $S\cup S^+$.  The \emph{characteristic function} of $\F$ is the mapping $f_\F$ which sends subsets of $A_\F$ to subsets of $A_\F$ via 
 $f_\F(S) := \{x :  x \text{ is defended by $S$}\}$. 
Most AFs investigated in this paper are infinite. 
An AF
 $\F$ is \emph{finitary} if  $\{x\}^-$ is finite for all $x\in A_\F$.

A \emph{semantics} $\sigma$  assigns to every AF $\F$ a set of extensions $\sigma(\F)$ which are  deemed as acceptable.  Several semantics have been proposed and analyzed.  Four prominent semantics are relevant here: conflict-free, grounded, naive, stage (abbreviated by $\cf, \gr, \na, \stg,$ respectively). 
First, denote by $\cf(\F)$ the collection of extensions of $\F$ which are \emph{conflict-free}: i.e., $S\in \cf(\F)$ iff  $a \natt b$, for all $a,b\in S$. Then, for $S\in \cf(\F)$,
 \begin{itemize}
 \item $S\in \na(\F)$ iff there is no $T\in \cf(\F)$ with  $T\supsetneq S$;
 \item $S \in \gr(\F)$ iff  $S=f_F(S)$ and there is no $T\subsetneq S$ with $T=f_F(T)$;
 \item $S \in \stg(\F)$ iff there is no  $T \in  \cf(\F)$ with $S^{\oplus}\subsetneq {T}^{\oplus}$.
\end{itemize}

We recall that for each AF $\F$, the grounded semantics yields a unique extension, which is the least fixed-point of the characteristic
function $f_\F$. Observe that every stage extension is naive.

Finally, we give the recursive definition of $\cft$ and $\stgt$ semantics---—the main objects of study in this paper---which are based on the graph-theoretic notion of strongly connected component.

\begin{definition}
    For an AF $\F$, the \emph{strongly connected component} of   $a\in A_\F$, written $\SCC(a)$ is the set of arguments $b\in A_\F$ so that there exists a directed path from $a$ to $b$ (i.e., a sequence $c_0\att c_1 \att c_2 \dots \att c_n$ so that $c_0=a$ and $c_n=b$) and there exists a directed path from $b$ to $a$.  
\end{definition}

We denote by $\SCC(\F)$ the strongly connected components of $\F$; note that $\SCC(\F)$ is a partition of $A_F$. In particular, $b\in \SCC(a)$ iff $a\in \SCC(b)$.

An important feature of the decomposition of an AF into strongly connected components is that the resulting graph---where each SCC is treated as a single node---is acyclic; in other words, the attack relation induces a partial order over the SCCs. The SCC-recursive schema (see Baroni, Giacomin, and Guida \cite{BaroniGiacominGuida}) employs this property to define a recursive procedure that incrementally builds extensions by processing the SCCs in accordance with their partial order. In Definition \ref{def2}, $D_S(\SCC(b))$ is the set of elements in the strongly connected component of $b$ that are not invalidated by an argument in $S$ belonging to an SCC that precedes 
$\SCC(b)$ in the partial order.


\begin{definition}\label{def2}
    For $X,S\subseteq A_\F$, we let 
    \[
    D_{S}(X)=\{b\in X : (\exists a\in S\smallsetminus X)(a\att b)\}.
    \]
\end{definition}


The formal definition of the $\cft$ semantics is as follows:

\begin{definition}[Baroni-Giacomin \cite{BaroniGiacomin03}]\label{def:cf2}
Let $\F=(A_\F,R_\F)$ be an AF and $S\subseteq A_\F$. Then, $S \in \cft(\F)$ iff:
    \begin{itemize}
        \item $|\SCC(\F)|=1$ and $S \in \na(F)$;
        \item or, for each $X\in \SCC(\F)$, $(S \cap X) \in \cft\left(\F\restriction_{X \smallsetminus D_S(X)}\right)$.
    \end{itemize}

\end{definition}

Dvo\v r\'ak and Gaggl give the \stgt semantics similarly by using the $\stg$ semantics in place of the $\na$ semantics:

\begin{definition}[Dvo\v r\'ak-Gaggl \cite{Dvorak-Gaggl-stg2}]\label{def:stg2}
    Let $\F = (A_\F, R_\F)$ and let $S \subseteq A_\F$. Then, $S\in \stgt(\F)$ iff:
    \begin{itemize}
        \item $|\SCC(\F)|=1$ and $S \in \stg(F)$;
        \item or, for each $X\in \SCC(\F)$, $(S \cap X) \in \stgt\left(\F\restriction_{X \smallsetminus D_S(X)}\right)$.
    \end{itemize}
\end{definition}

For finite AFs, the above definitions are well-defined---that is, the recursion always terminates---since, if $\F$ has more than one strongly connected component, then $\vert X\smallsetminus D_S(X)\vert < \vert A_F \vert$. In contrast, the situation becomes more delicate in the infinite setting, as illustrated by the following example.

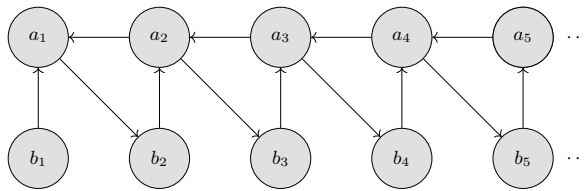
\begin{figure}\centering
\scalebox{0.8}{
\begin{tikzpicture}[->=Stealth, state/.style={circle, draw, fill=mygray, minimum size=1cm}]

    \node[state, 
    ] (a1) {$a_1$};
    \node[state, 
    ] (a2) [right=of a1] {$a_2$};
    \node[state, 
    ] (a3) [right=of a2] {$a_3$};
    \node[state, 
    ] (a4) [right=of a3] {$a_4$};
    \node[state, 
    ] (a5) [right=of a4] {$a_5$};
    \node[state, 
    ] (a5) [right=of a4] {$a_5$};
   \node[draw=none] (ellipsis2)[right=1mm of a5] {$\cdots$};

    \node[state, 
    ] (b1) [below=of a1] {$b_1$};
    \node[state, 
    ] (b2) [below=of a2] {$b_2$};
    \node[state, 
    ] (b3) [below=of a3] {$b_3$};
    \node[state, 
    ] (b4) [below=of a4] {$b_4$};
    \node[state, 
    ] (b5) [below=of a5] {$b_5$};
    \node[draw=none] (ellipsis2)[right=1mm of b5] {$\cdots$};

\draw[->] (a2) edge (a1);
\draw[->] (a3) edge (a2);
\draw[->] (a4) edge (a3);
\draw[->] (a5) edge (a4);

\draw[->] (a1) edge (b2);
\draw[->] (a2) edge (b3);
\draw[->] (a3) edge (b4);
\draw[->] (a4) edge (b5);

\draw[->] (b1) edge (a1);
\draw[->] (b2) edge (a2);
\draw[->] (b3) edge (a3);
\draw[->] (b4) edge (a4);
\draw[->] (b5) edge (a5);



\end{tikzpicture}
}

\caption{An example from \cite{BS-unrestricted} of an infinite AF where the \cft and \stgt semantics are not well-defined.}
\label{BS-example}

\end{figure}

\begin{example}[Baumann-Spanring \cite{BS-unrestricted}]
The example in Figure \ref{BS-example} illustrates an AF $\F$ for which the notion of a $\cft$ or $\stgt$ extension is \emph{not} well-defined. For concreteness, we consider the case of  $\cft$ (the case of $\stgt$ is analogous). Observe that $\F$ consists of two strongly connected components: $\{b_1\}$ and $A_\F \smallsetminus \{b_1\}$. Now, let $B=\{b_i : i\in \nat\}$. By definition,  
    \[    B\in \cft \Leftrightarrow B\smallsetminus \{b_1\}\in \cf2(\F\restriction_{A_\F\smallsetminus \{b_1,a_1\}}).  
    \]
    The restricted framework $\F\restriction_{A_\F\smallsetminus \{b_1,a_1\}}$ again contains only two strongly connected components, one of which is the singleton $\{b_2\}$.
    Iterating this process, we repeatedly remove pairs $b_i$ and $a_i$ from the frameworks.  However, we never reach a stage where only a single strongly connected component remains. As a result, the recursion underlying the definition of $\cft$ does not terminate, and such a semantics is not well-defined for $\F$.
\end{example}

\section{SCC-recursiveness in the infinite setting}\label{sec:defining semantics}
In this section, we propose two novel approaches to extending the concept of SCC-recursive semantics to the infinite setting. The first approach relaxes the requirement that the recursion must terminate after finitely many steps, instead allowing the process to continue transfinitely along the ordinals.

\subsection{Use transfinite recursion}
We begin by inductively defining the strongly connected component of an element $a\in A_\F$ at  ordinal stages.

\begin{definition}
    Let $\F$ be an AF and let $S\subseteq A_\F$. For $a\in A_\F$ and all ordinals $\alpha$, define $C_S^\alpha(a)$ inductively as follows: 
    \begin{itemize}
        \item   $C_S^0(a)=\SCC(a)$;
        \item $C_S^{\alpha+1}(a)$ is the strongly connected component of $a$ in $C_S^\alpha(a)\smallsetminus D_S(C_S^\alpha(a))$;
        \item finally, for limit ordinals $\lambda$, $C_S^\lambda(a)$ is the strongly connected component of $a$ in $\bigcap_{\alpha<\lambda}C_S^\alpha(a)$.
    \end{itemize}   
        For $a\in A_\F$ and $S\subseteq A_\F$, we let the \emph{\compordinal} of $a$ over $S$, written $\alpha_S(a)$,  be the least $\alpha$ so that either $a\notin C_S^\alpha(a)$ or $C_S^{\alpha+1}(a)=C_S^\alpha(a)$.
\end{definition}

We are now ready to define the semantics that extend $\cft$ and 
$\stgt$ transfinitely; we denote these by 
$\icft$ and $\istgt$, respectively.

\begin{definition}\label{def:icft}\label{def:istgt}
Let $\F=(A_\F,R_\F)$ and let $S\in \cf(\F)$. Then, 

\begin{itemize}
    \item $S\in\icft(\F)$ iff, for each $a\in \F$, either $a\notin C_S^{\alpha_S(a)}(a)$ or $S\cap C_S^{\alpha_S(a)}(a)$ is a naive extension of $\F\restriction_{C_S^{\alpha_S(a)}(a)}$;
    \item $S\in \istgt(\F)$ iff, for each $a\in \F$, either $a\notin C_S^{\alpha_S(a)}(a)$ or $S\cap C_S^{\alpha_S(a)}(a)$ is a stage extension of $\F\restriction_{C_S^{\alpha_S(a)}(a)}$.
\end{itemize}
\end{definition}


\begin{remark}
Following the same pattern but using the semantics $\sigma$ instead of naive or stage, one may define, for any base semantics $\sigma$, a new semantics \textit{tf$\sigma$2} which transfinitely extends the SCC-recursive semantics $\sigma2$. However, in this paper we restrict our attention to the specific cases of $\icft$ and $\istgt$.\footnote{We note with this general definition, \icft should be called \textit{tfna2}, but we choose to follow the standard terminology from the finite setting.}
\end{remark}


Observe that in the example from Figure \ref{BS-example}, $\{b_i : i\in \nat\}$ is both a \icft and  a \istgt extension. For each argument $b_i$, at the ordinal $\alpha=i$, its strongly connected component stabilizes to the singleton $\{b_i\}$, and the restriction of $B$ to this component forms a naive and a stage extension, respectively. Similarly, for each argument $a_i$, at the ordinal $i+1$, we see that $a_i\notin C_B^{i+1}(a_i)$, and thus it is correctly excluded from the extension.

\begin{remark}\label{rmk:Equivalence With Gaggl}
    Gaggl and Woltran \cite[Theorem 3.11]{Gaggl-Woltran-cf2}, as well as  Dvo\v r\'ak and Gaggl \cite[Proposition 3.2]{Dvorak-Gaggl-stg2}, provided alternative characterizations of the $\cft$ and $\stgt$ semantics in the finite setting. Their approach is based on the least fixed point of an operator $\Delta_{\F,S}$. Notably, the existence of this least fixed point does not depend on the finiteness of the argumentation framework. In fact, it turns out 
    that their characterizations, if considered in the infinite setting, are equivalent to the definition of the \icft and \istgt semantics given in this paper. A proof of this equivalence is provided in Appendix \ref{sec:equivalence}.
\end{remark}

One downside of working with \icft and \istgt is that, for countable AFs $\F$, there may be $a\in \F$ and $S\subseteq A_\F$ whose \compordinal  $\alpha_S(a)$ is an arbitrarily large countable ordinal. See Appendix \ref{sec:appx for defining semantics} for an example of a finitary AF with high \compordinal $\alpha_S(a)$.


%
%
%
%

\begin{remark}
Although working with arbitrarily large countable ordinals may seem challenging, we stress that this is not fundamentally worse than the situation for the grounded extension, which can likewise require a transfinite number of steps---potentially up to any countable ordinal---to stabilize
 (see Andrews-San Mauro \cite{TARK}). 
  However, an important distinction arises in the finitary case: 
 for finitary AFs,  the grounded extension is found after $\omega$ steps \cite[Theorem 47]{dung1995acceptability}, whereas the ordinal values $\alpha_S(x)$ associated with the transfinite extensions of the SCC-recursive semantics  can be arbitrarily large \emph{also} in the finitary setting.
\end{remark}

We conclude this subsection with showing that, whenever $\cft$ and $\stgt$ are well-defined, they  coincide with their transfinite counterparts (proof in Appendix \ref{sec:appx for defining semantics}):

\begin{restatable}{theorem}{cftAndtfcftagree}
    \label{thm:when defined cf2 and icft agree}
    Let $\F$ be an AF and $S\subseteq A_\F$. Suppose that whether or not $S$ is a \cft-extension is well-defined. Then $S$ is a \cft-extension if and only if $S$ is a $\icft$-extension.
Similarly, suppose that whether or not $S$ is a \stgt-extension is well-defined. Then $S$ is an \stgt-extension if and only if $S$ is a \istgt-extension.
\end{restatable}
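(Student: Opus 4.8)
The plan is to show that when $\cft$ is well-defined for $\F$ (and hence, by the recursive structure, for all the restricted frameworks arising in the recursion), the two recursive processes "line up" step for step, so that the transfinite process actually stabilizes after finitely many steps and produces exactly the same membership condition.

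First I would make precise what "well-defined" means here: whether $S \in \cft(\F)$ is well-defined iff the recursion tree underlying Definition \ref{def:cf2} is well-founded, i.e., there is no infinite descending chain of restrictions $\F \supsetneq \F\restriction_{X_1 \smallsetminus D_S(X_1)} \supsetneq \cdots$ each having more than one SCC. Equivalently, one can assign to each argument $a$ a finite "recursion depth" $d(a)$: if $\SCC(a) = A_\F$ then $d(a) = 0$; otherwise $d(a)$ is one more than the depth of $a$ in $\F\restriction_{\SCC(a) \smallsetminus D_S(\SCC(a))}$. Well-definedness is exactly the assertion that this $d(a)$ is a well-defined natural number for every $a$. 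The key observation I would then establish by induction on $d(a)$ is the following correspondence: for each $n \le d(a)$, the set $C_S^n(a)$ equals the SCC of $a$ in the depth-$n$ restricted framework appearing in the $\cft$-recursion at $a$, and moreover $\alpha_S(a) = d(a)$ — i.e., the \compordinal\ is exactly the (finite) recursion depth. This is essentially a bookkeeping lemma: $C_S^0(a) = \SCC(a)$ matches the top-level SCC; the successor clause $C_S^{\alpha+1}(a) = $ SCC of $a$ in $C_S^\alpha(a) \smallsetminus D_S(C_S^\alpha(a))$ matches precisely the step the $\cft$-recursion takes when descending into the non-trivial SCC of $a$; and when $d(a)$ is reached, either $a$ has been removed or $a$'s SCC has become all of the remaining framework, which forces $C_S^{d(a)+1}(a) = C_S^{d(a)}(a)$, so $\alpha_S(a) = d(a)$ and no limit stage is ever relevant.

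Given that correspondence, the conclusion follows by unwinding both definitions. On the $\icft$ side, $S \in \icft(\F)$ requires, for each $a$, that either $a \notin C_S^{\alpha_S(a)}(a)$ or $S \cap C_S^{\alpha_S(a)}(a)$ is naive in $\F\restriction_{C_S^{\alpha_S(a)}(a)}$. By the lemma, $C_S^{\alpha_S(a)}(a)$ is exactly the base-case single-SCC framework reached at the bottom of the $\cft$-recursion at $a$ (or $a$ has dropped out along the way), and naivety of $S \cap C_S^{\alpha_S(a)}(a)$ there is exactly the base case of Definition \ref{def:cf2}. Since the $\cft$-recursion is well-founded, $S \in \cft(\F)$ iff this base-case condition holds for every $a$ — one can prove "$S \in \cft(\F) \iff$ for all $a$, the base case at $a$ holds" by induction on $d$, using that $(S \cap X) \in \cft(\F\restriction_{X \smallsetminus D_S(X)})$ and that restriction further decomposes. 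Putting these together gives $S \in \cft(\F) \iff S \in \icft(\F)$. The $\stgt$/$\istgt$ case is identical with "stage" in place of "naive" throughout.

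The main obstacle I anticipate is not any single deep step but rather getting the inductive bookkeeping exactly right: one must be careful that the restricted framework the $\cft$-recursion hands to argument $a$ at depth $n$ really does have $a$'s SCC equal to $C_S^n(a)$, which requires checking that the $D_S$ operation computed inside the restriction agrees with $D_S$ computed relative to the current component $C_S^n(a)$ — i.e., that "attacked from outside the component by something in $S$" is stable under these restrictions. A second subtlety is handling the two ways an argument leaves the recursion ($a \notin C_S^{\alpha_S(a)}(a)$ versus $a$ still present but its component being the whole remaining framework), and confirming that in the well-defined case the limit clause of the $C_S^\alpha$ definition is simply never triggered, so the whole transfinite apparatus collapses to finite iteration. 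Once the depth-indexed correspondence lemma is nailed down, the equivalence of the two membership conditions is a routine unwinding.
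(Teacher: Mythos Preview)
Your proposal is correct and follows essentially the same approach as the paper's proof: both identify the leaves of the well-founded $\cft$-recursion tree with the terminal components $C_S^{\alpha_S(a)}(a)$, so that the base-case naive (resp.\ stage) condition at each leaf is exactly the $\icft$ (resp.\ $\istgt$) acceptance condition. Your version is more explicit about the bookkeeping, spelling out the depth-indexed correspondence $C_S^n(a) = $ (SCC of $a$ at recursion level $n$) and the equality $\alpha_S(a)=d(a)$, whereas the paper simply asserts that the terminal components at the leaves coincide with the $C_S^{\alpha_S(a)}(a)$ and moves on; the subtlety you flag about $D_S$ computed in the restriction versus relative to $C_S^n(a)$ is real but resolves once $S$ is conflict-free.
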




\subsection{SCC-prioritization instead of recursion}

We now propose a second approach that retains some of the benefits of SCC-recursiveness,
while avoiding the need for transfinite recursion. This method is characterized by evaluating strongly connected components as they appear in the original framework $\F$, without further subdividing a component $X$ in response to attacks from arguments in $S$ that lie  outside of $X$.

\begin{definition}\label{def:1.5 semantics}
    Let $\F=(A_\F, R_\F)$ and let  $S\in \cf(\F)$. Then, 
    \begin{itemize}
        \item $S\in \cfo(\F)$ iff, for each $X\in \SCC(\F)$, $S\cap X$ is a naive extension in $\F\restriction_{X\smallsetminus D_S(X)}$;
        \item $S\in \stgo(\F)$ iff, for each  $X\in \SCC(\F)$, $S\cap X$ is a stage extension in $\F\restriction_{X\smallsetminus D_S(X)}$.
    \end{itemize}
\end{definition}


\begin{remark}
    Note that, given any semantics $\sigma$, one  could similarly define the semantics $\sigma$\textit{1.5}, by simply saying that $S\in \sigma\textit{1.5} (\F)$ iff, for each $X\in \SCC(\F)$, $S\cap X$ is a $\sigma$-extension in $\F\restriction_{X\smallsetminus D_S(X)}$. 
\end{remark}

As a first test of our proposed semantics, we briefly revisit Examples 4-6 from Baroni and Giacomin
\cite{BaroniGiacomin03}, where the authors highlight certain undesirable behaviors exhibited by the preferred and grounded semantics on argumentation frameworks containing odd-length cycles. In particular, they observe that the preferred semantics treats odd and even cycles differently---an asymmetry which is regarded as problematic. For each of these examples, it is straightforward to  check that $\cft(\F)=\cfo(\F)=\stgo(\F)$, indicating that our semantics avoid some of the undesirable behaviors of the preferred and grounded semantics.

Example 7 from the same paper---reproduced here as Figure \ref{baronigiacomin-example}---reveals a distinction between $\cft$ and $\cfo$; however, in this example, $\stgo$ continues to align with both $\cft$ and $\stgt$.

\begin{figure}\centering
\scalebox{0.8}{
\begin{tikzpicture}[->=Stealth, state/.style={circle, draw, fill=mygray, minimum size=1cm}]

    \node[state, 
    ] (a) {$a$};
    \node[state, 
    ] (b0) [right=of a] {$b_0$};
   \node[draw=none] (nu) [right= of b0] {};
    \node[state, 
    ] (b2) [right=of nu] {$b_2$};
    \node[state, 
    ] (b1) [above=of nu] {$b_1$};
    \node[state, 
    ] (b3) [below=of nu] {$b_3$};

 \draw[->] (a) edge (b0);
 \draw[->] (b0) edge (b1);
 \draw[->] (b1) edge (b2);
 \draw[->] (b2) edge (b3);
 \draw[->] (b3) edge (b0);





\end{tikzpicture}
}

\caption{An example of a finite AF $\F$, where the semantics $\cft$ and $\cfo$ differ. Specifically, $\cft(\F)=\{\{a, b_1, b_3\}\}$, while $\cfo(\F)=\{\{a, b_1, b_3 \},\{a,b_2\}\}$; however, note that $\stgo(\F)=\stgt(\F)=\cft(\F)$.}
\label{baronigiacomin-example}

\end{figure}

Similarly, consider a finite variant of the argumentation framework depicted in Figure \ref{BS-example}; for instance, $\F\restriction_{\{a_i,b_i \, : \, i\leq 4\}}$. In this case, the set $\{b_1,a_2,a_4\}$ is a $\cfo$ extension, but not a $\cft$ extension. As before, the unique $\cft$ extension, and likewise the unique $\stgo$ extension, is $\{b_i : i\leq 4\}$.

Thus, considering motivating examples, $\cfo$ and $\stgo$ appear to be promising candidates for well-defined semantics that preserve some of the key advantages of SCC-recursiveness, while avoiding the complexity of deep transfinite recursion. In the next section, we critically examine the trade-offs involved in adopting these semantics---alongside with  \icft and \istgt---for reasoning in the infinite setting.


\section{Properties of these semantics}\label{sec:properties}
To clarify the relative advantages of the semantics under consideration, we adopt the following naive-based evaluation criteria introduced by Baroni and Giacomin \cite{BaroniGiacomin}: 

\begin{definition}
A semantics $\sigma$ satisfies:
\begin{itemize}
    \item \textbf{I-maximality criterion} if, for each AF $\F$ and for each $S_1, S_2 \in \sigma(\F)$, $S_1 \subseteq S_2$ implies $S_1 = S_2$;
    \item \textbf{Reinstatement criterion} if, for each AF  $\F$ and for each $S \in \sigma(\F)$, if $S$ defends some argument $a$, then $a \in S$;
    \item \textbf{Weak reinstatement criterion} if, for each AF $\F$ and for each $S \in \sigma(\F)$, $S$ contains the grounded extension of $\F$;
    \item \textbf{CF-reinstatement criterion} if, for each AF $\F$ and for each $S \in \sigma(\F)$, whenever $S$ defends $a$ and $S\cup \{a\}$ is conflict-free, then $a \in S$;    
    \item \textbf{Directionality criterion} if, for each AF $\F$ and $U\subseteq A_\F$ which is not attacked from outside $U$, $\sigma(\F\restriction_U) = \{ S \cap U : S \in \sigma(\F) \}$.
\end{itemize}
\end{definition}


Baroni and Giacomin \cite[Proposition 63]{BaroniGiacomin} show that in the finite setting, $\cft$ satisfies the properties of elementary and weak skepticism adequacy, which we now define. We will examine below which variants of $\cft$ or $\stgt$ satisfy these conditions in the infinite setting.

\begin{definition}
Let $\tau_1$ and $\tau_2$ be two sets of extensions of an AF $\F$. Then:
\begin{itemize}
    \item The \textbf{elementary skepticism relation}, denoted $\tau_1 \preceq^E_\cap \tau_2$, holds iff
    \[
    \bigcap_{S_1 \in \tau_1} S_1 \subseteq \bigcap_{S_2 \in \tau_2} S_2;
    \]
    \item The \textbf{weak skepticism relation}, denoted $\tau_1\preceq^E_W \tau_2$, holds iff
    $(\forall S_2 \in \tau_2 \; \exists S_1 \in \tau_1)(S_1 \subseteq S_2)$.
\end{itemize}
\end{definition}

\begin{definition}
 For $\F=(A_\F,R_\F)$, let $\text{conf}(\F)$ be the set of conflicting pairs in $\F$: i.e., $\text{conf}(\F)=\{(x,y)\mid (x,y)\in R\text{ or }(y,x)\in R\}$.  Next, let $\sigma$ be a semantics:
\begin{itemize}
    \item For each of the skepticism relations $\preceq$, we say that $\sigma$ is \textbf{$\preceq$-skepticism adequate}, if $\F=(A_{F},R_\F)$ and $\G=(A_\G,R_\G)$ with $R_\F\supseteq R_\G$ and $\text{conf}(F)=\text{conf}(G)$ implies that $\sigma(\F)\preceq \sigma(\G)$.
\end{itemize}
\end{definition}

Table~\ref{table:properties} summarizes which of the considered semantics satisfy each of the evaluation criteria discussed above. The proofs of these results are provided in Appendix~\ref{sec:appx properties}.

\begin{table}[htbp]
\centering
\begin{tabular}{|l|l|l|l|l|l|l|l|l|}
\hline
 & \na & \cft & \stg & \stgt & \icft & \cfo & \istgt & \stgo \\ \hline
Well-defined & Y & N & Y & N & Y & Y & Y & Y \\ \hline
I-maximality & Y & Y & Y & Y & Y & Y & Y & Y \\ \hline
Reinstatement & N & N & N & N & N & N & N & N \\ \hline
Weak reinstatement & N & Y* & N & Y* & Y & N & Y & N \\ \hline
CF-reinstatement & Y & Y & Y & Y & Y & Y & Y & Y \\ \hline
Directionality & N & N & N & N & N & N & N & N \\ \hline
$\preceq^E_\cap$-sk.\ ad.\ & Y & N & N & N & Y & Y & N & N \\ \hline
$\preceq^E_w$-sk.\ ad.\ & Y & N & N & N & Y & Y & N & N \\ \hline
\end{tabular}
\caption{Summary of which semantics satisfy each of the evaluation criteria. Entries marked with Y* indicate cases where the criterion may be satisfied vacuously due to the semantics being undefined in some instances. For example, while every  $\cft$-extension  contains the grounded extension, it is possible for no $\cft$-extensions to exist,  meaning that arguments in the grounded extension could fail to be credulously $\cft$ accepted.
Results for the naive and stage semantics are from \cite{GagglDissertation}. 
}
\label{table:properties}
\end{table}

We regard directionality as a core desiratum for both the $\cft$ and $\stgt$ semantics; however, every version of these semantics fails to satisfy directionality in general. 

\begin{example}\label{ex:failureDirectionality}
    Let $\Sigma$ be the following collection of semantics: $\cft$, $\icft$, $\cfo$, $\stgt$, $\istgt$, and $\stgt$. Consider the AF $\F$ with argument set $A_\F:=\{a_i : i\in \nat\}$, and attacks defined by $a_i\att a_j$ if and only if $i>j$. It is straightforward to verify that, for $\sigma\in \Sigma$, $\F$ has no $\sigma$-extension. 

    Now, let $\G$ be the framework obtained by extending $\F$ with two additional arguments $x$ and $y$, which attack each other, and where $x\att a_i$, for all $i\in \nat$. In $\G$, the only $\sigma$-extension for any $\sigma\in \Sigma$ is $\{x\}$. However, in the subframework $\G\restriction_{\{x,y\}}$, we have  $\sigma(\G\restriction_{\{x,y\}})=\{\{x\},\{y\}\}$. Since $\{x,y\}$ is not attacked by any other argument in $\F$, we deduce that the directionality criterion fails for all the semantics in $\Sigma$.
\end{example}

In the next section, we will observe a different situation when restricting attention to finitary AFs.

\section{The finitary case}\label{sec:finitary case}
We view the failure of directionality shown in Example \ref{ex:failureDirectionality} as a significant concern for the  robustness of the semantics in question. 
The failure of directionality in $\G$ is an immediate consequence of the failure of existence of a $\sigma$-extension in $\F$, where each argument has infinitely many attackers. This observation naturally leads us to revisit a conjecture posed by Baumann and Spanring \cite[Conjecture 1]{baumann2015infinite} which asserts, when $\F$ is finitary, the \cf2 and \stg2 semantics should always admit at least one extension. This conjecture predates the authors' later discovery that these semantics are not well-defined in the general infinite setting. Now that we have introduced two distinct reformulations of \cf2 and \stg2 capable of handling infinite frameworks, we are in a position to re-examine this conjecture through the lens of our proposed semantics.

\begin{definition} For a semantics $\sigma$,
\begin{itemize}
    \item $\sigma$ satisfies \textbf{finitary existence} if, whenever $\F$ is a finitary AF, then there exists some $S\in \sigma(\F)$;
    \item $\sigma$ satisfies \textbf{finitary directionality} if, whenever $\F$ is finitary and $U\subseteq A_\F$ is not attacked from outside $U$, $\sigma(\F\restriction_U) = \{ S \cap U : S \in \sigma(\F) \}$.
\end{itemize}
\end{definition}

\begin{table}[htbp]
\centering
\begin{tabular}{|l|l|l|l|l|l|l|l|l|}
\hline
& \na & \cf2 & \stg & \stgt & \icft & \cfo & \istgt & \stgo \\ \hline
Finitary existence & Y & N & Y & N & ? & Y & N & Y \\ \hline
Finitary directionality & N & N & N & N & ? & Y & N & Y \\ \hline
\end{tabular}
\caption{Summary of results on finitary existence and directionality for the semantics under consideration.}
\label{table:finitary-properties}
\end{table}

Table \ref{table:finitary-properties} summarizes our results on this topic.
We begin by examining the property of finitary existence. Example~\ref{BS-example} demonstrates that there exists a finitary argumentation framework that admits neither a  \cft nor a \stgt-extension, hence the semantics \cft and \stgt do not satisy finitary existence. This is due to the fact that these semantics are  not well-defineded in that setting; in fact, the AF from Example \ref{BS-example} does have an \icft and \istgt extension---specifically, $\{b_i : i\in \nat\}$. Nonetheless, we now show that not every finitary AF has an \istgt extension.

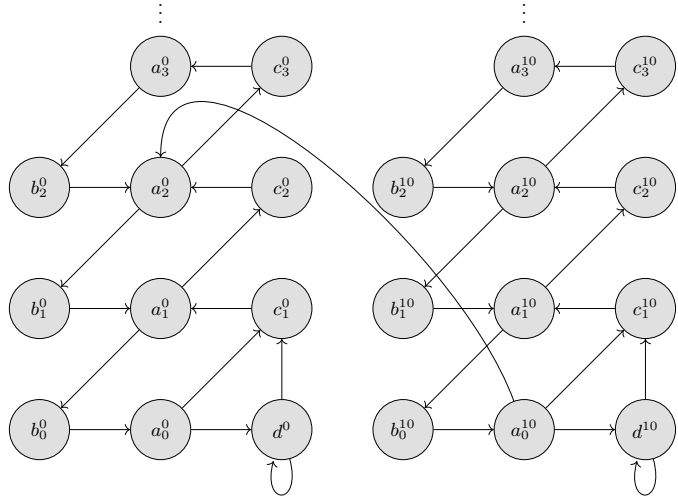
\begin{figure}\centering
\scalebox{0.8}{
\begin{tikzpicture}[->=Stealth, state/.style={circle, draw, fill=mygray, minimum size=1cm}]

    \node[state, 
    ] (b0) {$b^0_0$};
    \node[state, 
    ] (a0) [right=of b0] {$a^0_0$};
    \node[state, 
    ] (b1) [above=of b0] {$b^0_1$};
    \node[state, 
    ] (b2) [above=of b1] {$b^0_2$};
    \node[state, 
    ] (a1) [above=of a0] {$a^0_1$};
    \node[state, 
    ] (a2) [above=of a1] {$a^0_2$};
     \node[state, 
    ] (a3) [above=of a2] {$a^0_3$};
    \node[state, 
    ] (c1) [right=of a1] {$c^0_1$};
    \node[state, 
    ] (c2) [right=of a2] {$c^0_2$};
    \node[state, 
    ] (c3) [right=of a3] {$c^0_3$};

    \node[draw=none] (ellip)[above=1mm of a3] {$\vdots$};

     \node[state, 
    ] (d) [right=of a0] {$d^0$};

    \node[state, 
    ] (1b0) [right=of d] {$b^{10}_0$};
    \node[state, 
    ] (1a0) [right=of 1b0] {$a^{10}_0$};
    \node[state, 
    ] (1b1) [above=of 1b0] {$b^{10}_1$};
    \node[state, 
    ] (1b2) [above=of 1b1] {$b^{10}_2$};
    \node[state, 
    ] (1a1) [above=of 1a0] {$a^{10}_1$};
    \node[state, 
    ] (1a2) [above=of 1a1] {$a^{10}_2$};
     \node[state, 
    ] (1a3) [above=of 1a2] {$a^{10}_3$};
    \node[state, 
    ] (1c1) [right=of 1a1] {$c^{10}_1$};
    \node[state, 
    ] (1c2) [right=of 1a2] {$c^{10}_2$};
    \node[state, 
    ] (1c3) [right=of 1a3] {$c^{10}_3$};

    \node[draw=none] (1ellip)[above=1mm of 1a3] {$\vdots$};

     \node[state, 
    ] (1d) [right=of 1a0] {$d^{10}$};

  \draw[->] (b0) edge (a0);
  \draw[->] (b1) edge (a1);
  \draw[->] (b2) edge (a2);

  \draw[->] (a0) edge (c1);
  \draw[->] (a1) edge (c2);
  \draw[->] (a2) edge (c3);

    \draw[->] (c1) edge (a1);
  \draw[->] (c2) edge (a2);
  \draw[->] (c3) edge (a3);

   \draw[->] (a1) edge (b0);
  \draw[->] (a2) edge (b1);
  \draw[->] (a3) edge (b2);

    \draw[->] (a0) edge (d);

    \draw[->] (d) edge (c1);

    \draw[->] (d) edge[loop below] (d);

     \draw[->] (1b0) edge (1a0);
  \draw[->] (1b1) edge (1a1);
  \draw[->] (1b2) edge (1a2);

  \draw[->] (1a0) edge (1c1);
  \draw[->] (1a1) edge (1c2);
  \draw[->] (1a2) edge (1c3);

    \draw[->] (1c1) edge (1a1);
  \draw[->] (1c2) edge (1a2);
  \draw[->] (1c3) edge (1a3);

   \draw[->] (1a1) edge (1b0);
  \draw[->] (1a2) edge (1b1);
  \draw[->] (1a3) edge (1b2);

    \draw[->] (1a0) edge (1d);

    \draw[->] (1d) edge (1c1);

    \draw[->] (1d) edge[loop below] (1d);

     \draw[->] (1a0) edge[out=110, in=90] (a2);





\end{tikzpicture}
}

\caption{Fragment of a finitary AF encoding  a tree $T\supset \{0,10\}$ in its strongly connected components. We assume that the string $10$ is the second string in the enumeration of all strings that are longer than the string $0$ and incomparable with it.}
\label{fig: no tf-stg2 extension}

\end{figure}

\begin{theorem}
    There is a finitary AF with no \istgt extension.
\end{theorem}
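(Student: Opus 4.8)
The plan is to make the schematic in Figure~\ref{fig: no tf-stg2 extension} precise and then rule out \istgt extensions by a global combinatorial argument. First I fix a tree $T\subseteq\nat^{<\nat}$ that is infinite, has strings of arbitrarily large length, but is well-founded (has no infinite path) --- for instance the tree whose root has children $0,1,2,\dots$ and below the child $n$ has a single path of length $n$ (so $\{0,10\}\subseteq T$, matching the figure). To each $\sigma\in T$ I attach a gadget $G^\sigma$ on arguments $\{a^\sigma_i:i\in\nat\}\cup\{b^\sigma_i:i\in\nat\}\cup\{c^\sigma_i:i\ge 1\}\cup\{d^\sigma\}$ with the internal attacks of the figure: $b^\sigma_i\att a^\sigma_i$, $c^\sigma_i\att a^\sigma_i$, $a^\sigma_i\att c^\sigma_{i+1}$, $a^\sigma_{i+1}\att b^\sigma_i$, together with $a^\sigma_0\att d^\sigma$, $d^\sigma\att c^\sigma_1$, and the self-loop $d^\sigma\att d^\sigma$. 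The only inter-gadget attacks are $a^{\tau_k}_0\att a^\sigma_k$, where for each $\sigma$ one enumerates the strings of $\nat^{<\nat}$ that are longer than $\sigma$ and incomparable with $\sigma$ as $\tau_1,\tau_2,\dots$, and the attack $a^{\tau_k}_0\att a^\sigma_k$ is included exactly when $\tau_k\in T$. A direct check shows that $\F$ is finitary (each argument has at most three attackers), that each $G^\sigma$ is internally strongly connected, and that an inter-gadget attack always runs from $G^\tau$ to $G^\sigma$ with $|\tau|>|\sigma|$; hence $\SCC(\F)=\{G^\sigma:\sigma\in T\}$ and $\F\restriction_{G^\sigma}$ is the standalone gadget.

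The second ingredient is the standalone computation $\stg(\F\restriction_{G^\sigma})=\{E^\sigma\}$, where $E^\sigma:=\{a^\sigma_i:i\in\nat\}$: this set is conflict-free with range all of $G^\sigma$, hence a stage extension, and it is the only conflict-free set of full range --- the self-loop forces $a^\sigma_0$ into any conflict-free set whose range contains $d^\sigma$, and then covering $b^\sigma_0,b^\sigma_1,\dots$ together with conflict-freeness forces each $a^\sigma_i$ in. Every conflict-free set of smaller range is non-maximal, so $E^\sigma$ is the unique stage extension.

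The core is a two-part lemma about an arbitrary $S\in\istgt(\F)$, stated via $A:=\{\sigma\in T:a^\sigma_0\in S\}$. (a) If no $\tau\in A$ is longer than and incomparable with $\sigma$, then $D_S(G^\sigma)=\emptyset$, so $\alpha_S(x)=0$ for every $x\in G^\sigma$ and the \istgt clause forces $S\cap G^\sigma\in\stg(\F\restriction_{G^\sigma})=\{E^\sigma\}$; in particular $\sigma\in A$. (b) If $\sigma\in A$ and some $\tau\in A$ is longer than and incomparable with $\sigma$, then $S$ is not conflict-free. I would prove (b) by tracing the component of $a^\sigma_0$ through the finite stages $C^0_S(a^\sigma_0)\supseteq C^1_S(a^\sigma_0)\supseteq\cdots$. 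Some $a^\sigma_m$ (with $m\ge 1$ least such) lies in $D_S(G^\sigma)$, so it disappears at stage~$1$; this isolates $b^\sigma_{m-1}$ in its own singleton component, which is undisturbed (its only attacker $a^\sigma_m\notin S$), so the stage clause on that singleton forces $b^\sigma_{m-1}\in S$; then $a^\sigma_{m-1}\notin S$ and $a^\sigma_{m-1}$ is removed from $a^\sigma_0$'s component at the next stage, isolating $b^\sigma_{m-2}$, and so on. After finitely many stages the cascade isolates $b^\sigma_0$ and forces $b^\sigma_0\in S$; but $a^\sigma_0\in S$ and $b^\sigma_0\att a^\sigma_0$. (Since $a^\sigma_0$'s only attacker is $b^\sigma_0$, which only enters $S$ at the very end, $a^\sigma_0$ is never removed before the contradiction appears; the singleton computations use the stage extensions of one-argument frameworks with and without a self-loop.)

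Finally I assemble the contradiction. Let $S\in\istgt(\F)$. By (b), no $\sigma\in A$ has a strictly longer incomparable element of $A$, so any two members of $A$ are comparable or of equal length; an $A$ with members of unbounded length would then contain an infinite chain, i.e.\ an infinite path of $T$, which is impossible. So $A$ has a maximal length $\ell$. But then every $\sigma\in T$ with $|\sigma|\ge\ell$ satisfies the hypothesis of (a) (no member of $A$ is longer than $\sigma$), hence lies in $A$; since $T$ has strings of length $>\ell$, this contradicts the maximality of $\ell$. Hence $\F$ has no \istgt extension. The main obstacle is part~(b): one has to verify carefully that the initial disruption of $G^\sigma$ propagates down the whole ladder to $b^\sigma_0$ through the stages of the $C^\bullet_S$-construction, keeping track of exactly which arguments have been removed at each stage and confirming that the successive singleton components $\{b^\sigma_i\}$ are genuinely undisturbed, so that the stage clause pins down their membership in $S$.
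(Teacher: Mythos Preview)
Your proposal is correct and follows essentially the same approach as the paper: the same tree-indexed gadget construction, the same observation that the unique stage extension of an undisturbed gadget contains $a^\sigma_0$, the same downward cascade forcing $b^\sigma_{m-1},\ldots,b^\sigma_0$ into $S$ once some $a^\sigma_m$ is hit from outside, and the same final contradiction via the well-foundedness of $T$. Your write-up is in fact more carefully structured than the paper's---separating the argument into lemmas (a) and (b) and making the unbounded-versus-bounded case split in the closing paragraph explicit---but the underlying ideas are identical.
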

\begin{proof}
    Let $T\subseteq \nat^{<\nat}$ be a tree of infinite height with no path. We  construct a finitary AF consisting of infinitely  many strongly connected components, indexed by $\sigma\in T$. For each $\sigma$, the component $X_\sigma$ contains the arguments: $\{a_i^\sigma : i\in \nat\}\cup \{b_i^\sigma : i\in \nat\}\cup \{c_i^\sigma : i\geq 1\}\cup \{d^\sigma\}$. The attack relations within $X_\sigma$ are defined as follows (see Figure \ref{fig: no tf-stg2 extension}):\\
    For $i\in\nat$,
    \begin{itemize}
        \item  $a^\sigma_i\att c_{i+1}^\sigma\att a_{i+1}^\sigma \att b_{i}^\sigma \att a^\sigma_i$;
        \item $a^\sigma_0 \att d^\sigma \att c^\sigma_1$;
        \item  $d^\sigma\att d^\sigma$.
    \end{itemize}

We define the full AF $\F$ with argument set $\A_\F=\bigcup_{\sigma\in T} X_\sigma$. Now, for each $\sigma\in T$, fix an enumeration $(\tau_i)_{i\in\nat}$ of all strings $\tau\in T$ longer than $\sigma$ and so $\sigma\not\preceq \tau$. We then let $a_0^{\tau_i}\att a^\sigma_i$.

\smallskip

    We argue that $\F$ has no \istgt extension. Towards a contradiction, suppose that $S$ were such an \istgt extension.
    
    First, observe that a \stg-extension of a component $X_\sigma$ must include $a^\sigma_0$. Indeed, the set $Z=\{a^\sigma_i : i\in \nat\}$ forms a stage extension with $Z^\oplus=X_\sigma$, and the only way to include $d\in Z^{\oplus}$ is for a stage extension $Z$ is to include $a_0^\sigma$. 
    It follows that for every $\sigma$, $S$ must contain some $a_0^\tau$ where $\tau$ has length $\geq \vert \sigma\vert$. 

    Next, suppose that $\sigma$ is shorter than $\tau$ and both $a_0^\sigma$ and $a_0^\tau$ are in $S$. It must be that $\sigma\preceq \tau$, as otherwise $a_0^\tau$ would attack some $a_i^\sigma$ which would make $b^\sigma_{i-1}$ be in its own connected component. Since it is unattacked from $S$, because $a_i^\sigma\notin S$, this implies that $b^\sigma_{i-1}\in S$. This in turn attacks $a_{i-1}^\sigma$, and so on, until we see $a_0^\sigma$ is attacked by $b_0^\sigma\in S$, yielding a contradiction. Thus, there are arbitrarily long $\sigma$ so that $a_0^\sigma\in S$, and these are comparable. This implies that $T$ has a path, which is a contradiction.
\end{proof}

\begin{theorem}
    The semantics \cft, \stgt, and \istgt do not satisfy finitary directionality.
\end{theorem}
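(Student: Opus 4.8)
The plan is to adapt the construction of Example~\ref{ex:failureDirectionality}, replacing the non-finitary infinite total order used there by a \emph{finitary} AF that admits no extension. For $\cft$ and $\stgt$, the AF of Figure~\ref{BS-example} is finitary and, as already noted, admits neither a $\cft$- nor a $\stgt$-extension; for $\istgt$, the AF built in the preceding theorem is finitary and admits no $\istgt$-extension. Write $\F_0$ for the relevant AF and $\sigma$ for the matching semantics among $\cft,\stgt,\istgt$, so that $\sigma(\F_0)=\emptyset$. Form $\G$ by adjoining two fresh arguments $x,y$ with $x\att y$, $y\att x$, and $x\att z$ for every $z\in A_{\F_0}$. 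Then $\G$ is again finitary (each $z\in A_{\F_0}$ gains only the single new attacker $x$, while $x$ and $y$ are attacked only by one another), and $U:=\{x,y\}$ is not attacked from outside $U$. Since $\G\restriction_U$ is a $2$-cycle, $\sigma(\G\restriction_U)=\{\{x\},\{y\}\}$ in all three cases.

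The core of the argument is the claim $\sigma(\G)=\{\{x\}\}$; granting it, $\{S\cap U : S\in\sigma(\G)\}=\{\{x\}\}\neq\sigma(\G\restriction_U)$, so finitary directionality fails. I would establish the claim in three steps. First, $\{x\}\in\sigma(\G)$: since $x$ attacks all of $A_{\F_0}$, in the SCC-recursive evaluation (for $\cft,\stgt$) every strongly connected component $X$ of $\F_0$ has $D_{\{x\}}(X)=X$, so $\G\restriction_{X\smallsetminus D_{\{x\}}(X)}$ is empty and the recursion terminates with only the obligation on the component $\{x,y\}$, where $\{x\}$ is naive, resp.\ a stage extension; for $\istgt$ one checks $\alpha_{\{x\}}(x)=0$ and $\alpha_{\{x\}}(z)=1$ with $z\notin C^{1}_{\{x\}}(z)$ for each $z\in A_{\F_0}$, discharging every component obligation. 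Second, if $x\in S\in\sigma(\G)$, conflict-freeness forces $y\notin S$ and $S\cap A_{\F_0}=\emptyset$, whence $S=\{x\}$; and if $x,y\notin S$, then $D_S(\{x,y\})=\emptyset$, so the obligation on the component $\{x,y\}$ would require $\emptyset$ to be naive, resp.\ a stage extension, of the $2$-cycle, which is false. Third, if $y\in S$ (hence $x\notin S$): since $x$ then contributes nothing to any set $D_S(\cdot)$ and $y$ attacks only $x\notin A_{\F_0}$, the evaluation of $S$ on the components inherited from $\F_0$ coincides verbatim with the evaluation of $S\cap A_{\F_0}$ inside $\F_0$, so $S\in\sigma(\G)$ would force $S\cap A_{\F_0}\in\sigma(\F_0)$, contradicting $\sigma(\F_0)=\emptyset$.

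The only routine work is justifying the "coincides verbatim" assertions: adjoining $x$ with one-way attacks leaves unchanged the strongly connected components contained in $A_{\F_0}$, and for each such component $X$ one has $\G\restriction_{X\smallsetminus D_S(X)}=\F_0\restriction_{X\smallsetminus D_S(X)}$, proved for $\cft$ and $\stgt$ by induction on recursion depth, and $C^\alpha_S(a)=C^\alpha_{S\cap A_{\F_0}}(a)$ for all $a\in A_{\F_0}$ and all ordinals $\alpha$, proved for $\istgt$ by transfinite induction. The main obstacle I anticipate is the well-definedness bookkeeping in the third step for $\cft$ and $\stgt$: because $\F_0$ is precisely an AF on which the $\cft$/$\stgt$ recursion may fail to terminate, I cannot just assert "$S\cap A_{\F_0}\notin\sigma(\F_0)$", but must show that for every $S$ with $y\in S$ the recursion deciding whether $S\in\sigma(\G)$ either fails to terminate or terminates negatively, so that in either case $S\notin\sigma(\G)$ under the convention---already in force in the discussion of finitary existence---that an AF on which $S$ is not a well-defined $\sigma$-extension has no such extension. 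The third-step reduction is exactly what makes this go through, since it identifies the (possibly non-terminating) $\G$-recursion with the $\F_0$-recursion, which never terminates positively.
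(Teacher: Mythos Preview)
Your proposal is correct and follows essentially the same approach as the paper: start from a finitary AF $\F_0$ with $\sigma(\F_0)=\emptyset$ (using Figure~\ref{BS-example} for $\cft,\stgt$ and the preceding theorem for $\istgt$), adjoin a mutually-attacking pair $x,y$ with $x$ attacking all of $\F_0$, and observe that $\{y\}\in\sigma(\G\restriction_{\{x,y\}})$ while $\sigma(\G)=\{\{x\}\}$. The paper's proof simply asserts the last equality; your three-step case analysis and your explicit handling of the well-definedness issue for $\cft$ and $\stgt$ supply details the paper omits.
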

\begin{proof}
    We reason as in Example \ref{ex:failureDirectionality}. Let $\sigma\in \{\cft,\stgt, \istgt\}$; each of these semantics fail to satisfy finitary existence. Now, let $\F$ be a finitary AF which does not have a $\sigma$-extension. Let $G$ add two arguments $x,y$ which attack each other and such that, for all $z\in \F$, $x\att z$. Then, $G$ is finitary, yet fails directionality since $\{y\}$ is in $\sigma(G\restriction_{\{x,y\}})$ but the only $\sigma$-extension of $G$ is $\{x\}$.
\end{proof}

We now turn to proving that both \cfo and \stgo satisfy finitary existence and also finitary directionality. 
The core idea behind both arguments is the same: we exploit the compactness theorem for propositional logic to find an extension which is \cfo or \stgo.
%
The main challenge is that the space of choices used to construct a \cfo or \stgo extension may involve \emph{infinitely many choices}, which obstruct compactness. Indeed, if $Z$ is infinite, then $P_i\leftrightarrow (\bigwedge_{j\in Z} \neg P_j)$ is not a propositional formula and not subject to the compactness theorem. For instance, suppose that $\F$ consists of a single strongly connected component. To build a naive extension, for each argument $x$, we must either include $x$ in $S$, or some $y$ which is in conflict with $x$. However, there may be infinitely many $y$'s with $x\att y$.

A natural idea would be to restrict attention to sets $S$ such that every argument is either in  $S$ or is attacked by some element of 
$S$, thereby reducing the formulas describing the condition to a finite set of choices. Unfortunately, such an extension cannot be found in general---for example, no naive extension of a 3-cycle satisfies it. 

Our strategy, therefore, is to define a restricted class of potential 
 \cfo or \stgo extensions which is broad enough to guarantee that every finite AF admits an extension in this subclass,  yet narrow enough to ensure that the corresponding space of choices is compact.

\begin{lemma}\label{lem:orderingF}
    If $\F$ is a finitary AF, then there exists a well-ordering $<$ of $\A_\F$ such that each argument $a$ attacks only finitely many arguments $b$ with $b<a$. 
\end{lemma}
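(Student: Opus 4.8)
The plan is to decompose $A_\F$ into a well-ordered sequence of countable ``blocks'' that respects attacks in the sense that an argument in a later block attacks nothing in any earlier block. Call a set $Y\subseteq A_\F$ \emph{attacker-closed} if $z\att y$ and $y\in Y$ imply $z\in Y$. Two elementary facts drive the argument. First, if $Y$ is attacker-closed and $a\notin Y$, then $a$ attacks no element of $Y$: any $y\in Y$ with $a\att y$ would force $a\in Y$. Second --- and this is the only place finitariness is used --- for each $x\in A_\F$ the least attacker-closed set $N(x)$ containing $x$ is countable: writing $N_0(x)=\{x\}$ and $N_{n+1}(x)=N_n(x)\cup\bigcup_{y\in N_n(x)}\{y\}^-$, an easy induction using the finiteness of each $\{y\}^-$ shows that every $N_n(x)$ is finite, and $N(x)=\bigcup_n N_n(x)$, a countable union of finite sets.

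Next I would build, by transfinite recursion, a continuous increasing chain $(A_\alpha)_\alpha$ of attacker-closed subsets of $A_\F$. Fix a background well-ordering of $A_\F$. Put $A_0=\emptyset$; at a successor stage, if $A_\gamma\neq A_\F$ let $a^\ast$ be the least argument not in $A_\gamma$ and set $A_{\gamma+1}=A_\gamma\cup N(a^\ast)$ (still attacker-closed, as a union of attacker-closed sets), and otherwise set $A_{\gamma+1}=A_\gamma$; at limit stages take unions. Since a new argument enters at every nontrivial successor step, $A_\alpha=A_\F$ for some $\alpha$. Let $D_\alpha=A_\alpha\smallsetminus\bigcup_{\beta<\alpha}A_\beta$; the sets $D_\alpha$ partition $A_\F$, each $D_{\gamma+1}\subseteq N(a^\ast)$ is countable, and $D_\lambda=\emptyset$ at limit $\lambda$ by continuity. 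Fix for each $\alpha$ a well-ordering of $D_\alpha$ of order type at most $\omega$, and define $a<b$ iff, with $a\in D_\alpha$ and $b\in D_\beta$, either $\alpha<\beta$, or $\alpha=\beta$ and $a$ precedes $b$ in the chosen order of $D_\alpha$. This $<$ is a well-ordering of $A_\F$, since a least element of any nonempty $X\subseteq A_\F$ is obtained by first minimising the block index and then using the order on that block.

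Finally I would verify the required property. Fix $a\in A_\F$ and take $\alpha$ with $a\in D_\alpha$. By the definition of $<$, the set $\{b:b<a\}$ is the union of the finitely many $<$-predecessors of $a$ inside $D_\alpha$ with $\bigcup_{\beta<\alpha}D_\beta=\bigcup_{\beta<\alpha}A_\beta$; the latter set is attacker-closed and does not contain $a$, so by the first fact $a$ attacks none of its members. Hence $a$ attacks only finitely many arguments below it.

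The main conceptual obstacle --- and the reason a less structured argument does not work --- is that the obvious greedy transfinite recursion (``repeatedly adjoin any argument attacking only finitely many of the already-chosen ones'') breaks down at limit stages: an argument can attack infinitely many of the previously chosen arguments in total while attacking only finitely many from each proper initial segment of the construction. Passing to attacker-closed blocks, which finitariness forces to be countable, is exactly what sidesteps this difficulty.
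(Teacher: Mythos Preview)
Your proof is correct and rests on essentially the same idea as the paper's: exploit finitariness to make attacker-closures countable, and build the well-order so that proper initial segments are attacker-closed (hence unattacked by later elements). The paper's version interleaves new elements from a fixed enumeration with one-step closure under attackers to produce the enumeration directly, whereas you first construct a transfinite chain of attacker-closed blocks and then order each countable block in type $\leq\omega$; your treatment of limit stages is more explicit, but the two arguments are minor rearrangements of one another.
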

\begin{proof}
    Fix any enumeration $\A_\F=\{a_\gamma \mid \gamma<\kappa\}$ of order type an ordinal $\kappa$. We build another ordering $\A_\F=\{b_\gamma \mid \gamma<\kappa\}$ out of this one. For each ordinal $\beta$, we define $\gamma_\beta$ to be least so that $a_{\gamma_\beta}\notin \{b_\alpha \mid \alpha<\beta\}$.

    We let $b_0=a_0$. We then let $b_1,\ldots, b_k$ be all the attackers of $a_0$. Next, we let $b_{k+1}=a_{\gamma_{k+1}}$. We then let $b_{k+2},\ldots, b_\ell$ be all the attackers of any $b_m$ with $m\leq k+1$. Continuing as such, we see that if $x$ attacks $y$, then $x$ appears in our enumeration $b_\alpha$ at most finitely much after $y$.
\end{proof}

\begin{theorem}\label{thm:finitary existence cf2}
    If $\F$ is a finitary AF, there exists a $\cfo$ extension in $\F$. 
\end{theorem}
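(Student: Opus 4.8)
The plan is to construct a $\cfo$ extension by means of the compactness theorem for propositional logic, using the well-ordering $<$ of $A_\F$ provided by Lemma~\ref{lem:orderingF}. Take a propositional variable $P_a$ for each $a\in A_\F$ (read as ``$a$ is in the extension'') and let $T$ be the theory with axioms: (i) $\neg(P_a\wedge P_b)$ for every attack $a\att b$ (so $\neg P_a$ whenever $a\att a$); and (ii) for every $a\in A_\F$ with $a\natt a$, writing $X=\SCC(a)$, the clause
\[
P_a \;\vee\; \bigvee_{c\att a} P_c \;\vee\! \bigvee_{\substack{b\in X\\ b<a,\;a\att b}}\! P_b .
\]
A self-attacking argument gets no axiom of type (ii): it belongs to no conflict-free set and cannot be added to one, so the naivety requirement at such an argument is automatic. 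The reason Lemma~\ref{lem:orderingF} is invoked is precisely that $\{c : c\att a\}$ is finite because $\F$ is finitary, and $\{b : b<a,\ a\att b\}$ is finite by the defining property of $<$; hence every axiom of $T$ is a genuine finite propositional formula and the compactness theorem applies to $T$.

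The first thing to check is that any model of $T$, identified with the set $S$ of arguments whose variable is true, lies in $\cfo(\F)$. Conflict-freeness of $S$ follows from (i); the same axioms give $S\cap X\subseteq X\smallsetminus D_S(X)$ for every $X\in\SCC(\F)$ (an element of $S\cap X$ attacked from $S\smallsetminus X$ would break conflict-freeness), so $S\cap X$ is a conflict-free candidate extension of $\F\restriction_{X\smallsetminus D_S(X)}$. For maximality, take $a\in(X\smallsetminus D_S(X))\smallsetminus S$: if $a\att a$ then $a$ self-attacks in $\F\restriction_{X\smallsetminus D_S(X)}$ and cannot be added to $S\cap X$; otherwise axiom (ii) for $a$ forces some attacker $c$ of $a$ with $c\in S$ --- and $c\notin X$ is impossible, as it would put $a\in D_S(X)$ --- or some $b\in X$ with $b<a$, $a\att b$, $b\in S$; in either case we get an element of $S\cap X$ conflicting with $a$ inside $\F\restriction_{X\smallsetminus D_S(X)}$. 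Hence each $S\cap X$ is naive there, i.e.\ $S\in\cfo(\F)$.

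The core of the proof is to show that every finite $T_0\subseteq T$ is satisfiable. Let $F$ be the finite set of arguments occurring in $T_0$ and $\mathcal X=\{\SCC(a):a\in F\}$, a finite set of SCCs; since the attack relation induces a partial order on $\SCC(\F)$, fix a linear order $\prec$ on $\mathcal X$ with $Y\prec X$ whenever some argument of $Y$ attacks some argument of $X$ and $Y\neq X$. Process the members of $\mathcal X$ in $\prec$-order: for the current $X$, let $D_X$ be the set of arguments of $X$ attacked by $\bigcup\{S_Y : Y\in\mathcal X,\ Y\prec X\}$ (every component of $\mathcal X$ attacking into $X$ has been processed), and build $S_X\subseteq X\smallsetminus D_X$ greedily, running through $X\smallsetminus D_X$ in $<$-order and adjoining an argument precisely when the result stays conflict-free. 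Put $S=\bigcup_{X\in\mathcal X}S_X$ and make $P_a$ true exactly for $a\in S$. This $S$ is conflict-free: within each $S_X$ by construction, and between distinct $X,Y\in\mathcal X$ because between two SCCs every attack goes from the attacking component --- which is $\prec$-earlier --- into the attacked one, from which the attacked arguments were deleted before its piece was built. Finally each axiom of $T_0$ holds: those of type (i) because $S$ is conflict-free; and an axiom (ii) for some $a$ (so $\SCC(a)\in\mathcal X$) holds because either $a\in D_{\SCC(a)}$ (then a $c\in S$ with $c\att a$ satisfies it), or $a\in X\smallsetminus D_X$ with $X=\SCC(a)$ and then either $a\in S_X$, or the greedy step rejected $a$ because some earlier $b\in S_X\subseteq X$ with $b<a$ conflicts with $a$ --- giving the $\bigvee_{c\att a}$ disjunct if $b\att a$, and the last disjunct if $a\att b$. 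Thus $T_0$ is satisfiable; compactness yields a model of $T$, which by the previous paragraph is the desired $\cfo$ extension.

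I expect the finite-satisfiability step to be the main obstacle, and within it the gluing of the independently built naive pieces $S_X$ into one globally conflict-free set. A direct construction avoiding compactness is blocked on two fronts: a single SCC can be infinite, so naivety inside it is not expressible by finitely many clauses; and the SCC partial order of a finitary AF need not be well-founded, so there is no global top-down recursion over SCCs. It is only because $\mathcal X$ is finite that its SCC order can be safely linearized and the pieces assembled. The remaining points --- finiteness of the axioms and the fact that models of $T$ are $\cfo$ extensions --- are routine.
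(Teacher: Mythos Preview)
Your proof is correct and essentially matches the paper's: the propositional theory you write down is logically equivalent to the paper's (your clauses (ii) are the $\leftarrow$ direction of the paper's biconditionals $P_a\leftrightarrow\bigl(\bigwedge_{b\att a}\neg P_b\wedge\bigwedge_{b\in B(a)}\neg P_b\bigr)$, with $B(a)$ exactly your set $\{b\in\SCC(a):b<a,\ a\att b\}$, and the $\rightarrow$ direction being subsumed by (i)), and both establish finite satisfiability via a greedy construction over a topological sort of the relevant SCCs. The only cosmetic difference is that the paper first restricts to the finite sub-framework $\F\restriction_X$ on the arguments mentioned in $T'$ before running the greedy algorithm, whereas you run the greedy process on the full (possibly infinite) SCCs in $\mathcal X$; both variants work.
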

\begin{proof}
    By Lemma \ref{lem:orderingF}, we may assume that $A_\F$ is well-ordered in such a way that each element attacks at most finitely many before it. For any $a\in A_\F$, define $B(a)$ to be the finite set $\{x\in \SCC(a) : x<a \text{ and } a\att x\}$.
    We define an extension $S$ to be a \emph{greedy \cfo extension} if $S\in \cf(\F)$ and every element $a\in A_{\F}$ is either in $S$, or is attacked by an element of $S$, or there exists an argument in $S\cap B(a)$. 

    We now define a propositional theory associated with $\F$. For each $a\in A_\F$, introduce a propositional variable $P_a$. Define a theory $T_0$ consisting of the following formulas:
    For each $a\in A_\F$ with $a\natt a$, $$P_a\leftrightarrow (\bigwedge_{b\att a} \neg P_b \wedge \bigwedge_{b\in B(a)}\neg P_b).$$ Let $T_1$ be the theory that says for each $a\att b$, $\neg (P_a\wedge P_b)$. Finally, let $T=T_0\cup T_1$. 
    
    Note that $T$ is a propositional theory: in particular, since each argument in $F$ has only finitely many attackers, each conjuction in the above formulas is finite.

To apply the compactness theorem for propositional logic, we must show that every finite subset $T'$ of $T$ is consistent. Fix $T'$ a finite subset of $T$. Let $X$ be the finite set of arguments $a$ so that $P_a$  appears in $T'$.  We now argue that the framework $F\restriction_X$ has a greedy \cfo extension. This can be shown by starting with an initial strongly connected component $Y$ and using a greedy algorithm to find a naive extension:
\begin{itemize}
    \item Take the first element of $Y$ (using the order on $\F$) and put it into $S$ unless it attacks itself;
    \item Continuing taking successive elements of $Y$ and put them into $S$ unless they attack themselves or are in conflict with the elements previously put into $S$;
    \item Once finished with $Y$, we then proceed to another strongly connected component which is initial among the remaining strongly connected components.
\end{itemize}

As such, we can build a greedy \cfo extension of $F\vert_X$. We note that this satisfies the formulas of $T'$. By compactness, $T$ is consistent. 
    
    Let $\pi$ be a model of $T$. Then define $S$ by letting $a$ be in $S$ iff $\pi$ makes $P_a$ true. We observe that $S$ is a greedy \cfo extension of $F$, and is thus a \cfo extension of $F$.
    \end{proof}

A similar, albeit more involved, argument shows that the \stgo semantics satisfies finitary existence (see Appendix \ref{sec:finitary existence for stgo}).

\begin{restatable}{theorem}{stgFinitaryEx}\label{thm:finitary existence stg}
    If $\F$ is a finitary AF, then it has an \stgo-extension.
\end{restatable}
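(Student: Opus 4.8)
\textbf{Proof plan for Theorem~\ref{thm:finitary existence stg}.}
The plan is to mimic the proof of Theorem~\ref{thm:finitary existence cf2}, replacing naive extensions of each component by stage extensions and adjusting the propositional encoding accordingly. As before, by Lemma~\ref{lem:orderingF} I would first fix a well-ordering $<$ of $A_\F$ in which every argument attacks only finitely many $<$-smaller arguments, and for $a\in A_\F$ set $B(a)=\{x\in\SCC(a) : x<a \text{ and } a\att x\}$, a finite set. The first substantive task is to isolate the right notion of ``greedy \stgo extension'': a set $S\in\cf(\F)$ such that, for each component $X$, $S\cap X$ is a stage extension of $\F\restriction_{X\smallsetminus D_S(X)}$, witnessed in a ``local'' way by $<$ so that membership of $a$ in the range of $S\cap X$ is forced by attackers or by $B(a)$. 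Concretely, recalling that stage asks for $\subseteq$-maximal range among conflict-free sets, I would try to characterize, via a greedy sweep along $<$ inside each component, when an argument lies in $(S\cap X)^\oplus$ relative to $\F\restriction_{X\smallsetminus D_S(X)}$; the hope is that this, together with conflict-freeness, pins down $S$ by local conditions each of which mentions only finitely many arguments (the attackers of $a$ and the finite sets $B(\cdot)$).

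Next I would build the propositional theory. Keep variables $P_a$ for $a\in A_\F$, keep $T_1$ asserting $\neg(P_a\wedge P_b)$ for each $a\att b$ (conflict-freeness), and replace $T_0$ by a set of formulas expressing the stage/maximality condition. The delicate point is that ``$S\cap X$ has $\subseteq$-maximal range'' is a priori a condition quantifying over all conflict-free subsets, which is not finitary; the fix, as in the \cfo case, is to pass to the \emph{greedy} notion. I would introduce auxiliary variables $Q_a$ meaning ``$a$ is in the range of $S$ within its component, after removing $D_S(X)$'', and write finitary biconditionals: $Q_a$ holds iff $P_a$ holds, or some attacker $b$ of $a$ with $b\in\SCC(a)$ has $P_b$ true and is not itself removed, or $a\in B(\cdot)$ is covered; and a maximality clause forcing $P_a$ whenever $a$ is not yet in the range and adding it keeps conflict-freeness. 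Each such clause refers only to the finitely many attackers of $a$ and to $B(a)$, so the whole theory $T$ is genuinely propositional.

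The compactness step then proceeds exactly as before: given a finite $T'\subseteq T$, let $X_0$ be the finite set of arguments whose variables occur in $T'$, observe that the finite AF $\F\restriction_{X_0}$ admits a greedy \stgo extension (built by the same component-by-component greedy sweep along $<$, now choosing at each component a stage extension by the standard greedy construction for finite AFs, which is where I would invoke that stage extensions of finite AFs exist and can be found greedily), and check this extension satisfies $T'$. Hence every finite subset of $T$ is consistent, so $T$ is consistent by compactness~\cite[\S XI.4, Theorem~4.5]{EFT}; any model yields $S$, and one verifies $S$ is a greedy \stgo extension, hence a \stgo extension.

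I expect the main obstacle to be the second paragraph: getting the finitary propositional encoding of the stage (maximal-range) condition exactly right. Unlike naiveness, which is maximality among conflict-free sets and is ``locally checkable'' (an argument must be in $S$ or in conflict with $S$), stage maximality is about the range $S\oplus S^+$, and the interaction with the $D_S(X)$-removal inside each component makes the ``locally greedy'' witness more subtle; I would need to argue carefully that the greedy sweep along $<$ really does produce a stage extension of each $\F\restriction_{X\smallsetminus D_S(X)}$ and that the finite-subtheory models glue correctly. This is presumably why the paper defers the full argument to Appendix~\ref{sec:finitary existence for stgo} and calls it ``more involved''.
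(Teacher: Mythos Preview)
Your overall architecture---compactness over a propositional theory with variables $P_a$ for ``$a\in S$'' and $Q_a$ for ``$a\in S^\oplus$'', verified by building extensions on finite restrictions component by component---matches the paper's. You also correctly pinpoint the obstacle: stage maximality (``$S^\oplus$ is $\subseteq$-maximal'') is a global condition, and the clause you propose, ``force $P_a$ whenever $a$ is not yet in the range and adding it keeps conflict-freeness'', does not encode it. That clause is (modulo the atemporal reading) just $\neg Q_a\wedge\bigwedge_{b\text{ in conflict with }a}\neg P_b\to P_a$, which yields a \emph{naive} extension, not a stage one. In a component containing, say, a 3-cycle $a\att b\att c\att a$ together with $a\att d\att d$, a greedy sweep along $<$ starting at $b$ happily puts $b$ into $S$ and never recovers $d$ into the range; yet the unique stage extension of this component must contain $a$.

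The idea you are missing, and which the paper uses, is to replace $\subseteq$-maximality of the range by \emph{lexicographic} maximality of the range with respect to the well-order $<$. On each component $X$ (after removing $D_S(X)$), one asks that $S^\oplus\cap X$ be the lexicographically largest achievable range among conflict-free choices; lex-maximal implies $\subseteq$-maximal, so this yields stage. Crucially, lex-maximality \emph{is} locally checkable: for each $a$, the set $Z(a)$ of elements of $\SCC(a)$ that are $\leq a$ is finite (by the choice of $<$), and for each finite ``external context'' $Y\subseteq (Z(a)\cup Z(a)^-)^-\smallsetminus\SCC(a)$ one can precompute the lex-largest subset $Z^a_Y\subseteq Z(a)$ attainable as part of $(Y\cup S')^\oplus$ for some conflict-free $S'$ inside the component. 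The theory then contains, for every $a$ and every such $Y$, the finitary implication $\psi_{a,Y}$: ``if the external $P$-pattern on $(Z(a)\cup Z(a)^-)^-\smallsetminus\SCC(a)$ is $Y$, then $\bigwedge_{c\in Z^a_Y}Q_c$''. Consistency of finite subtheories is witnessed exactly as you outline (process SCCs in a topological order, at each step choose an extension realizing the lex-max range on the finite fragment), and a model gives $S$ with $S^\oplus$ lex-maximal, hence stage, on each component. Your $B(a)$ bookkeeping from the \cfo case plays no role here; what matters instead is the finiteness of $Z(a)$ and of the relevant attacker sets.
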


The following theorem does not follow immediately from finitary existence, but rather requires a new argument, in the \stgo case requiring another application of propositional compactness (see Appendix \ref{sec:finitary existence for stgo}). 

\begin{restatable}{theorem}{cfostgoFinitaryDirectionality}\label{thm:cfostgoFinitaryDirectionality}
    The semantics    \cfo, \stgo satisfy finitary directionality.
\end{restatable}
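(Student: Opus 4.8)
The plan is to prove the two inclusions of the set equality $\sigma(\F\restriction_U) = \{ S \cap U : S \in \sigma(\F) \}$ separately, for $\sigma \in \{\cfo,\stgo\}$, where $U \subseteq A_\F$ is not attacked from outside $U$. The key structural fact I would exploit is that, since $U$ receives no attacks from $A_\F \smallsetminus U$, every strongly connected component of $\F$ is either entirely contained in $U$ or entirely disjoint from $U$; moreover for a component $X \subseteq U$ the restricted subframework $\F\restriction_{X \smallsetminus D_S(X)}$ depends only on attacks from within $U$, so $\SCC(\F\restriction_U)$ is exactly $\{X \in \SCC(\F) : X \subseteq U\}$ and the ``local test'' for membership in $\cfo$ or $\stgo$ at such an $X$ is literally the same whether computed in $\F$ or in $\F\restriction_U$.

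For the inclusion $\{ S \cap U : S \in \sigma(\F) \} \subseteq \sigma(\F\restriction_U)$: given $S \in \sigma(\F)$, I would check that $S \cap U$ satisfies the definition of $\sigma(\F\restriction_U)$. Conflict-freeness is inherited. For each $X \in \SCC(\F\restriction_U)$ we have $X \in \SCC(\F)$ with $X \subseteq U$, and since no attacks enter $U$ from outside, $D_{S}(X) = D_{S \cap U}(X)$, so the condition ``$S \cap X$ is a naive (resp.\ stage) extension of $\F\restriction_{X \smallsetminus D_S(X)}$'' — which holds because $S \in \sigma(\F)$ — is exactly the condition required of $S \cap U$ in $\F\restriction_U$, using $(S\cap U)\cap X = S \cap X$. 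This direction is routine bookkeeping and does not need finitariness or compactness.

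For the reverse inclusion $\sigma(\F\restriction_U) \subseteq \{ S \cap U : S \in \sigma(\F) \}$: given $T \in \sigma(\F\restriction_U)$, I must extend $T$ to a full $\sigma$-extension $S$ of $\F$ with $S \cap U = T$. The idea is to run the existence argument of Theorem \ref{thm:finitary existence cf2} (resp.\ Theorem \ref{thm:finitary existence stg}) on the subframework $\F\restriction_{A_\F \smallsetminus U}$ — which is again finitary — to obtain a $\sigma$-extension $T'$ of it, and set $S := T \cup T'$. Since no attack goes from $A_\F \smallsetminus U$ into $U$, the components of $\F$ split cleanly: those inside $U$ are handled by $T$, those inside $A_\F \smallsetminus U$ are handled by $T'$, and because attacks from $U$ into $A_\F\smallsetminus U$ are possible, the restricted subframeworks $\F\restriction_{X \smallsetminus D_S(X)}$ for $X \subseteq A_\F\smallsetminus U$ need not coincide with the ones seen by $T'$ alone. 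Here is the main obstacle: I cannot simply reuse an arbitrary $T'$; I must run the compactness construction of Theorem \ref{thm:finitary existence cf2}/\ref{thm:finitary existence stg} on all of $\F$ but with the propositional theory augmented by the unit clauses $\{P_a : a \in T\} \cup \{\neg P_a : a \in U \smallsetminus T\}$ fixing the behavior on $U$. For the finite-subset consistency check, given a finite $T'$ mentioning arguments $X$, one builds a greedy extension of $\F\restriction_X$ that on the components inside $U$ agrees with $T$ (this is possible precisely because $T$, restricted to those finitely many components, is itself built compatibly — one invokes that $T$ is a genuine $\cfo$/$\stgo$ extension of $\F\restriction_U$, so on each such component the local naive/stage condition already holds and the greedy algorithm can be made to output $T \cap X$ there) and then processes the remaining components greedily as before. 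The \stgo case needs the extra care already flagged in Appendix \ref{sec:finitary existence for stgo}, namely the second layer of compactness ensuring the range-maximality condition is witnessed correctly; the same augmentation by unit clauses on $U$ carries through there. Modulo this bookkeeping, compactness then yields the desired $S$, and by construction $S \cap U = T$, completing the reverse inclusion.
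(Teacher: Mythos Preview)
Your easy direction and your \stgo argument are essentially the paper's. The gap is in your \cfo hard direction. You augment the greedy theory of Theorem~\ref{thm:finitary existence cf2} with unit clauses fixing $T$ on $U$ and assert that in the finite-subset check ``the greedy algorithm can be made to output $T\cap X$'' on components inside $U$. This is false: the greedy axioms $P_a\leftrightarrow(\bigwedge_{b\att a}\neg P_b\wedge\bigwedge_{b\in B(a)}\neg P_b)$ are deterministic given the order $<$, and an arbitrary \cfo-extension $T$ need not satisfy them. For instance, take a $3$-cycle $a\att b\att c\att a$ inside $U$ with $a<b<c$; then $T=\{b\}$ is a \cfo-extension, but the greedy axiom at $a$ forces $P_a$ (since $c\notin T$ and $B(a)=\emptyset$), contradicting the unit clause $\neg P_a$. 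Thus the augmented theory is inconsistent as stated. The repair is to impose the greedy axioms only for $a\notin U$; the finite-subset check then goes through (on $U$ use $T$'s values, outside $U$ run greedy taking the fixed attacks from $T$ into account), and any model is \cfo on SCCs outside $U$ while agreeing with $T$ inside.

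The paper avoids compactness altogether for \cfo and takes a cleaner route: given $S\in\cfo(\F\restriction_U)$ it applies finitary existence not to $\F\restriction_{A_\F\smallsetminus U}$ but to $G=\F\restriction_{A_\F\smallsetminus(U\cup S^+)}$, deleting also everything that $S$ attacks. This single move guarantees $S\cup S'$ is conflict-free for any $S'\in\cfo(G)$, and since each SCC $X$ of $\F$ outside $U$ decomposes into the elements of $S^+$ together with a union of SCCs of $G$, naivety of $S\cup S'$ on $X\smallsetminus D_{S\cup S'}(X)$ follows directly from naivety of $S'$ on each piece---no second compactness argument needed. For \stgo the paper does exactly what you sketch: rerun the compactness of Theorem~\ref{thm:finitary existence stg} with unit clauses $T_3$ on $U$ and with the range-maximality axioms $\psi_{a,Y}$ restricted to $a\in A_\F\smallsetminus U$; note that this restriction is precisely the analogue of the repair above.
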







\section{Discussion}

The results presented in this paper contribute to the broader understanding of argumentation in infinite domains, a setting of increasing relevance for AI systems that must operate over unbounded or dynamically evolving data. Our two proposed approaches---the transfinite extension of SCC-recursive semantics and the introduction of the \cfo and \stgo semantics---highlight different design trade-offs when generalizing well-understood concepts from the finite case.

The transfinite extensions provide a principled way to salvage the recursive methodology of \cft and \stgt in the infinite setting, aligning closely with known alternative characterizations. However, our analysis reveals that these extensions can suffer from foundational issues, particularly around directionality and existence, in infinite frameworks. 

By contrast, the \cfo and \stgo semantics preserve the original SCC structure throughout the evaluation process. This approach avoids the need for transfinite recursion. Notably, in finitary AFs, \cfo and \stgo demonstrate superior adherence to the key evaluation criteria of directionality and existence, suggesting that they may serve as more robust alternatives for infinite argumentation reasoning tasks in practice.


Going forward, one promising direction is to study algorithmic properties and complexity results for the newly introduced semantics, particularly in the context of incremental or streaming argumentation systems. Additionally, we left open the question of finitary existence and directionality for the \icft semantics.

In sum, our work advances the theoretical foundations of infinite argumentation and provides practical tools for constructing semantics that remain meaningful in the presence of infinite interaction structures.

\begin{credits}
\subsubsection{\ackname} This work was supported by the the National Science Foundation under Grant DMS-2348792. San Mauro is a member of INDAM-GNSAGA.

\subsubsection{\discintname}
The authors have no competing interests to declare that are
relevant to the content of this article.
\end{credits}
%
%
%

 \bibliographystyle{splncs04}

%





\newpage

\appendix

\section{Detailed explanation of Remark \ref{rmk:Equivalence With Gaggl}}\label{sec:equivalence}

We recall the following definitions from Gaggl and Woltran \cite[Definitions 3.6, 3.7]{Gaggl-Woltran-cf2}:
\begin{definition}
    Let $\F$ be an AF, $a,b$ be arguments and $B\subseteq A_\F$. We say $b$ is \emph{reachable from $a$ modulo $B$}, written $a\Rightarrow^B_\F b$, if there is a path from $a$ to $b$ in $\F\vert_B$. 
\end{definition}

\begin{definition}
    For an AF $\F$ and $D, S\subseteq A_\F$: 
    $$\Delta_{\F,S}(D)=\{a\in A_\F \mid \exists b\in S (b\att a \wedge a\not\Rightarrow_\F^{A_\F\smallsetminus D} b)\}.$$
\end{definition}

The following definition was given for finite ordinals, which we expand to infinite ordinals as well (we note a small change in the definition at index 0, which only shifts the indices by 1):
\begin{definition}
    $\Delta^0_{\F,S}=\emptyset$.

    $\Delta^{\alpha+1}_{\F,S}=\Delta_{\F,S}(\Delta^\alpha_{\F,S})$.

    $\Delta^\lambda_{\F,S}=\bigcup_{\alpha<\lambda} \Delta^\alpha_{\F,S}$.
\end{definition}

We note that this is a situation of a familiar lattice-theoretic phenomenon:

\begin{definition}\label{def:lattice theoretic sequence}
    Let $L$ be a complete lattice and $f:L\to L$ be any function. For all ordinals $\alpha$ and $u\in L$, the \emph{$\alpha$-iteration of $f$} is inductively defined as follows:
    \begin{itemize}
        \item $f^0(u)=u$;
        \item If $\alpha=\beta+1$, then $f^\alpha(u)=f(f^\beta(u))$;
        \item If $\alpha$ is a limit ordinal, then $f^\alpha(u)=\bigvee_{\beta < \alpha} f^\beta(u)$.
    \end{itemize}
\end{definition}

\begin{theorem}[see, e.g., \protect{\cite[Corollary 3.7]{venema2008lectures}}]\label{thm:lattice-theory approximation}
    Let $L$ be a complete lattice with bottom element $0_L$, equipped with a monotone function $f:L\to L$. Then, there is an ordinal $\alpha$ so that the least fixed point of $f$ coincides with $f^\alpha(0_L)$.
\end{theorem}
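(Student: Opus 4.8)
The plan is to construct the least fixed point explicitly as the stabilization point of the transfinite iteration $\langle f^\alpha(0_L) \rangle$ starting from the bottom element, proceeding in three essentially routine steps.

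First I would show by transfinite induction that this iteration is non-decreasing, i.e.\ $f^\beta(0_L)\le f^\alpha(0_L)$ whenever $\beta\le\alpha$. The key sub-claim, which I would prove simultaneously, is that $f^\alpha(0_L)\le f^{\alpha+1}(0_L)$ for every ordinal $\alpha$: the base case uses that $0_L$ is the bottom element, so $0_L\le f(0_L)$; the successor case follows by applying $f$ to the inductive hypothesis and using monotonicity; and at a limit $\lambda$ one checks that $f(f^\lambda(0_L))$ is an upper bound of $\{f^\beta(0_L):\beta<\lambda\}$ (since for $\beta<\lambda$ we have $f^\beta(0_L)\le f^{\beta+1}(0_L)=f(f^\beta(0_L))\le f(f^\lambda(0_L))$), whence $f^\lambda(0_L)=\bigvee_{\beta<\lambda}f^\beta(0_L)\le f^{\lambda+1}(0_L)$. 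Monotonicity of the whole sequence then follows by a second, straightforward induction.

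Second I would invoke a cardinality argument to force stabilization: since $L$ is a set, fix a cardinal $\kappa$ with $|\kappa|>|L|$. The map $\alpha\mapsto f^\alpha(0_L)$ cannot be injective on $\kappa$, so there are $\beta<\gamma<\kappa$ with $f^\beta(0_L)=f^\gamma(0_L)$; by the monotonicity from step one this squeezes $f^\beta(0_L)=f^{\beta+1}(0_L)$. Writing $p:=f^\beta(0_L)$ we obtain $f(p)=p$, so $p$ is a fixed point, and one may take $\alpha$ to be the least ordinal with $f^{\alpha+1}(0_L)=f^\alpha(0_L)$, which exists by the least-element property of the ordinals.

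Third I would verify minimality. Let $q$ be an arbitrary fixed point of $f$. A transfinite induction gives $f^\alpha(0_L)\le q$ for all $\alpha$: the base case is $0_L\le q$; the successor case is $f^{\alpha+1}(0_L)=f(f^\alpha(0_L))\le f(q)=q$ by monotonicity; and the limit case uses that $q$ is an upper bound of $\{f^\beta(0_L):\beta<\lambda\}$. In particular $p\le q$, so $p=f^\alpha(0_L)$ is the least fixed point of $f$, as required. I do not expect any serious obstacle in this argument; the only points needing care are handling the limit stages correctly in the induction of step one and the (implicit) assumption that $L$ is a genuine set, which is what makes the cardinality collapse in step two legitimate.
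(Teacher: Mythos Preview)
Your argument is correct and is the standard proof of this classical approximation result. Note, however, that the paper does not actually give a proof of this theorem: it is stated with a citation to \cite[Corollary 3.7]{venema2008lectures} and used as a black box, so there is no ``paper's own proof'' to compare against. Your three-step outline (monotonicity of the iteration via transfinite induction, stabilization by a cardinality argument, and minimality via induction against an arbitrary fixed point) is exactly the textbook route and would serve perfectly well as a self-contained proof here.
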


Since $\Delta_{\F,S}$ is monotonic, it follows that $\Delta_{\F,S}$ has a least fixed point and it equals $\Delta^\alpha_{\F,S}$ for some $\alpha$ of cardinality no more than the cardinality of $\F$. We call this least fixed point $\Delta_{\F,S}$.

Note that every element of $\Delta_{\F,S}$ is attacked by some element of $S$.


\begin{lemma}\label{lem:connect C's to Deltas}
    For each $a$ and $\alpha$, $C_S^\alpha(a)=\{b \mid a\Rightarrow^{\A_\F\smallsetminus \Delta^\alpha_{\F,S}}_\F b \wedge b\Rightarrow^{\A_\F\smallsetminus \Delta^\alpha_{\F,S}}_\F a\}$
\end{lemma}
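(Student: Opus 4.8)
The statement to prove is Lemma~\ref{lem:connect C's to Deltas}: for every argument $a$ and every ordinal $\alpha$,
\[
C_S^\alpha(a)=\{b \mid a\Rightarrow^{\A_\F\smallsetminus \Delta^\alpha_{\F,S}}_\F b \wedge b\Rightarrow^{\A_\F\smallsetminus \Delta^\alpha_{\F,S}}_\F a\}.
\]
The plan is a straightforward transfinite induction on $\alpha$, with the content of each case coming from matching the two parallel recursions: the $C_S^\alpha$-recursion peels off $D_S(\cdot)$ one level at a time, while the $\Delta^\alpha_{\F,S}$-recursion accumulates attacked-and-unreachable arguments. I would first record an auxiliary monotonicity observation: the sets $\Delta^\alpha_{\F,S}$ are increasing in $\alpha$ (immediate from the definition, since $\Delta_{\F,S}$ is monotone and $\Delta^0_{\F,S}=\emptyset$), and correspondingly the sets $C_S^\alpha(a)$ are decreasing in $\alpha$; both facts will be used freely to intersect/union over earlier stages.

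For the base case $\alpha=0$: $\Delta^0_{\F,S}=\emptyset$, so the right-hand side is just $\{b \mid a\Rightarrow^{\A_\F}_\F b \wedge b\Rightarrow^{\A_\F}_\F a\}=\SCC(a)=C_S^0(a)$ by definition. For the successor case, assume $C_S^\alpha(a)$ equals the set of $b$ mutually reachable from $a$ in $\F\restriction_{\A_\F\smallsetminus\Delta^\alpha_{\F,S}}$, for all $a$ simultaneously. The key claim to establish is
\[
C_S^\alpha(a)\smallsetminus D_S\bigl(C_S^\alpha(a)\bigr)=C_S^\alpha(a)\smallsetminus \Delta^{\alpha+1}_{\F,S},
\]
after which taking the strongly connected component of $a$ inside this set, and checking that the induced subgraph agrees with $\F\restriction_{\A_\F\smallsetminus\Delta^{\alpha+1}_{\F,S}}$ on the relevant vertices, gives $C_S^{\alpha+1}(a)$ on the left and the claimed reachability set on the right. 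To prove the displayed claim I would argue both inclusions by unwinding $\Delta_{\F,S}(\Delta^\alpha_{\F,S})=\{c \mid \exists d\in S (d\att c \wedge c\not\Rightarrow_\F^{\A_\F\smallsetminus\Delta^\alpha_{\F,S}} d)\}$: an element $b\in C_S^\alpha(a)$ lies in $D_S(C_S^\alpha(a))$ iff it is attacked by some $d\in S$ that lies outside $C_S^\alpha(a)$, and — using the inductive description of $C_S^\alpha(a)$ as an SCC in $\F\restriction_{\A_\F\smallsetminus\Delta^\alpha_{\F,S}}$ — such a $d$ outside the component is precisely one from which $b$ cannot reach back (within $\A_\F\smallsetminus\Delta^\alpha_{\F,S}$), matching the definition of $\Delta^{\alpha+1}_{\F,S}$; one also has to note $b\notin\Delta^\alpha_{\F,S}$ already, so membership in $\Delta^{\alpha+1}_{\F,S}$ is governed entirely by this one new level.

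For the limit case $\lambda$: by definition $C_S^\lambda(a)$ is the SCC of $a$ in $\bigcap_{\alpha<\lambda}C_S^\alpha(a)$, and $\Delta^\lambda_{\F,S}=\bigcup_{\alpha<\lambda}\Delta^\alpha_{\F,S}$, so $\A_\F\smallsetminus\Delta^\lambda_{\F,S}=\bigcap_{\alpha<\lambda}(\A_\F\smallsetminus\Delta^\alpha_{\F,S})$. Using the inductive hypothesis at each $\alpha<\lambda$, I would show that $b$ is mutually reachable from $a$ in $\F\restriction_{\A_\F\smallsetminus\Delta^\lambda_{\F,S}}$ iff $b$ lies in $C_S^\alpha(a)$ for all $\alpha<\lambda$ and the connecting paths can be taken to avoid every $\Delta^\alpha_{\F,S}$ — here the point is that a single finite path witnessing reachability in the intersection framework uses only finitely many vertices, each of which is outside $\Delta^\lambda_{\F,S}$, hence (by monotonicity, picking the sup of the finitely many stages at which these vertices enter) outside some common $\Delta^\alpha_{\F,S}$; conversely reachability at every stage $\alpha$ gives paths in each $\F\restriction_{\A_\F\smallsetminus\Delta^\alpha_{\F,S}}$, but one must take a little care since these paths could differ at different $\alpha$, so it is cleanest to observe that $b\in\bigcap_{\alpha<\lambda}C_S^\alpha(a)$ already places $b$ in the vertex set of the intersection framework, and then the SCC structure transfers. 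I expect the limit step to be the main obstacle: reconciling "reachable in the intersection" with "reachable at every stage" is exactly the kind of place where the order of quantifiers over paths and ordinals matters, and the finiteness-of-paths argument above is what makes it go through cleanly. The successor case is bookkeeping once the displayed equality is in hand; the base case is immediate.
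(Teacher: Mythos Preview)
Your proposal is correct and follows essentially the same transfinite induction as the paper's proof: the base case is identical, the successor case hinges on the same identification $C_S^\alpha(a)\smallsetminus D_S(C_S^\alpha(a))=C_S^\alpha(a)\smallsetminus\Delta^{\alpha+1}_{\F,S}$, and the limit case uses the same containment $\bigcap_{\alpha<\lambda}C_S^\alpha(a)\subseteq\A_\F\smallsetminus\Delta^\lambda_{\F,S}$. One small remark: your finiteness-of-paths argument in the limit step is unnecessary, since any vertex outside $\Delta^\lambda_{\F,S}=\bigcup_{\alpha<\lambda}\Delta^\alpha_{\F,S}$ is automatically outside \emph{every} $\Delta^\alpha_{\F,S}$ for $\alpha<\lambda$; there is no need to pick a common stage.
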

\begin{proof}
    For $\alpha=0$, every $C_S^0(a)$ is the strongly connected component of $a$ is exactly $\{b \mid a\Rightarrow^{\A_\F\smallsetminus \Delta^0_{\F,S}}_\F b \wedge b\Rightarrow^{\A_\F\smallsetminus \Delta^0_{\F,S}}_\F a\}$ since $\Delta^0_{\F,S}=\emptyset$.

    Recall $C_S^{\alpha+1}(a)$ is the strongly connected component of $a$ in $C^\alpha_S(a)\smallsetminus \{x \mid \exists y\in S\smallsetminus C_S^\alpha(a)(y\att x)\}$. This is exactly saying that $b\in C^{\alpha+1}_S(a)$ if and only if there is a path from $a$ to $b$ and a path from $b$ to $a$ each avoiding the elements attacked from outside $C_S^\alpha(a)$. By inductive hypothesis, $C_S^\alpha(a)$ is $\{b \mid a\Rightarrow^{\A_\F\smallsetminus \Delta^\alpha_{\F,S}}_\F b \wedge b\Rightarrow^{\A_\F\smallsetminus \Delta^\alpha_{\F,S}}_\F a\}$. Thus if $d\in S$ attacks $c$ and $d$ is outside $C_S^\alpha(a)$, then $c\not\Rightarrow^{\A_\F\smallsetminus \Delta^\alpha_{\F,S}} d$, i.e., $c\in \Delta^{\alpha+1}_{\F,S}$. Thus $C_S^{\alpha+1}(a)$ if and only if $a\Rightarrow^{\A_\F\smallsetminus \Delta^{\alpha+1}_{\F,S}}_\F b \wedge b\Rightarrow^{\A_\F\smallsetminus \Delta^{\alpha+1}_{\F,S}}_\F a$.
    


    Finally, $C^\lambda_S(a)$ is the connected component of $a$ in $\bigcap_{\alpha<\lambda}C_S^\alpha(a)$. By inductive hypothesis,  $\Delta_{\F,S}^\alpha(a)\cap C_S^\alpha(a)=\emptyset$. Thus for $b$ to be in $C^\lambda_S(a)$, there must be a path from $a$ to $b$ and back which avoids $\bigcup_{\alpha<\lambda} \Delta^\alpha_{\F,S}$, i.e, avoids $\Delta^{\lambda}_{\F,S}$. Thus $C_S^\lambda(a)\subseteq \{b \mid a\Rightarrow^{\A_\F\smallsetminus \Delta^\lambda_{\F,S}}_\F b \wedge b\Rightarrow^{\A_\F\smallsetminus \Delta^\lambda_{\F,S}}_\F a\}$.
    On the other hand, if a path from $b$ to $a$ is entirely in $\A_\F\smallsetminus \Delta^\lambda_{\F,S}$, then it is also contained in $\A_\F\smallsetminus \Delta^\alpha_{\F,S}$ for each $\alpha$. Thus it is entirely in $\bigcap_{\alpha<\lambda}C_S^\alpha(a)$. Thus $C_S^\lambda(a)\supseteq \{b \mid a\Rightarrow^{\A_\F\smallsetminus \Delta^\lambda_{\F,S}}_\F b \wedge b\Rightarrow^{\A_\F\smallsetminus \Delta^\lambda_{\F,S}}_\F a\}$.
\end{proof}

\begin{lemma}
       If $\beta > \alpha_S(a)$ for every $a\in \F$, then $\Delta_{\F,S}=\Delta_{\F,S}^\beta$. 
\end{lemma}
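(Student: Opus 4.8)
The goal is to show $\Delta_{\F,S} = \Delta_{\F,S}^\beta$ whenever $\beta$ strictly exceeds every component ordinal $\alpha_S(a)$.

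\textbf{Plan.} The natural strategy is to combine Lemma~\ref{lem:connect C's to Deltas}, which links the sets $C_S^\alpha(a)$ to the approximants $\Delta_{\F,S}^\alpha$, with the definition of the component ordinal $\alpha_S(a)$ as the least $\alpha$ with either $a\notin C_S^\alpha(a)$ or $C_S^{\alpha+1}(a)=C_S^\alpha(a)$. Since $\Delta_{\F,S}$ is the least fixed point of the monotone operator $\Delta_{\F,S}$, it equals $\Delta_{\F,S}^\gamma$ for some $\gamma$ (Theorem~\ref{thm:lattice-theory approximation}), and by monotonicity of the $\Delta^\alpha$-sequence it suffices to show that the sequence has stabilized by stage $\beta$; equivalently, $\Delta_{\F,S}^{\beta+1}=\Delta_{\F,S}^\beta$. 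Because $\Delta_{\F,S}^{\beta+1}=\Delta_{\F,S}(\Delta_{\F,S}^\beta)$, and the operator only ever adds arguments attacked from $S$, I would argue directly that no new argument can enter at stage $\beta+1$: any $c\in \Delta_{\F,S}^{\beta+1}\smallsetminus \Delta_{\F,S}^\beta$ would witness, via Lemma~\ref{lem:connect C's to Deltas}, that some $C_S^\beta(a)$ properly contains $C_S^{\beta+1}(a)$ or loses its center $a$, contradicting that $\beta > \alpha_S(a)$ and that $C_S^{\alpha_S(a)}(a)$ is already fixed (or has dropped $a$) from stage $\alpha_S(a)$ onward.

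\textbf{Key steps, in order.} First I would record that for each $a$, once $\alpha \geq \alpha_S(a)$ the behavior is frozen: either $a \notin C_S^{\alpha_S(a)}(a)$ (and then $a\notin C_S^\alpha(a)$ for all larger $\alpha$, by the nesting $C_S^{\alpha+1}(a)\subseteq C_S^\alpha(a)$ and the limit-stage intersection), or $C_S^{\alpha}(a) = C_S^{\alpha_S(a)}(a)$ for all $\alpha\geq\alpha_S(a)$ — this last point needs a small induction through successors and limits using that $D_S$ applied to a fixed set yields a fixed set. Second, I would translate this into a statement about $\Delta^\alpha_{\F,S}$: via Lemma~\ref{lem:connect C's to Deltas}, an argument $c$ lies in $\Delta_{\F,S}^{\alpha+1}$ exactly when some $d\in S\smallsetminus C_S^\alpha(a)$ attacks $c$ with $c$ (together with $d$) no longer reachable back and forth from $a$ avoiding $\Delta^\alpha$ — more cleanly, $\Delta_{\F,S}^{\alpha+1} = \{c : \exists b\in S (b\att c \wedge c\not\Rightarrow_\F^{\A_\F\smallsetminus \Delta_{\F,S}^\alpha} b)\}$ directly from the definition of $\Delta_{\F,S}(\cdot)$. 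Third, I would show $\Delta_{\F,S}^{\beta} = \Delta_{\F,S}^{\beta+1}$ by taking $c \in \Delta_{\F,S}^{\beta+1}$, fixing the witnessing $b \in S$ with $b\att c$ and $c\not\Rightarrow^{\A_\F\smallsetminus\Delta^\beta_{\F,S}}_\F b$, and arguing $c$ was already in $\Delta^\beta_{\F,S}$: otherwise $c$ and $b$ both lie outside $\Delta^\beta_{\F,S}$; applying Lemma~\ref{lem:connect C's to Deltas} at stage $\beta$ and the fact that $\beta > \alpha_S(c)$, the component $C_S^{\beta}(c)$ is the stabilized component of $c$, so the attacker $b$ is either inside it (and then $c\Rightarrow^{\A_\F\smallsetminus\Delta^\beta}_\F b$, a contradiction) or outside it, in which case $c$ would have been removed into $\Delta_{\F,S}^{\alpha_S(c)+1}\subseteq \Delta^\beta_{\F,S}$ already — again a contradiction. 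Fourth, conclude: since the $\Delta^\alpha$-sequence is $\subseteq$-increasing and stabilizes at $\beta$, all later approximants agree, so the least fixed point $\Delta_{\F,S}$ equals $\Delta_{\F,S}^\beta$.

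\textbf{Main obstacle.} The delicate point is the bookkeeping at the \emph{limit} stages and the precise interface with Lemma~\ref{lem:connect C's to Deltas}: I must be careful that "$C_S^\alpha(a)$ has stabilized for $\alpha \geq \alpha_S(a)$" is genuinely preserved through limit ordinals $\lambda$ (where $C_S^\lambda(a)$ is defined by an intersection and then taking a component), and correspondingly that $\Delta_{\F,S}^\lambda = \bigcup_{\alpha<\lambda}\Delta^\alpha_{\F,S}$ does not introduce arguments that were absent from every earlier stage. The cleanest way around this is to prove, as an auxiliary claim by transfinite induction, that for all $\alpha \geq \sup_a \alpha_S(a) =: \delta$ we have $\Delta_{\F,S}^\alpha = \Delta_{\F,S}^\delta$, using Lemma~\ref{lem:connect C's to Deltas} to pin down each $C_S^\alpha(a)$ at its stabilized value; the statement of the lemma then follows since any $\beta > \alpha_S(a)$ for all $a$ satisfies $\beta \geq \delta$. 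I do not expect any genuinely hard inequality here — it is entirely a matter of organizing the transfinite induction and invoking the already-established Lemma~\ref{lem:connect C's to Deltas} at the right stages.
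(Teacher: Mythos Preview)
Your proposal is correct and follows essentially the same route as the paper: both use Lemma~\ref{lem:connect C's to Deltas} to argue that $\Delta_{\F,S}^{\beta+1}=\Delta_{\F,S}^\beta$, whence the increasing sequence has stabilized at its least fixed point. The paper's version is crisper---it observes directly that $a\in \Delta_{\F,S}^{\alpha+1}\smallsetminus\Delta_{\F,S}^\alpha$ forces $a\in C_S^\alpha(a)$ and $a\notin C_S^{\alpha+1}(a)$, hence $\alpha_S(a)=\alpha+1$, so no element can first enter at stage $\beta+1$---which lets you bypass the case analysis on the attacker $b$ and the separate bookkeeping at limit stages.
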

\begin{proof}
    It follows from Lemma \ref{lem:connect C's to Deltas} that whenever $a\in \Delta^{\alpha+1}_{\F,S}\smallsetminus \Delta^\alpha_{\F,S}$, then $a\in C_S^\alpha(a)$ and $a\notin C_S^{\alpha+1}(a)$. Thus $\alpha+1=\alpha_S(a)$, showing that $\beta >\alpha+1$. In particular, we cannot have any $a\in \Delta^{\beta+1}_{\F,S}\smallsetminus\Delta^\beta_{\F,S}$. That is, $\Delta^{\beta+1}_{\F,S}=\Delta^\beta_{\F,S}$ 
\end{proof}

Recall \cite[Definition 3.1]{Gaggl-Woltran-cf2} that for an AF $\F$, $[[\F]]$ is the separation of $\F$, defined by removing attacks between elements of different strongly connected components in $\F$. 

\begin{theorem}\label{thm:icft-theequivalence}
    An extension $S$ is in \icft$(\F)$ if and only if $S$ is conflict-free and $S\in \na([[\F\smallsetminus \Delta_{\F,S}]])$.
\end{theorem}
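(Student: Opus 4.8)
The plan is to show that, for conflict-free $S$, the stabilized ordinal-stage components $C_S^{\alpha_S(a)}(a)$ are precisely the strongly connected components of the separated reduct $[[\F\smallsetminus\Delta_{\F,S}]]$, with the arguments in $\Delta_{\F,S}$ dropped out, and then to observe that being a naive extension of a separated AF decomposes component by component. Here $\F\smallsetminus\Delta_{\F,S}$ abbreviates $\F\restriction_{\A_\F\smallsetminus\Delta_{\F,S}}$. Throughout I would lean on Lemma~\ref{lem:connect C's to Deltas}, which ties the two transfinite sequences together, and on the (unlabelled) stabilization lemma following it, which guarantees $\Delta^\beta_{\F,S}=\Delta_{\F,S}$ as soon as $\beta>\alpha_S(a)$ for every $a\in\F$.

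First I would record the elementary bookkeeping about $(C_S^\alpha(a))_\alpha$: it is $\subseteq$-decreasing; each $C_S^\alpha(a)$ is strongly connected in $\F\restriction_{C_S^\alpha(a)}$ (paths between members of an SCC stay inside it); once $a\notin C_S^\alpha(a)$ this persists; and once $C_S^{\alpha+1}(a)=C_S^\alpha(a)$ — equivalently $D_S(C_S^\alpha(a))=\emptyset$ — the sequence is constant thereafter (permanence of stabilization, easily checked using the previous point, handling limit stages by intersection). Also, since every element of $\Delta_{\F,S}$ is attacked by $S$, conflict-freeness of $S$ forces $S\cap\Delta_{\F,S}=\emptyset$, so $S$ lies inside the vertex set of $[[\F\smallsetminus\Delta_{\F,S}]]$ and the right-hand side of the statement is meaningful.

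Next come the two key claims. Claim (A): if $a\in\Delta_{\F,S}$ then $a\notin C_S^{\alpha_S(a)}(a)$, so the $\icft$-clause at $a$ holds vacuously. Such an $a$ enters the $\Delta$-sequence first at a successor stage $\gamma+1$; by Lemma~\ref{lem:connect C's to Deltas} (using $a\notin\Delta^\gamma_{\F,S}$ but $a\in\Delta^{\gamma+1}_{\F,S}$) we get $a\in C_S^\gamma(a)$ yet $a\notin C_S^{\gamma+1}(a)$, which together with the persistence facts forces $\alpha_S(a)=\gamma+1$ and hence $a\notin C_S^{\alpha_S(a)}(a)$. Claim (B): if $a\notin\Delta_{\F,S}$ then $C_S^{\alpha_S(a)}(a)$ equals the strongly connected component of $a$ in $\F\restriction_{\A_\F\smallsetminus\Delta_{\F,S}}$, which equals the SCC of $a$ in $[[\F\smallsetminus\Delta_{\F,S}]]$ since separation deletes only edges between distinct SCCs. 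To see this, pick $\beta$ with $\beta>\alpha_S(b)$ for all $b\in\F$, so $\Delta^\beta_{\F,S}=\Delta_{\F,S}$; Lemma~\ref{lem:connect C's to Deltas} identifies $C_S^\beta(a)$ with that SCC, and since $a\in\A_\F\smallsetminus\Delta_{\F,S}$ it contains $a$; as the sequence is decreasing, $a\in C_S^\delta(a)$ for every $\delta\le\beta$, so $\alpha_S(a)$ is a stabilization stage, not a drop-out stage, and permanence of stabilization yields $C_S^{\alpha_S(a)}(a)=C_S^\beta(a)$.

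Finally I would combine these with the routine remark that, for any AF $\G$, $[[\G]]$ is the disjoint graph-union of the frameworks $\G\restriction_C$ over $C\in\SCC(\G)$, whence $S\in\na([[\G]])$ iff $S\cap C\in\na(\G\restriction_C)$ for every $C\in\SCC(\G)$; and $\G\restriction_C=\F\restriction_C$ when $\G=\F\smallsetminus\Delta_{\F,S}$ and $C\in\SCC(\G)$. Putting it together: by Claims (A) and (B), $S\in\icft(\F)$ is equivalent to ``$S\in\cf(\F)$ and, for each $C\in\SCC(\F\smallsetminus\Delta_{\F,S})$, $S\cap C\in\na(\F\restriction_C)$'' (the clauses at members of $\Delta_{\F,S}$ being vacuous, and each SCC of the reduct being $C_S^{\alpha_S(a)}(a)$ for its members), which by the disjoint-union remark is exactly ``$S$ is conflict-free and $S\in\na([[\F\smallsetminus\Delta_{\F,S}]])$''. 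I expect Claim (B) to be the main obstacle: it requires carefully matching the stopping condition defining $\alpha_S(a)$ with the stabilization of the $\Delta$-sequence, and being careful at limit stages and about permanence of stabilization; the remaining steps are either quoted from the two preceding lemmas or a direct unpacking of the definitions of separation and of naive extensions.
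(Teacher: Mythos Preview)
Your proposal is correct and follows essentially the same route as the paper: pick $\beta>\alpha_S(a)$ for all $a$, use Lemma~\ref{lem:connect C's to Deltas} together with the stabilization lemma to identify the surviving $C_S^{\alpha_S(a)}(a)$ with the SCCs of $\F\smallsetminus\Delta_{\F,S}$, and then observe that naivety in the separation $[[\cdot]]$ decomposes SCC-by-SCC. Your Claims~(A) and~(B) and the permanence-of-stabilization bookkeeping make explicit exactly what the paper's terse three-sentence proof leaves implicit when it replaces each $\alpha_S(a)$ by the uniform $\beta$.
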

\begin{proof}
    Let $\beta$ be large enough that $\beta > \alpha_S(a)$ for each $a\in A_\F$. Then $S\in \icft$ if it is conflict-free and, for each non-empty $C_S^\beta(a)$, $S\cap C_S^\beta(a)$ is a naive extension in $C_S^\beta(a)$. By Lemma \ref{lem:connect C's to Deltas}, the sets $C_S^\beta(a)$ are exactly the connected components in $\F\smallsetminus \Delta^\beta_{\F,S}=\F\smallsetminus \Delta_{\F,S}$. Being a naive extension in each of these connected components is the same as being a naive extension in $[[\F\smallsetminus \Delta_{\F,S}]]$.
\end{proof}

\begin{theorem}\label{thm:istgt-theequivalence}
    An extension $S$ is in \istgt$(\F)$ if and only if $S$ is conflict-free and $S\in \stg([[\F\smallsetminus \Delta_{\F,S}]])$.
\end{theorem}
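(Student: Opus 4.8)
The plan is to mimic the proof of Theorem~\ref{thm:icft-theequivalence} almost verbatim, replacing the naive semantics by the stage semantics at the very last step. The whole argument is really an argument about which sub-framework the component-condition is being evaluated over, and that sub-framework is the same in both cases.

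\medskip

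First I would fix $S\in\cf(\F)$ and choose an ordinal $\beta$ large enough that $\beta>\alpha_S(a)$ for every $a\in A_\F$; such a $\beta$ exists because $A_\F$ is a set, so the ordinals $\alpha_S(a)$ form a set of ordinals and have a supremum. By Definition~\ref{def:istgt} (together with the observation that once $\beta>\alpha_S(a)$ we have $C_S^\beta(a)=C_S^{\alpha_S(a)}(a)$), $S\in\istgt(\F)$ iff $S$ is conflict-free and, for every $a\in A_\F$ with $a\in C_S^\beta(a)$, the set $S\cap C_S^\beta(a)$ is a stage extension of $\F\restriction_{C_S^\beta(a)}$. Next I would invoke Lemma~\ref{lem:connect C's to Deltas} exactly as in the proof of Theorem~\ref{thm:icft-theequivalence}: the nonempty sets among $\{C_S^\beta(a):a\in A_\F\}$ are precisely the strongly connected components of $\F\smallsetminus\Delta^\beta_{\F,S}$, and by the lemma preceding Theorem~\ref{thm:icft-theequivalence} we have $\Delta^\beta_{\F,S}=\Delta_{\F,S}$ since $\beta$ exceeds all the component ordinals. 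Hence the family of frameworks over which we are checking the stage condition is exactly the family of connected components of $\F\smallsetminus\Delta_{\F,S}$.

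\medskip

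The last step is the only place where stage behaves slightly differently from naive: I need that $S$ restricted to each connected component of $\F\smallsetminus\Delta_{\F,S}$ being a stage extension of that component is the same as $S$ being a stage extension of the separated framework $[[\F\smallsetminus\Delta_{\F,S}]]$. This is a general fact about stage semantics and disjoint unions (or equivalently about the separation operator $[[\cdot]]$): the stage extensions of a disjoint union of AFs are exactly the unions of stage extensions of the components, because the range $S^\oplus$ and conflict-freeness both decompose componentwise, so maximality of the range decomposes componentwise as well. I would either cite this directly from Dvo\v{r}\'ak--Gaggl \cite{Dvorak-Gaggl-stg2} or give the one-line verification. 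Combining this with the previous paragraph yields: $S\in\istgt(\F)$ iff $S$ is conflict-free and $S\in\stg([[\F\smallsetminus\Delta_{\F,S}]])$, which is the claim.

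\medskip

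I expect no real obstacle here — the work was already done in Lemma~\ref{lem:connect C's to Deltas} and the component-ordinal lemma, and the only genuinely stage-specific ingredient is the componentwise decomposition of stage extensions under $[[\cdot]]$, which is routine. If anything needs care it is making sure the $a\notin C_S^{\alpha_S(a)}(a)$ clause in Definition~\ref{def:istgt} lines up with "$a$ lies in $\Delta_{\F,S}$, hence is deleted from $\F\smallsetminus\Delta_{\F,S}$", but this is immediate from Lemma~\ref{lem:connect C's to Deltas}: $a\notin C_S^\beta(a)$ exactly when $a\in\Delta^\beta_{\F,S}=\Delta_{\F,S}$.
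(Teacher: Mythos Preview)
Your proposal is correct and follows essentially the same approach as the paper's own proof: pick $\beta$ above all component ordinals, use Lemma~\ref{lem:connect C's to Deltas} to identify the sets $C_S^\beta(a)$ with the strongly connected components of $\F\smallsetminus\Delta_{\F,S}$, and then observe that being stage on each component is equivalent to being stage on the separated framework. Your write-up is in fact more detailed than the paper's (you justify the existence of $\beta$, the stabilization $C_S^\beta(a)=C_S^{\alpha_S(a)}(a)$, the componentwise decomposition of stage, and the correspondence between the clause $a\notin C_S^{\alpha_S(a)}(a)$ and $a\in\Delta_{\F,S}$), but the argument is the same.
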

\begin{proof}
    Let $\beta$ be large enough that $\beta > \alpha_S(a)$ for each $a\in A_\F$. Then $S\in \istgt$ if it is conflict-free and, for each non-empty $C_S^\beta(a)$, $S\cap C_S^\beta(a)$ is a stage extension in $C_S^\beta(a)$. By Lemma \ref{lem:connect C's to Deltas}, the sets $C_S^\beta(a)$ are exactly the connected components in $\F\smallsetminus \Delta^\beta_{\F,S}=\F\smallsetminus \Delta_{\F,S}$. Being a stage extension in each of these connected components is the same as being a stage extension in $[[\F\smallsetminus \Delta_{\F,S}]]$.
\end{proof}

\section{Details of Section \ref{sec:defining semantics}}\label{sec:appx for defining semantics}

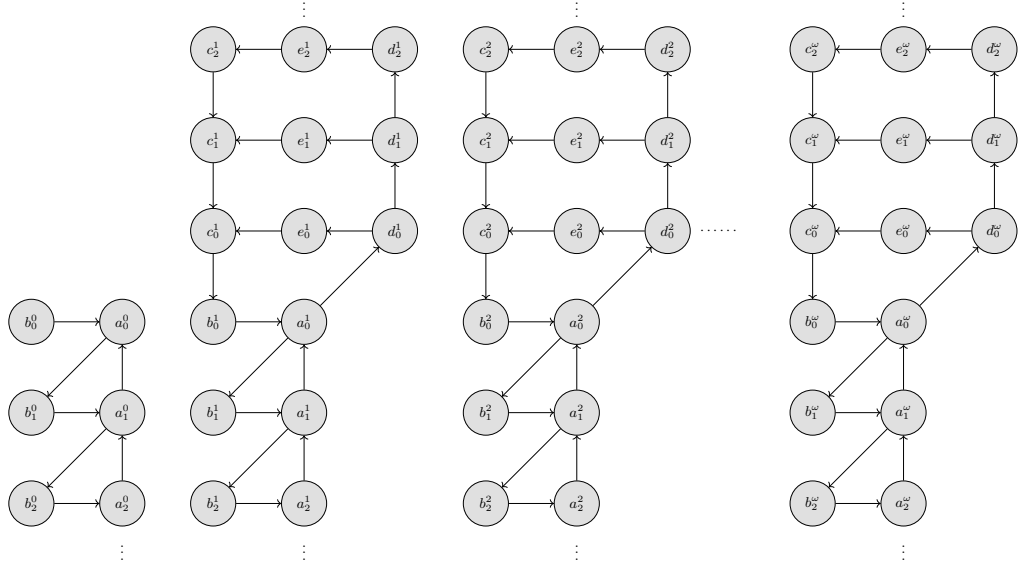
\begin{figure}
\scalebox{0.6}{
    \begin{tikzpicture}[->=Stealth, state/.style={circle, draw, fill=mygray, minimum size=1cm}]
    
    \node[state] (b00) {$b^0_0$};
    \node[state] (b01) [below=of b00] {$b^0_1$};
    \node[state] (b02) [below=of b01] {$b^0_2$};

    \node[state] (a00) [right=of b00] {$a^0_0$};
    \node[state] (a01) [below=of a00] {$a^0_1$};
    \node[state] (a02) [below=of a01] {$a^0_2$};

    \node[state] (b10) [right=of a00] {$b^1_0$};
    \node[state] (b11) [below=of b10] {$b^1_1$};
    \node[state] (b12) [below=of b11] {$b^1_2$};

    \node[state] (c10) [above=of b10] {$c^1_0$};
    \node[state] (c11) [above=of c10] {$c^1_1$};
    \node[state] (c12) [above=of c11] {$c^1_2$};

    \node[state] (a10) [right=of b10] {$a^1_0$};
    \node[state] (a11) [below=of a10] {$a^1_1$};
    \node[state] (a12) [below=of a11] {$a^1_2$};

    \node[state] (e10) [right=of c10] {$e^1_0$};
    \node[state] (e11) [above=of e10] {$e^1_1$};
    \node[state] (e12) [above=of e11] {$e^1_2$};

    \node[state] (d10) [right=of e10] {$d^1_0$};
    \node[state] (d11) [above=of d10] {$d^1_1$};
    \node[state] (d12) [above=of d11] {$d^1_2$};

    \node[state] (c20) [right=of d10] {$c^2_0$};
    \node[state] (c21) [above=of c20] {$c^2_1$};
    \node[state] (c22) [above=of c21] {$c^2_2$};

    \node[state] (e20) [right=of c20] {$e^2_0$};
    \node[state] (e21) [above=of e20] {$e^2_1$};
    \node[state] (e22) [above=of e21] {$e^2_2$};

    \node[state] (d20) [right=of e20] {$d^2_0$};
    \node[state] (d21) [above=of d20] {$d^2_1$};
    \node[state] (d22) [above=of d21] {$d^2_2$};
    \node[draw=none] (ellip)[right=1mm of d20] {$\cdots\cdots$};

    \node[state] (b20) [below=of c20] {$b^2_0$};
    \node[state] (b21) [below=of b20] {$b^2_1$};
    \node[state] (b22) [below=of b21] {$b^2_2$};

    \node[state] (a20) [right=of b20] {$a^2_0$};
    \node[state] (a21) [right=of b21] {$a^2_1$};
    \node[state] (a22) [right=of b22] {$a^2_2$};

    \node[state] (com0) [right=of ellip] {$c^\omega_0$};
    \node[state] (com1) [above=of com0] {$c^\omega_1$};
    \node[state] (com2) [above=of com1] {$c^\omega_2$};

    \node[state] (eom0) [right=of com0] {$e^\omega_0$};
    \node[state] (eom1) [right=of com1] {$e^\omega_1$};
    \node[state] (eom2) [right=of com2] {$e^\omega_2$};

    \node[state] (dom0) [right=of eom0] {$d^\omega_0$};
    \node[state] (dom1) [right=of eom1] {$d^\omega_1$};
    \node[state] (dom2) [right=of eom2] {$d^\omega_2$};

    \node[state] (bom0) [below=of com0] {$b^\omega_0$};
    \node[state] (bom1) [below=of bom0] {$b^\omega_1$};
    \node[state] (bom2) [below=of bom1] {$b^\omega_2$};

    \node[state] (aom0) [right=of bom0] {$a^\omega_0$};
    \node[state] (aom1) [below=of aom0] {$a^\omega_1$};
    \node[state] (aom2) [below=of aom1] {$a^\omega_2$};

    \draw[->] (b00) edge (a00);
    \draw[->] (b01) edge (a01);
    \draw[->] (b02) edge (a02);

    \draw[->] (b10) edge (a10);
    \draw[->] (b11) edge (a11);
    \draw[->] (b12) edge (a12);

    \draw[->] (b20) edge (a20);
    \draw[->] (b21) edge (a21);
    \draw[->] (b22) edge (a22);

    \draw[->] (bom0) edge (aom0);
    \draw[->] (bom1) edge (aom1);
    \draw[->] (bom2) edge (aom2);

    \draw[->] (e10) edge (c10);
    \draw[->] (e11) edge (c11);
    \draw[->] (e12) edge (c12);

    \draw[<-] (e10) edge (d10);
    \draw[<-] (e11) edge (d11);
    \draw[<-] (e12) edge (d12);

    \draw[->] (e20) edge (c20);
    \draw[->] (e21) edge (c21);
    \draw[->] (e22) edge (c22);

    \draw[<-] (e20) edge (d20);
    \draw[<-] (e21) edge (d21);
    \draw[<-] (e22) edge (d22);

    \draw[->] (eom0) edge (com0);
    \draw[->] (eom1) edge (com1);
    \draw[->] (eom2) edge (com2);

    \draw[<-] (eom0) edge (dom0);
    \draw[<-] (eom1) edge (dom1);
    \draw[<-] (eom2) edge (dom2);

    \draw[<-] (b01) edge (a00);
    \draw[<-] (b02) edge (a01);
    \draw[<-] (b11) edge (a10);
    \draw[<-] (b12) edge (a11);
    \draw[<-] (b21) edge (a20);
    \draw[<-] (b22) edge (a21);
    \draw[<-] (bom1) edge (aom0);
    \draw[<-] (bom2) edge (aom1);

    \draw[->] (a01) edge (a00);
    \draw[->] (a02) edge (a01);
    \draw[->] (a11) edge (a10);
    \draw[->] (a12) edge (a11);
    \draw[->] (a21) edge (a20);
    \draw[->] (a22) edge (a21);
    \draw[->] (aom1) edge (aom0);
    \draw[->] (aom2) edge (aom1);

       \draw[->] (aom0) edge (dom0);
      \draw[->] (dom0) edge (dom1);
      \draw[->] (dom1) edge (dom2);
        \draw[->] (com2) edge (com1);
      \draw[->] (com1) edge (com0);
      \draw[->] (com0) edge (bom0);

        \draw[->] (a10) edge (d10);
      \draw[->] (d10) edge (d11);
      \draw[->] (d11) edge (d12);
        \draw[->] (c12) edge (c11);
      \draw[->] (c11) edge (c10);
      \draw[->] (c10) edge (b10);

    \draw[->] (a20) edge (d20);
      \draw[->] (d20) edge (d21);
      \draw[->] (d21) edge (d22);
        \draw[->] (c22) edge (c21);
      \draw[->] (c21) edge (c20);
      \draw[->] (c20) edge (b20);
      \node[draw=none] (ellip2)[below=1mm of a02] {$\vdots$};
    \node[draw=none] (ellip3)[below=1mm of a12] {$\vdots$};
     \node[draw=none] (ellip4)[below=1mm of a22] {$\vdots$};
    \node[draw=none] (ellip5)[below=1mm of aom2] {$\vdots$};
    \node[draw=none] (ellip6)[above=1mm of e12] {$\vdots$};    
    \node[draw=none] (ellip7)[above=1mm of e22] {$\vdots$};    
    \node[draw=none] (ellip8)[above=1mm of eom2] {$\vdots$};
\end{tikzpicture}
}
\caption{A fragment of an AF---discussed in detail in Example \ref{example: high compordinal}.
For the sake of readability, some  attack relations have been omitted from the figure; see  Example \ref{example: high compordinal} for a full definition of the AF.
}
\label{fig: high compordinal}
\end{figure}


\begin{example}\label{example: high compordinal}
    Let $\F$ be the AF, (partially) depicted in Figure \ref{fig: high compordinal}, with 
    \[
    A_\F:= \{a^\alpha_i,b^\alpha_i,c^\beta_i,d^\beta_i, e^\beta_i : i \in \mathbb{N}, \alpha\leq \omega, 1\leq \beta\leq \omega\}
    \]
    and $R_\F$ consisting of the following attack relations: for $i \in \mathbb{N}$, $\alpha\leq \omega$, and $1\leq \beta\leq \omega$,
\begin{itemize}
    \item $b^\alpha_i \att a^\alpha_i, a^\alpha_i \att b^\alpha_{i+1},  a^\alpha_{i+1} \att a^\alpha_i$;
    \item $d^\beta_i \att e^\beta_i \att c^\beta_i, c^\beta_{i+1} \att c^\beta_i, d^\beta_{i} \att d^\beta_{i+1}, a^\beta_0 \att d^\beta_0, c^\beta_0 \att b^\beta_0$;
    \item $b^\alpha_i \att e^{\alpha+1}_i$, $e^{\alpha+1}_i\att a^\alpha_i$, $b^i_0\att e^\omega_i$, $e^\omega_i\att a^i_0$.
    \end{itemize}
    Let $S:=\{b^\alpha_i \mid i\in\mathbb{N}, \alpha\leq \omega\}\cup \{c^\alpha_{2i+1} \mid i\in\mathbb{N}, \alpha\leq \omega\}\cup \{d^\alpha_{2i} \mid i\in\mathbb{N}, 1\leq \alpha\leq \omega\}$.

    \smallskip
    
    We observe that $C_0^S(b_0^\omega)$ initially includes the entire framework except for $b_0^0$. After $\omega$  steps, $C_\omega^S(b_0^\omega)$ excludes all and only the arguments $\{b_i^0,a_i^0 \mid i\in \nat\}\cup \{c^1_i,d^1_i,e^1_i\mid i\in \nat\}$. More generally, after $\omega\cdot k$  steps, we have that 
    \[
    C_{\omega\cdot k}^S(b_0^\omega)=A_\F\smallsetminus\{b_i^j,a_i^j,c_i^l,d_i^l,e_i^l : i\in \omega, j\leq k,l\leq k+1\}.
    \]
 Finally, at stage $\omega^2$, we reach $C_{\omega^2}^S(b_0^\omega)=\{b_0^\omega\}$; hence, the \compordinal of the argument $b^\omega_0$ over $S$ is $\omega^2$.


The above construction can readily be adapted as a blueprint for building larger argumentation frameworks containing an element $x$ and a set $S$, so $\alpha_S(x)$ is an arbitrarily large countable ordinal. Essentially, we continue placing chunks as in the example so that $\alpha_S(b_0^\gamma)=\omega\cdot \gamma$. For limit ordinals $\lambda = \sup (Y)$ with $Y=\{\omega\cdot \gamma_i \mid i\in \nat\}$, we let $b_0^{\gamma_i}\att e_i^\lambda$ and $e_i^\lambda\att a_0^{\gamma_i}$. That said, some bookkeeping is necessary if one also wants to ensure that $\F$ remains finitary.

    \end{example}

\cftAndtfcftagree*
\begin{proof}
    Let $S$ to be a $\cft$ (respectively, $\stgt$) extension of $\F$. By Definitions \ref{def:cf2} and \ref{def:stg2}, the computation of 
$S$ proceeds via a recursive process that can be represented as a tree of recursive calls. Since $S$ is assumed to be a $\cft$ ($\stgt$) extension, this recursive tree  must be well-founded. Moreover, at each leaf of the tree, the process terminates with a strongly connected component $X$ such that $S\cap X$ is a naive (stage) extension of $\F\restriction X$. These terminal components are precisely the components $C_S^{\alpha_S(a)}(a)$, for $a\in A_\F$ so that $a\in C_S^{\alpha_S(a)}(a)$. This condition matches exactly the acceptance criteria for $S$ under the $\icft$ ($\istgt$) semantics; hence, $S$ is an  $\icft$ ($\stgt$) extension. 

Conversely, suppose that  $S$ is not a $\cft$ ($\stgt$) extension. Then,  the tree of recursive calls for computing $S$ must contain a leaf at which the process fails---specifically a terminal strongly connected component $X$ so that $S\cap X$ is not a naive (stage) extension in $X$. By construction, this component is equal to $C_S^{\alpha_S(a)}(a)$ for some $a$. Therefore,  $S$ fails to satisfy the acceptance condition for 
$\icft$ ($\istgt$) as well.
\end{proof}

\section{Details of Section \ref{sec:properties}}\label{sec:appx properties}

In this section, we give detailed proofs for the various properties established in Table \ref{table:properties}, or counterexamples where the properties do not hold. We repeat Table \ref{table:properties} here for convenience of reference.

\begin{table}[htbp]
\centering
\begin{tabular}{|l|l|l|l|l|l|l|l|l|}
\hline
 & na & cf2 & stg & stg2 & icft & cf1.5 & istg & stg1.5 \\ \hline
Well-Defined & Y & N & Y & N & Y & Y & Y & Y \\ \hline
I-maximality & Y & Y & Y & Y & Y & Y & Y & Y \\ \hline
Reinstatement & N & N & N & N & N & N & N & N \\ \hline
Weak Reinstatement & N & Y* & N & Y* & Y & N & Y & N \\ \hline
CF-Reinstatement & Y & Y & Y & Y & Y & Y & Y & Y \\ \hline
Directionality & N & N & N & N & N & N & N & N \\ \hline
$\preceq^E_\cap$-sk. ad. & Y & N & N & N & Y & Y & N & N \\ \hline
$\preceq^E_w$-sk. ad. & Y & N & N & N & Y & Y & N & N \\ \hline
\end{tabular}
\end{table}

\subsection*{Well-definedness} For well-definedness, we note that the non-well-definedness of the cf2 and stg2 semantics are established in \cite{BS-unrestricted}, as in Example \ref{BS-example}. The definitions are not recursive in \cfo and \stgo, so these are well-defined. Finally, the well-definedness for \icft and \istgt follows from Theorems \ref{thm:icft-theequivalence} and \ref{thm:istgt-theequivalence}. Since $\Delta$ is a monotone operator, Theorem \ref{thm:lattice-theory approximation} guarantees the existence of a least fixed-point and the conditions in Theorems \ref{thm:icft-theequivalence} and \ref{thm:istgt-theequivalence} are clearly well-defined. Alternatively, this follows since each $C_S^\alpha(a)$ must stabilize at some ordinal of size no more than the size of $A_\F$. This is because whenever $C^{\alpha+1}_S(a)\subsetneq C^\alpha_S(a)$, some element has left $C^{\alpha}_S(a)$. This can happen at most at $\vert C^0_S(a) \vert$ many ordinals. Thus $\alpha_S(a)$ is well-defined, showing that the notions of \icft and \istgt are well-defined.

\subsection*{I-maximality and CF-reinstatement}
Dvo\v r\'ak and Gaggl \cite[Proposition 3.12]{Dvorak-Gaggl-stg2} show that any semantics $\sigma$ so that $\sigma(\F)\subseteq \na(\F)$ satisfies I-maximality and CF-reinstatement. This shows that na and stg satisfy I-maximality and CF-reinstatement.

\begin{theorem}
    Let $\sigma$ be one of the semantics cf2, stg2, \icft, \cfo, \istgt, \stgo. Then $\sigma(\F)\subseteq \na(\F)$. Thus $\sigma$ satisfies I-maximality and CF-reinstatement.
\end{theorem}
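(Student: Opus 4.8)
The plan is to prove the displayed inclusion $\sigma(\F)\subseteq\na(\F)$ for each of the six semantics, and then simply quote Dvo\v r\'ak and Gaggl \cite[Proposition 3.12]{Dvorak-Gaggl-stg2}, which says that any semantics all of whose extensions are naive satisfies I-maximality and CF-reinstatement; so the entire content of the theorem is the inclusion. For that inclusion, fix $\F$ and $S\in\sigma(\F)$. By definition of every one of these semantics, $S$ is conflict-free, so it suffices to show that $S$ is $\subseteq$-maximal among conflict-free sets, i.e. that for every $a\in A_\F\smallsetminus S$ the set $S\cup\{a\}$ has a conflict. I would treat the four genuinely new semantics directly and then reduce $\cft,\stgt$ to $\icft,\istgt$.

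First the \cfo and \stgo cases. Given $a\notin S$, set $X=\SCC(a)$. Either $a\in D_S(X)$, in which case Definition \ref{def2} gives some $b\in S\smallsetminus X$ with $b\att a$, so $S\cup\{a\}$ is not conflict-free; or $a\in X\smallsetminus D_S(X)$, and then $a\notin S\cap X$ while $S\cap X$ is a naive extension of $\F\restriction_{X\smallsetminus D_S(X)}$ (for \stgo this uses the fact, recorded earlier, that every stage extension is naive). Maximality of $S\cap X$ forces $(S\cap X)\cup\{a\}$ to fail conflict-freeness inside $\F\restriction_{X\smallsetminus D_S(X)}$, hence inside $\F$; since $S\cap X$ itself is conflict-free, the conflicting pair must involve $a$, and it lies in $S\cup\{a\}$. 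Either way $S\cup\{a\}\notin\cf(\F)$.

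For \icft and \istgt the argument is parallel but indexed by the component ordinal. Given $a\notin S$, consider whether $a\in C_S^{\alpha_S(a)}(a)$. If it is, then $S\cap C_S^{\alpha_S(a)}(a)$ is a naive extension of $\F\restriction_{C_S^{\alpha_S(a)}(a)}$ and $a$ lies in that component but not in $S$, so the same maximality argument produces a conflict in $S\cup\{a\}$. If $a\notin C_S^{\alpha_S(a)}(a)$, I would first argue that $\alpha_S(a)$ is a successor $\beta+1$: it is not $0$ since $a\in C_S^0(a)=\SCC(a)$, and it cannot be a limit $\lambda$, since minimality of $\alpha_S(a)$ gives $a\in C_S^\beta(a)$ for all $\beta<\lambda$, hence $a\in\bigcap_{\beta<\lambda}C_S^\beta(a)$ and so $a\in C_S^\lambda(a)$. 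Thus $a\in C_S^\beta(a)$ but $a\notin C_S^{\beta+1}(a)$, and since membership in $C_S^{\beta+1}(a)$ for an element of $C_S^\beta(a)$ is exactly non-membership in $D_S(C_S^\beta(a))$, we get $a\in D_S(C_S^\beta(a))$, i.e. some $b\in S\smallsetminus C_S^\beta(a)$ attacks $a$; again $S\cup\{a\}$ has a conflict. Finally, for \cft and \stgt I would reprove nothing: Theorem \ref{thm:when defined cf2 and icft agree} shows that whenever membership in \cft (resp.\ \stgt) is well-defined it coincides with membership in \icft (resp.\ \istgt), so those cases reduce immediately to the \icft/\istgt cases just settled.

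The step I expect to be the main obstacle is the transfinite bookkeeping in the \icft/\istgt case: pinning down that if $a$ ever leaves its component it must do so at a successor stage and through a \emph{direct} attack from $S$, rather than through some indirect disconnection or at a limit stage. This rests on two features already built into the definitions --- $a$ always belongs to its own SCC, and $C_S^\lambda(a)$ is taken inside an intersection --- so the obstacle is really one of careful unwinding rather than of finding a new idea; everything else is a direct application of the definitions of $\na$, $D_S(\cdot)$, and conflict-freeness. I would also verify the boundary convention that ``$a\notin C_S^{\alpha+1}(a)$'' is the correct reading when $a$ is removed, which is implicit in Definition \ref{def2} and the inductive definition of $C_S^\alpha$.
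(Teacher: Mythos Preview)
Your proposal is correct and follows essentially the same strategy as the paper: show that every $a\notin S$ is in conflict with $S$ (hence $S$ is naive), then invoke \cite[Proposition 3.12]{Dvorak-Gaggl-stg2}. The only real difference is in the \icft/\istgt case: the paper appeals to the $\Delta_{\F,S}$ characterization (Theorems~\ref{thm:icft-theequivalence} and~\ref{thm:istgt-theequivalence}) and the observation that every element of $\Delta_{\F,S}$ is attacked by $S$, whereas you obtain the same conclusion by a direct transfinite case analysis showing $\alpha_S(a)$ must be a successor and $a\in D_S(C_S^{\beta}(a))$ --- the two routes are equivalent and equally short, and your reduction of \cft/\stgt to \icft/\istgt via Theorem~\ref{thm:when defined cf2 and icft agree} is exactly what the paper does implicitly.
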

\begin{proof}
    For cf2, stg2, \icft, \istgt, Theorem \ref{thm:icft-theequivalence} and Theorem \ref{thm:istgt-theequivalence}, in addition to the results of Gaggl and Woltran \cite[Theorem 3.11]{Gaggl-Woltran-cf2} and  Dvo\v r\'ak and Gaggl \cite[Proposition 3.2]{Dvorak-Gaggl-stg2}, show that any $\sigma$-extension is naive. In particular, every element of $\Delta_{\F,S}$ is attacked by an element of $S$ and every other element $a$ is part of its final strongly connected component $C_S^{\alpha_S(a)}(a)$ where $S$ is either naive or stage. Since stage extensions are all naive, $S$ is always naive in $C_S^{\alpha_S(a)}(a)$. 

    The same analysis works for \cfo and \stgo, except that we do not need to find a final strongly connected component. By definition, to be a \cfo extension, $S$ must be naive on $X\smallsetminus \{a\in X \mid \exists y\in (S\smallsetminus X) (y\att a\}$. Thus again every element not in $S$ is in conflict with $S$, showing that $S$ is naive. Since stage extensions are naive, the same argument works for \stgo.

    It follows from Dvo\v r\'ak and Gaggl \cite[Proposition 3.12]{Dvorak-Gaggl-stg2} that $\sigma$ satisfies I-maximality and CF-reinstatement.
\end{proof}

\subsection*{Reinstatement and Weak Reinstatement}

\begin{theorem}
    na, stg, cf2, \icft, \cfo, stg2, \istgt, \stgo do not satisfy reinstatement.
\end{theorem}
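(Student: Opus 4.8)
The plan is to exhibit a single argumentation framework that simultaneously witnesses the failure of reinstatement for all eight semantics, using the standard counterexample already familiar from the literature on naive-based semantics: the three-cycle. Take $\F$ to be the AF with $A_\F = \{a,b,c\}$ and attacks $a\att b\att c\att a$. Here $\F$ is a single strongly connected component, so for every semantics $\sigma$ in the list, the SCC-recursive (or prioritized) definition collapses to the base case: an extension is a $\sigma$-extension of $\F$ iff it is a naive (resp.\ stage) extension of $\F$ itself. In a three-cycle the naive extensions are exactly the singletons $\{a\}$, $\{b\}$, $\{c\}$, and each of these is also a stage extension (each has range the whole of $A_\F$). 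So for every $\sigma$ among $\na,\stg,\cft,\stgt,\icft,\cfo,\istgt,\stgo$ we have $\sigma(\F)=\{\{a\},\{b\},\{c\}\}$.

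The key observation is then that reinstatement fails at any one of these extensions. Consider $S=\{a\}$. The argument $b$ is attacked only by $a$, and $a\in S$, so $\{b\}^-=\{a\}\subseteq \{a\}^+ = S^+$; hence $S$ defends $b$. But $b\notin S$. This directly contradicts the reinstatement criterion, which demands that every argument defended by $S$ lie in $S$. The same reasoning applies verbatim to $\{b\}$ (which defends $c$) and to $\{c\}$ (which defends $a$), so the failure is robust. Since this one framework is handled identically by all eight semantics, a single paragraph of argument settles the whole theorem statement; one only needs to spell out, for each semantics, the (already established, e.g.\ via Theorems \ref{thm:icft-theequivalence} and \ref{thm:istgt-theequivalence}, or directly from Definitions \ref{def:cf2}, \ref{def:stg2}, \ref{def:icft}, \ref{def:1.5 semantics}) reduction to the base case on a single-SCC framework.

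I do not anticipate a genuine obstacle here; the only point requiring a line of care is confirming that on a framework consisting of one SCC, each of the SCC-based semantics really does reduce to naive (resp.\ stage) — for $\cft$ and $\stgt$ this is immediate from the first clause of Definitions \ref{def:cf2} and \ref{def:stg2}; for $\cfo$ and $\stgo$ note that with $|\SCC(\F)|=1$ we have $X=A_\F$ and $D_S(X)=\emptyset$, so the condition is exactly naiveté (resp.\ stage-ness) of $S$; for $\icft$ and $\istgt$ one checks that $C_S^0(a)=A_\F$ is already stable (no outside attackers), so $\alpha_S(a)=0$ and the condition is again the base one. After that, the three-cycle computation is routine. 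If one prefers, the same framework also shows the list of semantics treats $\na$ and $\stg$ uniformly, so no separate argument for those two is needed beyond citing that $\na(\F)=\stg(\F)=\{\{a\},\{b\},\{c\}\}$.

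Finally, I would close by remarking that the same example will be reused (or lightly adapted) for the subsequent weak-reinstatement discussion, where for $\na,\stg,\cfo,\stgo$ the grounded extension of the three-cycle is empty and is trivially contained in each extension — so weak reinstatement does \emph{not} fail there, and a different, necessarily infinite, example must be invoked for the $\cfo$/$\stgo$ columns marked $\mathrm{N}$; but that is the concern of the next theorem, not this one.
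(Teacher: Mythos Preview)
Your approach is exactly the paper's: the three-cycle $a\att b\att c\att a$ with the singleton extension $\{a\}$. However, you have two computational slips that need correcting.

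First, $\{a\}$ does \emph{not} defend $b$. The sole attacker of $b$ is $a$, and $a$ is not attacked by $S=\{a\}$; in your notation $\{b\}^-=\{a\}$ while $S^+=\{a\}^+=\{b\}$, and $\{a\}\not\subseteq\{b\}$. The argument that $\{a\}$ defends is $c$: the only attacker of $c$ is $b$, and $a\att b$. This is what the paper uses. Your cyclic pattern is off by one throughout: $\{a\}$ defends $c$, $\{b\}$ defends $a$, $\{c\}$ defends $b$.

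Second, the singletons do \emph{not} have range $A_\F$. One computes $\{a\}^\oplus=\{a,b\}$, $\{b\}^\oplus=\{b,c\}$, $\{c\}^\oplus=\{c,a\}$. They are stage extensions because these ranges are pairwise $\subseteq$-incomparable and the only other conflict-free set is $\emptyset$; not because they cover everything.

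With these two fixes your argument goes through and matches the paper's one-line proof verbatim. Your remarks about the single-SCC reduction for each semantics are correct and more detailed than what the paper spells out.
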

\begin{proof}
    Consider the AF of a 3-cycle: $a\att b\att c\att a$. In each of these semantics, $\{a\}$ is an accepted extension. But $\{a\}$ defends $c$.
\end{proof}

That the naive and stage semantics do not satisfy weak reinstatement is established in Gaggl \cite{GagglDissertation}.

\begin{theorem}\label{thm: icft istgt have weak reinstatement}
    \icft and \istgt satisfy weak reinstatement.
\end{theorem}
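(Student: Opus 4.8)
The goal is to show that every \icft-extension and every \istgt-extension of an AF $\F$ contains the grounded extension of $\F$. I would argue by induction along the stages of the grounded fixed-point computation. Recall the grounded extension is the least fixed point of $f_\F$, built up as $G_0 = \emptyset$, $G_{\alpha+1} = f_\F(G_\alpha)$, and $G_\lambda = \bigcup_{\alpha<\lambda} G_\alpha$ at limits; let $G$ be the stabilized value. The key claim to establish by transfinite induction on $\alpha$ is: if $S$ is an \icft-extension (resp.\ \istgt-extension), then $G_\alpha \subseteq S$, and moreover every argument attacking $G_\alpha$ is attacked by $S$ (equivalently, $G_\alpha^- \subseteq S^+$). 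The limit step is immediate from the induction hypothesis since unions are taken, so the work is all in the successor step.

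For the successor step, take $a \in G_{\alpha+1} = f_\F(G_\alpha)$, so every attacker of $a$ lies in $G_\alpha^+ \subseteq S^+$ by the induction hypothesis; in particular every attacker of $a$ is attacked by $S$. I would first argue that no attacker of $a$ can itself be in $S$: if $b \att a$ and $b \in S$, then some $c \in S$ attacks $b$, contradicting conflict-freeness of $S$. So every attacker of $a$ is outside $S$ but attacked by $S$. The crucial point is then to trace through the recursive/transfinite structure: I want to show $a$ survives into its terminal component and is forced into $S$ there. Consider the component sequence $C_S^\beta(a)$. Any attacker $b$ of $a$ with $b \notin \SCC(a)$ lies in some earlier SCC, and since $b \in S$ is impossible while $b$ is attacked by $S$, one checks (using that the attacker of $b$ lies in $S$, hence cannot be in $\SCC(a)$ either, by conflict-freeness) that $b$ witnesses $a \in D_S(\SCC(a))$ — wait, this needs care: $a \in D_S(X)$ requires an attacker of $a$ in $S \setminus X$. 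So I need that at least one attacker of $a$ lands in $S$, not merely in $S^+$. This is the subtle point and must be handled via the equivalence in Theorem~\ref{thm:icft-theequivalence} (resp.\ Theorem~\ref{thm:istgt-theequivalence}): $S \in \icft(\F)$ iff $S$ is conflict-free and $S \in \na([[\F \setminus \Delta_{\F,S}]])$, and every element of $\Delta_{\F,S}$ is attacked by an element of $S$. The cleanest route is to show by induction that $G_\alpha \cap \Delta_{\F,S} = \emptyset$ and $G_\alpha \subseteq S$ simultaneously: any attacker $b$ of $a \in G_{\alpha+1}$ has an attacker in $S$; if $b \notin S$ then either $b \in \Delta_{\F,S}$ (fine), or $b$ is in some component of $[[\F\setminus\Delta_{\F,S}]]$ — but then since $S$ is naive there and $b \notin S$, $b$ must conflict with $S$ inside its own component, so $b$'s attacker $c \in S$ lies in $\SCC$ of $b$ in $\F \setminus \Delta_{\F,S}$, hence not in $\SCC(a)$ (as $b \in G$-attacker region is not in $a$'s reachable-both-ways set, by the grounded structure — this needs the observation that $a \notin \Delta$), so $b$ is attacked from outside its separated component... and thus $b \in \Delta_{\F,S}$ after all, contradiction unless $b \in \Delta_{\F,S}$, which forces: every attacker of $a$ is in $\Delta_{\F,S}$, hence $a$ has no attacker in $[[\F\setminus\Delta_{\F,S}]]$, hence $a$ is a singleton component of $[[\F\setminus\Delta_{\F,S}]]$ with no attackers, and $S$ naive there forces $a \in S$.

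I expect the main obstacle to be exactly this bookkeeping around $\Delta_{\F,S}$: making precise that an argument attacked only by $S^+$-elements either has all its attackers absorbed into $\Delta_{\F,S}$ or leads to a conflict-freeness violation, and that once all attackers are in $\Delta_{\F,S}$ the argument becomes an attacker-free singleton in the separated framework $[[\F\setminus\Delta_{\F,S}]]$ and is therefore forced into any naive (hence any stage) extension of that framework. The \istgt case is then identical word-for-word, since it only uses that $S$ is naive (stage extensions are naive) in the terminal components and that $\Delta_{\F,S}$-elements are $S$-attacked — both facts available from Theorem~\ref{thm:istgt-theequivalence}. I would write the argument once, for a conflict-free $S$ with the separation-naive property, covering both semantics simultaneously.
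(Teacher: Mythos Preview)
Your overall strategy---pass to the $\Delta_{\F,S}$ characterization, show that every attacker of an element of the grounded extension lands in $\Delta_{\F,S}$, and conclude that grounded elements become attacker-free singletons in $[[\F\setminus\Delta_{\F,S}]]$---is sound and will work. But the argument as you have sketched it has a genuine gap.

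The induction hypothesis you settle on, ``$G_\alpha\cap\Delta_{\F,S}=\emptyset$ and $G_\alpha\subseteq S$'', is too weak to push through the successor step. To show that an attacker $b$ of $a\in G_{\alpha+1}$ lies in $\Delta_{\F,S}$, you need that $b$ cannot reach its $S$-attacker $c\in G_\alpha$ inside $\F\setminus\Delta_{\F,S}$; for that you need that every attacker of $c$ is already in $\Delta_{\F,S}$, and your hypothesis does not give you this. Your attempted workaround (``since $S$ is naive on $b$'s separated component and $b\notin S$, $b$ conflicts with $S$ there, so $b$'s attacker $c\in S$ lies in the same separated component as $b$\ldots'') is a non sequitur: naivety only guarantees a conflict with \emph{some} $d\in S$ in that component, not with the specific $c\in G_\alpha$, and the rest of that paragraph does not recover. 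The fix is simple: strengthen the induction hypothesis to include $G_\alpha^{-}\subseteq\Delta_{\F,S}$. Then for $b\att a\in G_{\alpha+1}$ with $c\in G_\alpha$ attacking $b$, any path from $b$ to $c$ ends with an attacker of $c$, which lies in $\Delta_{\F,S}$ by the strengthened hypothesis; hence $b\not\Rightarrow_\F^{A_\F\setminus\Delta_{\F,S}} c$, so $b\in\Delta_{\F,S}$ by the fixed-point equation. The rest then goes through exactly as you outline.

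For comparison, the paper takes a different and somewhat slicker route: it works directly with the component sequence $C_S^\alpha$ rather than the fixed point $\Delta_{\F,S}$, and uses the synchronized hypothesis ``$a\in G_\alpha$ implies $a\in S$ and $C_S^\alpha(a)=\{a\}$''. The point is that the ordinal indexing of the grounded construction lines up with the ordinal indexing of the component-shrinking: if every $c\in G_\beta$ already has $C_S^\beta(c)=\{c\}$, then any attacker $b$ of $a\in G_{\beta+1}$ is attacked by such a $c$ from outside $C_S^\beta(b)$, so $b$ is removed at stage $\beta+1$, and $C_S^{\beta+1}(a)=\{a\}$. Your route and the paper's are morally the same argument viewed through the two equivalent descriptions of the semantics, but the paper's version avoids any appeal to the $\Delta$-characterization and keeps the bookkeeping lighter.
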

\begin{proof}
	We use the recursive definition of the grounded extension as in Theorem \ref{thm:lattice-theory approximation}. In particular, $\G_0=\emptyset$, $\G_{\alpha+1}=f_{\F}(G_{\alpha})$, and $\G_\lambda = \bigcup_{\alpha<\lambda} \G_\alpha$ for limit ordinals $\lambda$. Then $\G=\bigcup_{\alpha} \G_\alpha=\G_\beta$ for some countable ordinal $\beta$.
	
    We show by induction that
    $a\in \G_\alpha$ implies that $a\in S$ and $\C^\alpha_S(a)=\{a\}$. This is clearly true for $G_\emptyset$, which is empty by definition. Since $G_\lambda = \bigcup_{\alpha<\lambda} \G_\alpha$ for limit ordinals $\lambda$, the induction for limit ordinals is immediate. Suppose that $a\in \G_{\beta+1}$. Then we know that every element of $\G_\beta$ is in $S$. Further, $C^\beta_S(x)=\{x\}$ for each $x\in \G_\beta$. Then any element $b$ which attacks $a$ is defended by $\G_\beta$, thus is attacked from an element not in $C^\beta_S(b)$. Thus $C^{\beta+1}_S(a)=\{a\}$ since no attacker of $a$ can be in $C^{\beta+1}_S(a)$. Since every element which attacks $a$ is attacked from above by $S$, $C^{\gamma}_S(a)=\{a\}$ for all $\gamma>\beta$ showing that $\alpha_S(a)=\beta+1$. Finally, since $S\cap C^{\alpha_S(a)}_S(a)$ is a naive or stage extension in $C^{\alpha_S(a)}_S(a)$, it follows that $a\in S$. 
\end{proof}

\begin{corollary}
    The semantics cf2 and stg2 satisfy weak reinstatement.
\end{corollary}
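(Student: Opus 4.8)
The plan is to derive the corollary directly from the two earlier results: Theorem~\ref{thm:when defined cf2 and icft agree}, which says that whenever ``$S$ is a \cft-extension'' is well-defined the \cft- and \icft-extensions coincide (and likewise for \stgt and \istgt), together with Theorem~\ref{thm: icft istgt have weak reinstatement}, which establishes weak reinstatement for \icft and \istgt. No new combinatorics is needed; the work is just chaining the implications while keeping track of what ``$S \in \cft(\F)$'' presupposes.

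First I would fix an arbitrary AF $\F$ and an arbitrary $S \in \cft(\F)$. By the very meaning of $S$ being a \cft-extension, the question of whether $S$ is a \cft-extension is well-defined for $\F$, so the hypothesis of Theorem~\ref{thm:when defined cf2 and icft agree} is met; hence $S \in \icft(\F)$. Applying Theorem~\ref{thm: icft istgt have weak reinstatement} then yields that $S$ contains the grounded extension $\G(\F)$. Since $\F$ and $S$ were arbitrary, \cft satisfies the weak reinstatement criterion. The identical argument with \stgt in place of \cft and \istgt in place of \icft gives the claim for \stgt.

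I would also add a one-sentence remark explaining the asterisk in Table~\ref{table:properties}: the weak reinstatement criterion is a universally quantified statement over all $S \in \sigma(\F)$, so it is satisfied vacuously for those $\F$ (e.g.\ the AF of Example~\ref{BS-example}, or the framework in Example~\ref{ex:failureDirectionality}) on which \cft or \stgt admits no extension at all, which is why the guarantee of containing the grounded extension does not translate into credulous acceptance of grounded arguments.

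I do not expect a genuine obstacle here; the only subtlety worth stating carefully is that ``$S$ is a \cft-extension'' already entails well-definedness of the recursion for $\F$ (so that Theorem~\ref{thm:when defined cf2 and icft agree} applies), and that nothing is being asserted about frameworks where \cft or \stgt fails to be well-defined.
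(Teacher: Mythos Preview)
Your proposal is correct and follows essentially the same route as the paper's own proof: invoke Theorem~\ref{thm:when defined cf2 and icft agree} to pass from a \cft-~(resp.\ \stgt-) extension to an \icft-~(resp.\ \istgt-) extension, then apply Theorem~\ref{thm: icft istgt have weak reinstatement}. Your added remark about the vacuous satisfaction (the asterisk in Table~\ref{table:properties}) is not in the paper's proof of the corollary, but it is consistent with what the paper says in the table caption and is a sensible clarification.
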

\begin{proof}
    If $S$ is a cf2 or stg2 extension, then by Theorem \ref{thm:when defined cf2 and icft agree}, they are also an $\icft$ or $\istgt$ extension, thus must contain the grounded extension by Theorem \ref{thm: icft istgt have weak reinstatement}.
\end{proof}

\begin{figure}\centering
\scalebox{0.8}{
\begin{tikzpicture}[->=Stealth, state/.style={circle, draw, fill=mygray, minimum size=1cm}]

    \node[state, 
    ] (a) {$a$};
    \node[state, 
    ] (b0) [right=of a] {$b_0$};
   \node[draw=none] (nu) [right= of b0] {};
    \node[state, 
    ] (b2) [right=of nu] {$b_2$};
    \node[state, 
    ] (b1) [above=of nu] {$b_1$};
    \node[state, 
    ] (b3) [below=of nu] {$b_3$};

 \draw[->] (a) edge (b0);
 \draw[->] (b0) edge (b1);
 \draw[->] (b1) edge (b2);
 \draw[->] (b2) edge (b3);
 \draw[->] (b3) edge (b0);
 \draw[->] (b3) edge[loop below] (b3);

\end{tikzpicture}
}

\caption{An example of a finite AF $\F$, where the semantics $\cfo$ and $\stgo$ fail weak reinstatement.}
\label{fig:cfo stgo fail weak reinstatement}
\end{figure}
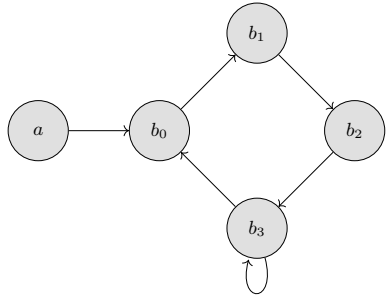

\begin{theorem}
    \cfo and \stgo do not satisfy weak reinstatement.
\end{theorem}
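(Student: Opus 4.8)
The plan is to exhibit the finite AF $\F$ shown in Figure~\ref{fig:cfo stgo fail weak reinstatement}, with arguments $\{a,b_0,b_1,b_2,b_3\}$ and attacks $a\att b_0$, $b_0\att b_1$, $b_1\att b_2$, $b_2\att b_3$, $b_3\att b_0$, together with the self-attack $b_3\att b_3$. First I would compute the grounded extension $\G$ of $\F$ by iterating the characteristic function: starting from $\emptyset$ we obtain $\{a\}$ (as $a$ has no attacker), then $\{a,b_1\}$ (as $a$ attacks the unique attacker $b_0$ of $b_1$), and this is already a fixed point, since $b_2$ is not defended (its attacker $b_1$ is unattacked), $b_3$ is not defended (it attacks itself), and $b_0$ is not defended ($a$ is unattacked). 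Hence $\G=\{a,b_1\}$.

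Next I would record that $\SCC(\F)=\{\{a\},X\}$ with $X=\{b_0,b_1,b_2,b_3\}$, and analyze the $\cfo$ and $\stgo$ extensions. Since nothing attacks $a$, we have $D_S(\{a\})=\emptyset$ for every $S$, and the unique naive — equivalently, stage — extension of $\F\restriction_{\{a\}}$ is $\{a\}$; hence $a\in S$ for every $S\in\cfo(\F)\cup\stgo(\F)$. Consequently $D_S(X)=\{b_0\}$, because $b_0$ is the only argument of $X$ attacked from outside $X$ and $a\att b_0$. The relevant restriction is therefore $\F\restriction_{\{b_1,b_2,b_3\}}$, carrying the attacks $b_1\att b_2$, $b_2\att b_3$, $b_3\att b_3$; its conflict-free sets are $\emptyset$, $\{b_1\}$, $\{b_2\}$ (the self-attack excludes $b_3$, and $b_1\att b_2$ excludes $\{b_1,b_2\}$), so its naive extensions are exactly $\{b_1\}$ and $\{b_2\}$. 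For the stage case I would note that the ranges of $\{b_1\}$ and $\{b_2\}$ in this restriction are $\{b_1,b_2\}$ and $\{b_2,b_3\}$, which are incomparable, so both are stage extensions as well. It follows that $\cfo(\F)=\stgo(\F)=\{\{a,b_1\},\{a,b_2\}\}$; since $\{a,b_2\}\in\cf(\F)$ but $\{a,b_2\}\not\supseteq\G=\{a,b_1\}$, both $\cfo$ and $\stgo$ fail weak reinstatement.

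All of these computations are routine; the one point I would be careful about is the role of the self-attack at $b_3$. Without it, $\F\restriction_{\{b_1,b_2,b_3\}}$ becomes a three-element path, whose unique stage extension is $\{b_1,b_3\}$ — which coincides with the restriction to $X$ of the grounded extension — so the counterexample would collapse for $\stgo$. The self-loop is precisely what produces two stage extensions with incomparable ranges in the restricted framework, and hence yields the failure of weak reinstatement for $\stgo$ as well as $\cfo$.
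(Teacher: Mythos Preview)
Your proof is correct and uses exactly the same counterexample as the paper, which simply asserts that $\{a,b_2\}$ is a \cfo and \stgo extension while $b_1\in\G$; you have filled in all the verifications the paper leaves implicit. Your closing remark about the necessity of the self-loop at $b_3$ for the \stgo case is a nice addition not present in the paper.
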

\begin{proof}
    Consider the AF in Figure \ref{fig:cfo stgo fail weak reinstatement}. Then $\{a,b_2\}$ is a \cfo and \stgo extension, yet $b_1\in \G$.
\end{proof}

\subsection*{Skepticism Adequacy}

That the naive semantics satisfies and the stage semantics does not satisfy $\preceq^E_\cap$- and $\preceq^E_W$-skepticism adequacy are established in Gaggl \cite{GagglDissertation}. 

We note that the proof in \cite{BaroniGiacomin} that cf2 satisfies $\preceq^E_W$-skepticism adequacy assumes a finite AF, since it only considers expanding the attack relation by one extra attack at a time. In fact, \cite[Proposition 2]{BaroniGiacomin} shows the following: 

\begin{proposition}[{\cite[Proposition 2]{BaroniGiacomin}}]\label{prop:BG-skep-ad}
    Suppose that $F=(A,R)$ and $G=(A,R')$ are \emph{finite AFs} with $R\subseteq R'$ where $\text{conf}(F)=\text{conf}(G)$. Let $S\in \cft(\F)$. Then $S\in \cft(\G)$.
\end{proposition}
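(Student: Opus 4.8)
The plan is to prove the statement by induction on $|A|$ (recall $F$ and $G$ share the same argument set). Two set-up observations are used throughout. First, conflict-freeness of a set depends only on its conflicting pairs, so $\text{conf}(F)=\text{conf}(G)$ gives $\cf(F)=\cf(G)$ and $\na(F)=\na(G)$, and likewise for every induced subframework on a common argument set; in particular $S\in\cf(G)$. Second, every $(u,v)\in R'\smallsetminus R$ satisfies $(v,u)\in R$ (the unordered pair $\{u,v\}$ is conflicting in $F$ and $(u,v)\notin R$), so $u$ and $v$ are mutually reachable in $G$ and hence lie in a common strongly connected component of $G$. Since adding edges can only merge strongly connected components, each $Y\in\SCC(G)$ is a union of $F$-SCCs, and no attack of $R'\smallsetminus R$ runs between two distinct members of $\SCC(G)$.

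If $|\SCC(G)|\le 1$ we are done: the only requirement is $S\in\na(G)$, which holds since $S\in\cft(F)\subseteq\na(F)=\na(G)$, using that $\cft$-extensions are naive (established earlier). So assume $|\SCC(G)|>1$; then $S\in\cft(G)$ amounts to $(S\cap Y)\in\cft(G\restriction_{Y\smallsetminus D_S(Y)})$ for every $Y\in\SCC(G)$. The key tool is the following \emph{locality property} of $\cft$, which I would establish separately: if $S\in\cft(\H)$ and $W\subseteq A_\H$ is a union of strongly connected components of $\H$, then $(S\cap W)\in\cft(\H\restriction_{W\smallsetminus D_S(W)})$. When $W$ is a single SCC this is just one clause of the definition; the content is that a whole block of components may be processed in one step. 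I would prove it by induction on $|A_\H|$, unfolding the SCC-recursive definition: because $W$ is a union of entire SCCs, every SCC of $\H$ is either contained in $W$ or disjoint from it, so $D_S(W)$ captures precisely the attacks into $W$ coming from $S$-arguments outside $W$, while attacks from $S$-arguments inside $W$ are resolved identically by the recursion on $\H$ and on $\H\restriction_{W\smallsetminus D_S(W)}$; one then checks that the two recursions perform the same vertex removals inside $W$, noting that the relevant removals are determined by the SCC-DAG together with $S$ and not by the order in which components are processed.

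Granting the locality property, fix $Y\in\SCC(G)$, a union of $F$-SCCs. Applying the property to $\H=F$ and $W=Y$ gives $(S\cap Y)\in\cft(F\restriction_{Y\smallsetminus D_S^F(Y)})$, where $D_S^F(Y)=\{b\in Y:\exists a\in S\smallsetminus Y,\ a\att_F b\}$. Because no attack of $R'\smallsetminus R$ goes from outside $Y$ into $Y$, we have $D_S^F(Y)=D_S^G(Y)=:D$. Also $S\cap D=\emptyset$, since any element of $D$ is attacked by a member of the conflict-free set $S$; hence $S\cap Y\subseteq Y\smallsetminus D$. Now $F\restriction_{Y\smallsetminus D}$ and $G\restriction_{Y\smallsetminus D}$ are AFs on the same argument set with $R_F\cap(Y\smallsetminus D)^2\subseteq R_G\cap(Y\smallsetminus D)^2$ and the same conflicting pairs, and $|Y\smallsetminus D|\le|Y|<|A|$ since $Y\subsetneq A$. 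So the induction hypothesis applies with $S\cap Y$ in the role of the extension and yields $(S\cap Y)\in\cft(G\restriction_{Y\smallsetminus D})$. As this holds for every $Y\in\SCC(G)$ and $|\SCC(G)|>1$, we conclude $S\in\cft(G)$.

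The main obstacle is the locality property. Establishing it rigorously requires careful bookkeeping of the "attacked-from-outside" sets $D_S(\cdot)$ as the recursion peels off successive top components, in order to confirm that collapsing a block of SCCs into a single processing step is harmless. This is essentially the compositional behaviour of the SCC-recursive schema, and the same argument adapts verbatim to $\stgt$ should the stage-based analogue of the proposition be wanted; everything else — the two set-up observations, the single-component base case, and the inductive reduction — is routine.
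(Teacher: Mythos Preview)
Your argument is correct. The paper does not itself prove this proposition---it cites Baroni--Giacomin, noting that their proof adds one attack at a time---but it does prove the \icft analogue (Theorem~\ref{thm:icft skep ad}, equivalent in the finite case) via the Gaggl--Woltran $\Delta$-characterization: from $\Delta_{G,S}\subseteq\Delta_{F,S}$ (reachability in $F$ implies reachability in $G$) one sees that each SCC of $G\smallsetminus\Delta_{G,S}$ decomposes into arguments of $\Delta_{F,S}$ (each attacked by $S$) together with a union of SCCs of $F\smallsetminus\Delta_{F,S}$ (on each of which $S$ is already naive), so $S$ is naive on the $G$-side. Your route---induction on $|A|$ plus a locality lemma for unions of SCCs---is genuinely different from both; the locality lemma is true, but your sketch is where the real work hides, since tracking how successive $D_S$-removals interact across a block of components amounts to a by-hand reconstruction of the $\Delta$ fixed point. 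The $\Delta$ argument is shorter, avoids that bookkeeping, and unlike the one-attack-at-a-time proof or your $|A|$-induction it generalizes to infinite frameworks; indeed the cleanest way to discharge your locality lemma would be to route through the $\Delta$ characterization, at which point the paper's proof subsumes it.
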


Yet the following example shows Proposition \ref{prop:BG-skep-ad}, as well as both $\preceq^E_\cap$- and $\preceq^E_W$-skepticism adequacy fail in the infinite setting.

\begin{theorem}
    Both $\preceq^E_\cap$ and $\preceq^E_W$-skepticism adequacy fail for cf2.
\end{theorem}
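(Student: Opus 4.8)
The plan is to exhibit infinite AFs $\F=(A,R_\F)$ and $\G=(A,R_\G)$ on a common argument set with $R_\F\supseteq R_\G$ and $\text{conf}(\F)=\text{conf}(\G)$, such that $\cft(\G)$ is well-defined with a single proper extension while $\cft$ is \emph{not} well-defined on $\F$. Once this is in place, the failure of both skepticism-adequacy conditions --- and of the infinite analogue of Proposition~\ref{prop:BG-skep-ad} --- is immediate. The mechanism is that the additional attacks of $\F$ collapse infinitely many finite strongly connected components of $\G$ into a single infinite one, recreating the pathology of the Baumann--Spanring framework of Figure~\ref{BS-example}.

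First I would set up $\G$. Take $A=\{a_i:i\geq1\}\cup\{b_i:i\geq1\}$ and let $R_\G$ consist of the attacks $a_i\att a_{i+1}$, $a_i\att b_{i+1}$, and $b_i\att a_i$ for all $i\geq1$. Then $\G$ has no directed cycle, so every strongly connected component is a singleton, $\{b_1\}$ is the unique initial one, and for every candidate $S$ the $\cft$-recursion terminates after one step (each restriction $\G\restriction_{X\smallsetminus D_S(X)}$ of a singleton $X$ is a singleton or empty). A straightforward induction along the chain shows the unique $\cft$-extension of $\G$ is $B:=\{b_i:i\geq1\}$: $b_1$ is forced into $S$, hence $a_1\notin S$, hence $b_2$ is forced, hence $a_2\notin S$, and so on. Thus $\cft(\G)=\{B\}$, which is well-defined, and $B\subsetneq A$.

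Next I would define $R_\F:=R_\G\cup\{(a_{i+1},a_i):i\geq1\}$; since $\{a_i,a_{i+1}\}$ is already a conflicting pair in $\G$, this leaves $\text{conf}$ unchanged, and $R_\F\supseteq R_\G$. In $\F$ the arguments $a_1,a_2,\dots$ become mutually reachable, so $\F$ has exactly the two components $\{b_1\}$ and $X:=A\smallsetminus\{b_1\}$ --- the same component structure as the framework of Figure~\ref{BS-example}; indeed $\F$ is that framework together with the extra ``upward'' attacks $a_i\att a_{i+1}$, which at every stage of the recursion remain internal to the current infinite component and hence never enter any $D_S(\cdot)$. Running the $\cft$-recursion on $B$ then reproduces the non-termination of the example in Figure~\ref{BS-example} step by step: $D_B(X)=\{a_1\}$, so one must recurse into $\F\restriction_{X\smallsetminus\{a_1\}}$, which again splits as a singleton together with an infinite component, and so on, never reaching a single-component base case. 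Thus the recursive clause of Definition~\ref{def:cf2} does not determine whether $B\in\cft(\F)$; $\cft$ is not well-defined on $\F$, so (as recorded in Table~\ref{table:properties}) $\F$ has no $\cft$-extension, i.e.\ $\cft(\F)=\emptyset$.

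Finally the conclusions follow. Since $\cft(\F)=\emptyset$, the relation $\cft(\F)\preceq^E_\cap\cft(\G)$ would require (reading the empty intersection as $A$) $A\subseteq B=\bigcap_{S\in\cft(\G)}S$, which is false; and $\cft(\F)\preceq^E_W\cft(\G)$ would require, for $B\in\cft(\G)$, some $S_1\in\cft(\F)$ with $S_1\subseteq B$, which is impossible as $\cft(\F)$ contains no set at all. As $R_\F\supseteq R_\G$ and $\text{conf}(\F)=\text{conf}(\G)$, this witnesses the failure of both $\preceq^E_\cap$- and $\preceq^E_W$-skepticism adequacy for cf2 (and, since $B\in\cft(\G)$ while $B\notin\cft(\F)$, the failure of the infinite analogue of Proposition~\ref{prop:BG-skep-ad}). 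The step I expect to demand the most care is checking that the recursion on $\F$ genuinely fails to terminate on $B$ --- that every restricted framework reached from $X$ is again a shifted copy of the Figure~\ref{BS-example} framework, so that the extra attacks really never interfere --- together with the dual check that on $\G$ the recursion terminates for \emph{every} candidate, which is what pins down $\cft(\G)=\{B\}$.
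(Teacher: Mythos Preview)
Your proof is correct and rests on the same mechanism as the paper's: exhibit $\F,\G$ with $R_\F\supseteq R_\G$ and $\text{conf}(\F)=\text{conf}(\G)$, where $\G$ admits a unique \cft-extension while the recursion on $\F$ fails to terminate, so $\cft(\F)$ is empty/undefined and both skepticism relations break.

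The constructions differ, however. The paper takes the Baumann--Spanring framework and adjoins an external argument $c$ with $c\att a_i$ for all $i$; in $G$ (fewer attacks) $c$ sits in its own SCC, wipes out all $a_i$ via $D_S$, and the recursion terminates with $\{b_i\}\cup\{c\}$; in $G'$ (adding $a_i\att c$) the argument $c$ falls into the big SCC and non-termination returns. Your route avoids the auxiliary argument entirely: you make $\G$ acyclic (so every SCC is a singleton and \cft collapses to a one-step check yielding $B$), then in $\F$ you add the reverse attacks $a_{i+1}\att a_i$ to re-create the infinite SCC, and you verify that the surviving ``upward'' attacks $a_i\att a_{i+1}$ stay internal at every stage and hence never interfere. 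This is arguably cleaner---no extra argument, and your witness is finitary from the start, whereas the paper's primary example is not (it notes separately that replacing $c$ by countably many $c_i$ yields a finitary variant). The paper's construction, on the other hand, makes the ``termination vs.\ non-termination'' dichotomy more transparently hinge on a single argument toggling between being outside and inside the big SCC.
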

\begin{proof}
    Let $G$ be Example \ref{BS-example} with one added argument $c$ so that $c\att a_i$ for all $i$. Note that $\{b_i \mid i\in \nat\}\cup \{c\}$ is a cf2 extension. Let $G'$ be as in $G$ except that we add the attacks $a_i\att c$. Note that just as in Example \ref{BS-example}, $G'$ does not have any cf2 extensions. Thus both $\preceq^E_\cap$- and $\preceq^E_W$-skepticism adequacy fail for cf2 in this example. This example can also be made finitary by adding, instead of one $c$, an infinite set of $c_i$ so that $c_i\att a_i$ in $G$ and also $a_i\att c_i$ in $G'$.
\end{proof}

\begin{theorem}\label{thm:icft skep ad}
    Suppose that $F=(A,R)$ and $G=(A,R')$ with $R\subseteq R'$ where $\text{conf}(F)=\text{conf}(G)$. Let $S\in \icft(\F)$. Then $S\in \icft(\G)$.

    In particular, \icft satisfies $\preceq^E_\cap$- and $\preceq^E_W$-skepticism adequacy. 
\end{theorem}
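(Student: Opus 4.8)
The plan is to reduce everything to the fixed-point characterization of Theorem~\ref{thm:icft-theequivalence}: an extension is in $\icft(\F)$ iff it is conflict-free and naive in $[[\F\smallsetminus\Delta_{\F,S}]]$, and likewise for $\G$. Conflict-freeness of a set depends only on the conflict relation, so $\text{conf}(F)=\text{conf}(G)$ yields $\cf(\F)=\cf(\G)$, and in particular $S\in\cf(\G)$. Since $S\in\icft(\F)$ already gives $S\in\na([[\F\smallsetminus\Delta_{\F,S}]])$, it remains to show $S\in\na([[\G\smallsetminus\Delta_{\G,S}]])$.

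The first main step is the lemma $\Delta_{\G,S}\subseteq\Delta_{\F,S}$, proved by transfinite induction showing $\Delta_{\G,S}^\alpha\subseteq\Delta_{\F,S}$ for all $\alpha$. The crucial observation is that any attack $(b,a)$ present in $G$ but not in $F$ has, by $\text{conf}(F)=\text{conf}(G)$, its reverse $(a,b)$ already in $R$, so $a\att_\G b$ is a length-one $G$-path from $a$ to $b$. Hence if $a$ first enters $\Delta_{\G,S}$ at a successor stage $\alpha+1$ with witness $b\in S$, the attack $b\att_\G a$ cannot be a newly added one: otherwise $a\Rightarrow^{A\smallsetminus\Delta^\alpha_{\G,S}}_\G b$ directly (note $b\in S$, and each $\Delta^\alpha_{\G,S}$ is disjoint from $S$ since its members are attacked by $S$ and $S\in\cf(\G)$), contradicting $a\in\Delta_{\G,S}(\Delta^\alpha_{\G,S})$. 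So $b\att_\F a$; as $F$ has no more paths than $G$ we get $a\not\Rightarrow^{A\smallsetminus\Delta^\alpha_{\G,S}}_\F b$, and the induction hypothesis $\Delta^\alpha_{\G,S}\subseteq\Delta_{\F,S}$ upgrades this to $a\not\Rightarrow^{A\smallsetminus\Delta_{\F,S}}_\F b$, whence $a\in\Delta_{\F,S}(\Delta_{\F,S})=\Delta_{\F,S}$.

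Using the lemma, $A\smallsetminus\Delta_{\G,S}=(A\smallsetminus\Delta_{\F,S})\cup(\Delta_{\F,S}\smallsetminus\Delta_{\G,S})$. Conflict-freeness of $S$ in $[[\G\smallsetminus\Delta_{\G,S}]]$ is inherited from $\cf(\G)$, so I would only check maximality: every $a\notin S$ in $A\smallsetminus\Delta_{\G,S}$ conflicts in $G$ with some $b\in S$ lying in the same SCC of $G\smallsetminus\Delta_{\G,S}$ as $a$. If $a\in A\smallsetminus\Delta_{\F,S}$, then $S\in\na([[\F\smallsetminus\Delta_{\F,S}]])$ supplies such a $b$ in the same SCC of $F\smallsetminus\Delta_{\F,S}$; the two witnessing $F$-paths avoid $\Delta_{\F,S}$, hence are $G$-paths avoiding $\Delta_{\G,S}$, so $a$ and $b$ stay in a common SCC of $G\smallsetminus\Delta_{\G,S}$, and the conflict edge (an edge of $F\subseteq G$ between arguments outside $\Delta_{\F,S}\supseteq\Delta_{\G,S}$) survives the separation. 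If instead $a\in\Delta_{\F,S}\smallsetminus\Delta_{\G,S}$, take the witness $b\in S$ with $b\att_\F a$ from $a$'s entry into $\Delta_{\F,S}$; since $a\notin\Delta_{\G,S}$ and $\Delta_{\G,S}$ is a fixed point of $\Delta_{\G,S}$, there is a $G$-path from $a$ to $b$ avoiding $\Delta_{\G,S}$, and together with the edge $b\att_\G a$ this places $a$ and $b$ in a common SCC of $G\smallsetminus\Delta_{\G,S}$, where they conflict. Hence $S\in\na([[\G\smallsetminus\Delta_{\G,S}]])$ and so $S\in\icft(\G)$. For the ``in particular'', $\preceq^E_\cap$- and $\preceq^E_w$-skepticism adequacy compare two frameworks on a common argument set with $R_\F\supseteq R_\G$ and equal conflict relations; applying what was just proved with the framework having fewer attacks in the role of $F$ gives $\icft(\G)\subseteq\icft(\F)$ as families of extensions, from which both adequacy conditions are immediate (for $\preceq^E_w$, take each $S_2\in\icft(\G)$ itself, which lies in $\icft(\F)$).

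The step I expect to be the main obstacle is the lemma $\Delta_{\G,S}\subseteq\Delta_{\F,S}$ --- specifically, the observation that a newly added attack can never serve as the witness that forces an argument into $\Delta_{\G,S}$ --- together with the bookkeeping in the maximality step, where one must verify that the ``rescued'' arguments in $\Delta_{\F,S}\smallsetminus\Delta_{\G,S}$ end up in an SCC of $G\smallsetminus\Delta_{\G,S}$ in which they still conflict with $S$.
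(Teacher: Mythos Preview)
Your proof is correct and follows essentially the same route as the paper: reduce to the fixed-point characterization of Theorem~\ref{thm:icft-theequivalence}, establish $\Delta_{\G,S}\subseteq\Delta_{\F,S}$, and then verify naivety in $[[\G\smallsetminus\Delta_{\G,S}]]$ by splitting into elements already outside $\Delta_{\F,S}$ versus those in $\Delta_{\F,S}\smallsetminus\Delta_{\G,S}$. Your treatment is in fact more explicit than the paper's in two places---the observation that a newly added attack $b\att_\G a$ cannot witness $a\in\Delta_{\G,S}$ (since then $a\att_\F b$ gives a $G$-path back), and the verification that for $a\in\Delta_{\F,S}\smallsetminus\Delta_{\G,S}$ the witnessing attacker from $S$ lands in the same SCC of $G\smallsetminus\Delta_{\G,S}$---both of which the paper's proof leaves implicit.
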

\begin{proof}
    Observe that $\Delta_{G,S}^\alpha\subseteq \Delta_{F,S}^\alpha$ for all $\alpha$ since reachability in $F$ implies reachability in $G$.
    It follows that if $a$ and $b$ are in the same strongly connected component in $[[F-\Delta_{F,S}]]$ then they are in the same strongly connected component in $[[G-\Delta_{G,S}]]$.
    We must show that $S$ is naive in $[[G-\Delta_{G,S}]]$. We consider a single strongly connected component $X$ in $G-\Delta_{G,S}$. 
    We can partition $X$ into two pieces: The elements in $\Delta_{F,S}$ and the remaining elements. These remaining elements are a union of strongly connected components in $[[F-\Delta_{F,S}]]$. We know that $S$ is naive on each strongly connected component in $[[F-\Delta_{F,S}]]$. For each element $x\in X$, either $x\in \Delta_{F,S}$ in which case it is attacked by an element of $S$ or it is either a self-attacker or is in conflict with some element in $S$ in its strongly connected component in $[[F-\Delta_{F,S}]]$. Thus $S$ is naive on $X$, showing that $S$ is in $\icft(\G)$.
\end{proof}

\begin{theorem}
    Suppose that $F=(A,R)$ and $G=(A,R')$ with $R\subseteq R'$ where $\text{conf}(F)=\text{conf}(G)$. Let $S\in \cfo(F)$. Then $S\in \cfo(G)$.

    In particular, \cfo satisfies $\preceq^E_\cap$- and $\preceq^E_W$-skepticism adequacy. 
\end{theorem}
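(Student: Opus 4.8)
The plan is to follow the template of the proof of Theorem~\ref{thm:icft skep ad}, adapted to the non-recursive definition of \cfo. I would start with two preliminary observations. Since conflict-freeness depends only on conflicting pairs and $\text{conf}(F)=\text{conf}(G)$, we have $\cf(F)=\cf(G)$, so $S\in\cf(G)$. And since $R\subseteq R'$, every directed path of $F$ is a directed path of $G$, so each $X\in\SCC(G)$ is a union of strongly connected components of $F$. Hence it suffices to fix an arbitrary $X\in\SCC(G)$ and verify that $S\cap X$ is a naive extension of $G\restriction_{X\smallsetminus D^G_S(X)}$; here I write $D^F_S$ and $D^G_S$ for the operator $D_S$ of Definition~\ref{def2} computed in $F$ and in $G$ respectively.

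Next I would handle conflict-freeness together with the needed inclusion $S\cap X\subseteq X\smallsetminus D^G_S(X)$: if some $b\in S\cap X$ belonged to $D^G_S(X)$, then some $a\in S\smallsetminus X$ would attack $b$ in $G$, so $(a,b)\in\text{conf}(G)=\text{conf}(F)$, contradicting $S\in\cf(F)$. Conflict-freeness of $S\cap X$ inside the restriction is then immediate from $S\in\cf(G)$.

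The core of the argument is maximality. I would fix $x\in(X\smallsetminus D^G_S(X))\smallsetminus S$ and let $Y$ be the strongly connected component of $x$ in $F$ (so $Y\subseteq X$), and split into two cases. If $x\in D^F_S(Y)$, pick $a\in S\smallsetminus Y$ attacking $x$ in $F$; then $a$ attacks $x$ in $G$, and since $x\notin D^G_S(X)$ the attacker $a$ cannot lie outside $X$, so $a\in S\cap X$ and $a$ attacks $x$ within $G\restriction_{X\smallsetminus D^G_S(X)}$. If instead $x\notin D^F_S(Y)$, then $x\in(Y\smallsetminus D^F_S(Y))\smallsetminus S$, and since $S\cap Y$ is naive---hence maximal conflict-free---in $F\restriction_{Y\smallsetminus D^F_S(Y)}$, either $x$ attacks itself in $F$ or $x$ is in conflict in $F$ with some $a\in S\cap Y$; since $\text{conf}(F)=\text{conf}(G)$ and $S\cap Y\subseteq S\cap X\subseteq X\smallsetminus D^G_S(X)$, this again produces a conflict inside $G\restriction_{X\smallsetminus D^G_S(X)}$. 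In every case $(S\cap X)\cup\{x\}$ fails to be conflict-free there, so $S\cap X$ is naive and therefore $S\in\cfo(G)$.

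Finally, for the skepticism-adequacy clause I would note that the implication just established says precisely that $\cfo(\mathcal{G})\subseteq\cfo(\mathcal{F})$ whenever $\mathcal{F}$ carries the larger attack relation and the two frameworks have the same conflicting pairs; then $\bigcap\cfo(\mathcal{F})\subseteq\bigcap\cfo(\mathcal{G})$ yields $\preceq^E_\cap$-adequacy, and choosing $S_1=S_2$ yields $\preceq^E_W$-adequacy, exactly as for \icft. I expect the maximality case analysis to be the only delicate step: one must see that a witnessing attack on $x$ coming from outside $x$'s $F$-component is harmless, since such an attacker is either confined to the larger $G$-component $X$ (and so still present in the restricted framework) or else would already have placed $x$ in $D^G_S(X)$, contradicting the choice of $x$.
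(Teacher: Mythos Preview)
Your proposal is correct and follows essentially the same approach as the paper: decompose each $G$-SCC $X$ into its constituent $F$-SCCs and use the naivety hypothesis on each piece, with the same two-case split according to whether $x$ lies in $D^F_S(Y)$ or not. Your write-up is in fact somewhat more careful than the paper's, in that you explicitly distinguish $D^F_S$ from $D^G_S$, verify $S\cap X\subseteq X\smallsetminus D^G_S(X)$, and check that the witnessing conflict lands inside the restricted framework; the paper leaves these points implicit.
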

\begin{proof}
    Let $X$ be a strongly connected component in $G$. Then $X$ is a union of strongly connected components in $F$. Let $\Theta$ be the set of strongly connected components in $F$ so that $X=\bigcup_{Y\in \Theta} Y$. We need to check that $S$ is a naive extension of $X\smallsetminus \{a\in X \mid \exists y\in S\smallsetminus X (y\att a)\}$.  For each $Z\in \Theta$, we know that $S$ is a naive extension of $Z\smallsetminus \{a\in Z \mid \exists y\in S\smallsetminus Z (y\att a)\}$. Thus, for each element $x\in X$, either $x$ is attacked by an element of $S$ or it is in one of the sets $Z\smallsetminus \{a\in Z \mid \exists y\in S\smallsetminus Z (y\att a)\}$ for $Z\in \Theta$. In this latter case, it is either in $S$ or it is a self-attacker or it is in conflict with an element of $S\cap Z\subseteq S\cap X$. It follows that $S$ is naive on $X\smallsetminus \{a\in X \mid \exists y\in S\smallsetminus X (y\att a)\}$.
\end{proof}

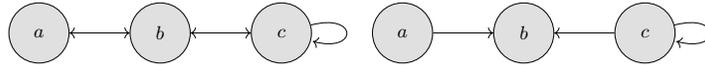
\begin{figure}\centering
\scalebox{0.8}{
\begin{tikzpicture}[->=Stealth, state/.style={circle, draw, fill=mygray, minimum size=1cm}]

    \node[state, 
    ] (a) {$a$};
    \node[state, 
    ] (b) [right=of a] {$b$};
    \node[state,] (c) [right= of b] {$c$};
    \node[state, 
    ] (d) [right=of c] {$a$};
    \node[state, 
    ] (e) [right=of d] {$b$};
    \node[state,] (f) [right= of e] {$c$};

 \draw[<->] (a) edge (b);
 \draw[<->] (b) edge (c);
 \draw[->] (c) edge[loop right] (c);
  \draw[->] (d) edge (e);
 \draw[->] (f) edge (e);
 \draw[->] (f) edge[loop right] (f);

\end{tikzpicture}
}

\caption{Two AFs $F$ and $G$ showing that \stg, \istgt, \stgo, each fail to satisfy $\preceq^E_\cap$- and $\preceq^E_W$-skepticism adequacy.}
\label{fig:fail skepticism adequacy}
\end{figure}

\begin{theorem}
    \stg, \istgt, \stgo, each fail to satisfy $\preceq^E_\cap$- and $\preceq^E_W$-skepticism adequacy.
\end{theorem}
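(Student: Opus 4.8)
The plan is to use the two AFs $F$ and $G$ depicted in Figure~\ref{fig:fail skepticism adequacy}: $F$ has attacks $a\att b$, $b\att a$, $b\att c$, $c\att b$, and $c\att c$, while $G$ retains only $a\att b$, $c\att b$, and $c\att c$. First I would check that these satisfy the hypotheses of the skepticism-adequacy conditions: $R_G\subseteq R_F$, and in both frameworks the conflicting pairs are exactly $\{a,b\}$, $\{b,c\}$, together with the self-loop at $c$, so $\text{conf}(F)=\text{conf}(G)$. The direction here matters---it is $F$ (the more heavily attacked framework) whose extensions sit on the left of $\preceq$.

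Next I would compute the relevant extensions. Since $c$ attacks itself, no conflict-free set of either framework contains $c$, and since $a,b$ conflict in both, $\cf(F)=\cf(G)=\{\emptyset,\{a\},\{b\}\}$. In $F$ the ranges are $\emptyset^{\oplus}=\emptyset$, $\{a\}^{\oplus}=\{a,b\}$, and $\{b\}^{\oplus}=\{a,b,c\}=A_F$, so $\{b\}$ is the unique range-maximal conflict-free set and $\stg(F)=\{\{b\}\}$; since $F$ is a single strongly connected component, the SCC-based semantics collapse to the stage semantics here, giving $\istgt(F)=\stgo(F)=\{\{b\}\}$ as well. In $G$ the ranges are $\emptyset^{\oplus}=\emptyset$, $\{a\}^{\oplus}=\{a,b\}$, and $\{b\}^{\oplus}=\{b\}$, so $\stg(G)=\{\{a\}\}$. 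For $\istgt(G)$ and $\stgo(G)$ I would work with the SCC decomposition $\{a\},\{b\},\{c\}$ (the component $\{b\}$ lying below the other two): the set $\{a\}$ qualifies, since on $\{a\}$ it is the stage extension of the isolated argument, on $\{b\}$ the operator $D_{\{a\}}(\{b\})=\{b\}$ deletes $b$ so $\emptyset$ is vacuously the stage extension of the empty restriction, and on $\{c\}$ the empty set is the stage extension of the lone self-attacker; and neither $\emptyset$ nor $\{b\}$ qualifies, because both fail on the component $\{a\}$, whose only stage extension is $\{a\}$ itself. Hence $\istgt(G)=\stgo(G)=\{\{a\}\}$.

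With these computations the conclusion is immediate for $\sigma\in\{\stg,\istgt,\stgo\}$ at once: $\bigcap_{S\in\sigma(F)}S=\{b\}\not\subseteq\{a\}=\bigcap_{S\in\sigma(G)}S$, so $\preceq^E_\cap$-skepticism adequacy fails; and for $S_2=\{a\}\in\sigma(G)$ there is no $S_1\in\sigma(F)=\{\{b\}\}$ with $S_1\subseteq S_2$, so $\preceq^E_W$-skepticism adequacy fails. I expect the only point requiring care to be the verification that the SCC-recursive semantics $\istgt$ and $\stgo$ introduce no spurious extensions on the three-component framework $G$---in particular, tracking that $D_S$ deletes $b$ precisely when $a\in S$, and that the isolated component $\{a\}$ forces $a$ into every extension of $G$.
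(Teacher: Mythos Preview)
Your proposal is correct and follows essentially the same approach as the paper: both use the pair of frameworks from Figure~\ref{fig:fail skepticism adequacy} and observe that $\sigma(F)=\{\{b\}\}$ while $\sigma(G)=\{\{a\}\}$ for each $\sigma\in\{\stg,\istgt,\stgo\}$, which immediately kills both adequacy conditions. Your write-up simply unpacks the extension computations (ranges in $F$ and $G$, the SCC decomposition of $G$, and the $D_S$ deletions) that the paper leaves implicit.
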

\begin{proof}
    As in Proposition 8 of Gaggl \cite{GagglDissertation}: Let $F$ be the AF depicted on the left in Figure \ref{fig:fail skepticism adequacy}, and let $G$ be the AF depicted on the right in Figure \ref{fig:fail skepticism adequacy}. Then $\stg(F)=stg2(F)=\istgt(F)=\stgo(F)=\{\{b\}\}$ and $\stg(G)=stg2(G)=\istgt(G)=\stgo(G)=\{\{a\}\}$ showing that neither skepticism adequacy condition holds for these semantics.
\end{proof}

\section{Details for section \ref{sec:finitary case}}
\label{sec:finitary existence for stgo}

\begin{definition}
	We extend the $<$ order from Lemma \ref{lem:orderingF} to subsets of $\A_\F$ lexicographically:
	Let $X\neq Y$ be subsets of $\A_\F$. We say that $X$ is lexigraphically larger than $Y$ if $z\in X$ where $z$ is the least element of $\A_\F$ so $z\in X\not\leftrightarrow z\in Y$. Note that this is well-defined since $<$ is a well-order, and it linearly orders all subsets of $\A_\F$.
\end{definition}

\stgFinitaryEx*

\begin{proof}
    Let $<$ be an order on $\F$ as constructed in Lemma \ref{lem:orderingF}. Note that if $a\att b$, then $a$ appears at most finitely after $b$ in $<$:  i.e., the set $\{x\in \A_\F : a<x<b \}$ is finite. It follows that if there is any path from $a$ to $b$, then the interval $[a,b]$ is finite. By the fact that $<$ is a well-ordering, there is a least element of any SCC.  Thus for any $a$, the set of elements in $\SCC(a)$ to the left of $a$ is always finite.

    We form a propositional theory in predicates $P_a, Q_a$ for $a\in F$. We regard $P_a$ as saying that $a$ is in the extension $S$ which we are seeking. We regard $Q_a$ as saying that $a$ is in $S^\oplus$ for the extension $S$ which we are seeking.

    Let $T_0$ be the theory which says $\neg (P_a\wedge P_b)$ for each pair $(a,b)$ so that $a\att b$. Let $T_1$ be the formulas which say for each $a$: $Q_a\leftrightarrow(P_a \vee \bigvee_{b\att a}P_b)$. 
    Let $a$ be any element of $F$. Let $Z(a)$ be the finite set of elements in $\SCC(a)$ which are $\leq a$. Let $X$ be the finite set of elements in $(Z(a)\cup Z(a)^-)^-\smallsetminus \SCC(a)$. For every subset $Y\subseteq X$, let $Z^a_Y$ be the lexicographically-largest subset of $Z(a)$ which is contained in $(Y\cup S)^\oplus$ for any conflict-free set $Y\cup S$, where $S\subseteq (Z(a)\cup Z(a)^-)\cap \SCC(a)$. We let the formula $\psi_{a,Y}$ be the formula $(\bigwedge_{b\in Y} P_b\wedge \bigwedge_{b\notin Y} \neg P_b)\rightarrow \bigwedge_{c\in Z^a_Y} Q_c$. Finally, we let $T=T_0\cup T_1\cup \{\psi_{a,Y}\mid a\in \A_\F, Y\subseteq (Z(a)\cup Z(a)^-)^-\smallsetminus \SCC(a))\}$.

    In words, $T$ says that $S$ is conflict-free and that on each strongly connected component, $S^\oplus$ is lexicographically the largest possible. 
    
    We now check that $T$ is consistent using the compactness theorem. Let $T'$ be a finite subset of $T$. We fix finitely many strongly connected components $X_0,\ldots, X_n$ and elements $a_i\in X_i$ for $i\leq n$ so that the set of $a$ so that $P_a$ or $Q_a$ are mentioned in $T'$ is a subset of $D=\bigcup_{i\leq n} Z(a_i)$.

    We consider the AF $F\vert_D$ and we consider each $X_i$ successively, beginning with an initial one in the ordering of strongly connected components. Since $X_i$ is initial, $T'$ does not contain a formula $\psi_{a,Y}$ for any $a\in X_i$. We take any stage extension on $F\vert_{X_i\cap D}$ and put these elements into $S$. We then successively consider the other $X_j$ where $X_j$ is initial among the strongly connected components not yet considered. Now it is possible that we have formulas $\psi_{a,Y}$ in $T'$ for $D\cap X_i$. These formulas tell us exactly which elements we are to put into $S^\oplus$, if we have determined $Y$ to be the set of members of $S\cap (Z(a)\cup Z(a)^-)^-$ on higher levels. Note that if $a<b$ are in the same $\SCC$ and $Y=Y'\cap (Z(a)\cup Z(a)^-)^-$, then $Z^a_Y\subseteq Z^b_{Y'}$. Let $K=\{a\mid \exists Y (\psi_{a,Y}\in T'\wedge Y\text{agrees with $S$})\}$. Agreeing with $S$ means that $Y$ is exactly the intersection of the set $S$ (which we have already built on higher SCCs) intersected with $(Z(a)\cup Z(a)^-)^-\smallsetminus \SCC(a)$. Then we only need to realize the single axiom with $a\in K$ maximal. 
    
    But for $\psi_{a,Y}$ to be included in $T$, there must be some conflict-free extension $\hat{S}$ in $X_j$ so that $(Y\cup \hat{S})^\oplus$ includes $Z_Y$. We let $S$ on $X_j\cap D$ agree with this $\hat{S}$. Proceeding as such through all the $X_k$ for $k\leq n$, we build a set $S$ so that $\{P_a \mid a\in S\}\cup \{Q_b \mid b\in S^\oplus\}$ is a model of $T'$. This shows that $T$ is consistent.

    By the compactness theorem for propositional logic, $T$ is consistent, so let $\pi$ be a model of $T$. Let $S$ be the set of $a$ so that $\pi$ makes $P_a$ true. It follows from $T_1$ that $S^\oplus$ is the set of $b$ so that $\pi$ makes $Q_b$ true. We need only check that $S$ is a \stgo-extension of $\F$. $S$ is conflict-free by $T_0$. Let $X$ be any strongly connected component. Suppose towards a contradiction that $S'\subseteq X\smallsetminus D_{S}(X)$ were a conflict-free extension so that $(S')^\oplus$ properly contains $S^\oplus$. Let $a$ be the $<$-first element of $X$ so that $a\in (S')^\oplus\smallsetminus S^\oplus$. Let $Y$ be the set of elements in $(Z(a)\cup Z(a)^-)^-\smallsetminus X$ which are in $S$. Then the formula $\psi_{a,Y}$ being in $T$ implies that $S^\oplus$ is the lexicographically-greatest possible subset of $D(a)$, contradicting the existence of $S'$. Thus $S\cap  (X\smallsetminus D_{S}(X))$ is a stage extension of $X\smallsetminus D_{S}(X)$, showing that $S$ is an \stgo-extension of $F$.
\end{proof}

\cfostgoFinitaryDirectionality*
\begin{proof}
    Let $\F=(A_\F,R_\F)$ be an AF and $U\subseteq A_\F$ be unattacked from outside, so $U$ is a union of an initial set of SCCs. If $X\in \cfo(\F)$ (or $X\in \stgo(\F)$), then it is immediate that $X\cap U\in \cfo(\F\restriction_U)$ (or $X\cap U\in \stgo(\F\restriction_U)$).

    Let $S\in \cfo(\F\restriction_U)$. We use finitary extistence applied to the AF $G=\F\restriction_{A_\F\smallsetminus (U\cup S^+)}$, to take an $S'\in \cfo(G)$. We next will show that $S\cup S'\in \cfo(\F)$. Once we see this, then $S\in \{X\cap U \mid X\in \cfo(\F)\}$ completing the proof that \cfo has finitary directionality.

    Towards showing that $S\cup S'\in \cfo(\F)$, let $X$ be an SCC in $A_\F\smallsetminus U$.
    Then the elements of $X$ partition into the elements of $S^+$ and the elements in $\bigcup_{Y\in \Theta} Y$ for some set $\Theta\subseteq \SCC(G)$. For each $a\in Y\in \Theta$, either $a$ is attacked by some element of $S'$ or it is in conflict with an element of $Y\cap S'$. Thus every element of $X$ is either attacked by $S\cup S'$ or is in conflict with an element of $(S\cup S')\cap X$. It follows that $(S\cup S')\cap X$ is a naive extension in $\F\restriction_{X\smallsetminus D_{S\cup S'}(X)}$. Thus $S\cup S'\in \cfo(\F)$. 

    Next we shift to showing finitary directionality of \stgo. Let $S\in \cfo(\F\restriction_U)$. As in the proof of Theorem \ref{thm:finitary existence stg}, we will form a propositional theory in the language $\{P_a,Q_a \mid a\in A_\F\}$. Let $T_0$ be the formulas $\neg(P_a\wedge P_b)$ for all pairs $a\att b$. Let $T_1$ be the formula $Q_a\leftrightarrow P_a\vee \bigvee_{b\att a}P_b$. Let $T_2$ be the set of formulas $\psi_{a,Y}$ for each $a\in A_\F\smallsetminus U$, and $Y\subseteq (Z(a)\cup Z(a)^-)^-\smallsetminus \SCC(a)$. Recall the definition of the formula $\psi_{a,Y}$ from Theorem \ref{thm:finitary existence stg}. Lastly, let $T_3$ be the set of axioms $P_a$ for each $a\in S\cap U$, and $\neg P_a$ for each $a\in U\smallsetminus S$. Let $T=T_0\cup T_1\cup T_2\cup T_3$. A similar compactness argument as in Theorem \ref{thm:finitary existence stg} shows that $T$ is consistent. In particular, we will again need to consider finitely many elements from finitely many SCCs. Then, successively considering initial SCCs $X$, we take extensions $S'$ which make $(Y\cup S')^\oplus$ lexicographically-maximal, where $Y$ is the set of elements already in $S'$ from higher SCCs (including from $U$). As in Theorem \ref{thm:finitary existence stg}, we conclude the consistency of $T$.
    A model $\pi$ of $T$ gives a \stgo extension $X$ of $\F$ so that $S=X\cap U$. 
\end{proof}

\section{A new proof that every finitary AF has a stg extension}

We prove in this section that every finitary AF has a stg extension. This result already appeared \cite{BS-unrestricted}, but we believe that this proof offers a new perspective. It is motivated by the method used to prove Theorems \ref{thm:finitary existence cf2} and \ref{thm:finitary existence stg}.

\begin{theorem}
    Let $\F$ be a finitary AF. Then $\F$ has a stage extension.
\end{theorem}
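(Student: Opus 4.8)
The plan is to run the compactness argument of Theorem~\ref{thm:finitary existence stg}, but now requiring \emph{global} lexicographic maximality of the range rather than the SCC-by-SCC condition that defines \stgo. First I would apply Lemma~\ref{lem:orderingF} to fix a well-ordering $<$ of $\A_\F$ in which each argument attacks only finitely many smaller arguments. Combined with finitariness (each $\{a\}^-$ is finite), this makes, for every $a$, the set $Z(a):=\{x\in\SCC(a) : x\leq a\}$ finite: since $<$ is a well-order, $\SCC(a)$ has a least element $m_a$, there is a directed path from $a$ to $m_a$, and any directed path between two arguments spans only a finite $<$-interval; hence $Z(a)\subseteq[m_a,a]$ is finite. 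For each $a\in\A_\F$ introduce propositional variables $P_a$ and $Q_a$, read as ``$a\in S$'' and ``$a\in S^{\oplus}$''.

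Next I would assemble a theory $T=T_0\cup T_1\cup T_2$. Here $T_0$ contains $\neg(P_a\wedge P_b)$ for every attack $a\att b$ (conflict-freeness); $T_1$ contains $Q_a\leftrightarrow\bigl(P_a\vee\bigvee_{b\att a}P_b\bigr)$ for each $a$, a genuine finite formula because $\F$ is finitary; and $T_2$ contains, for each $a$ and each subset $Y$ of the finite context set $(Z(a)\cup Z(a)^-)^-\smallsetminus\SCC(a)$, an implication $\psi_{a,Y}$ of the form $\bigl(\bigwedge_{b\in Y}P_b\wedge\bigwedge_{b\notin Y}\neg P_b\bigr)\rightarrow\bigwedge_{c\in Z^a_Y}Q_c$, where $Z^a_Y\subseteq Z(a)$ is the lexicographically-largest subset of $Z(a)$ that can be placed in the range by a conflict-free choice compatible with $Y$. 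Intuitively $T$ says that $S$ is conflict-free and that $S^{\oplus}$ is lexicographically maximal with respect to $<$. A model $\pi$ of $T$ then yields $S:=\{a : \pi\models P_a\}$, which is conflict-free by $T_0$; and if some conflict-free $S'$ satisfied $(S')^{\oplus}\supsetneq S^{\oplus}$, then, since $S^{\oplus}\subseteq(S')^{\oplus}$, the $<$-least element $a$ of $(S')^{\oplus}\smallsetminus S^{\oplus}=(S')^{\oplus}\triangle S^{\oplus}$ would lie in the relevant $Z^a_Y$ while $Q_a$ is false, contradicting $\psi_{a,Y}$. Hence $S$ is a stage extension of $\F$.

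Finally I would check that every finite $T'\subseteq T$ is consistent, so that the compactness theorem for propositional logic applies. Given $T'$, let $D$ be the finite set of arguments mentioned, enlarged to a finite union of sets $Z(a_i)$, and build a stage extension of the finite AF $\F\restriction_D$ by processing the finitely many SCCs it meets in topological order; at each step one chooses, within the current SCC, a conflict-free set realizing the lexicographically-maximal attainable range given the choices already fixed on higher SCCs, which is possible by the classical fact that finite AFs admit stage extensions. Reading off $P$ and $Q$ from this construction gives a model of $T'$.

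The step I expect to be the main obstacle is pinning down $T_2$ so that three things hold simultaneously: each $\psi_{a,Y}$ is a genuine finite formula (this is exactly where finiteness of $Z(a)$ and of the context set is essential), the competing conflict-free set $S'$ in the argument of the second paragraph is controlled by the values of $S$ \emph{below} $a$ even though $S'$ is global rather than confined to a single SCC (unlike in Theorem~\ref{thm:finitary existence stg}, where the candidate lives inside one SCC), and the order in which SCCs are processed in the finite fragment can always be chosen to honour whichever axioms $\psi_{a,Y}$ happen to lie in $T'$. All three points are the same flavour of bookkeeping already carried out in the proof of Theorem~\ref{thm:finitary existence stg}, adapted by keeping the SCC decomposition only as a device for finiteness rather than as part of the acceptance condition.
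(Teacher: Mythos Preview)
Your proposal has a genuine gap, precisely at the point you flag as the main obstacle. The axioms $\psi_{a,Y}$ are indexed by $Y = S\cap\bigl((Z(a)\cup Z(a)^-)^-\smallsetminus\SCC(a)\bigr)$, so the only $\psi_{a,Y}$ whose hypothesis is true in the model is the one with $Y$ read off from $S$. But the competing $S'$ with $(S')^{\oplus}\supsetneq S^{\oplus}$ need not agree with $S$ on that context set; hence there is no reason the $<$-least $a\in (S')^{\oplus}\smallsetminus S^{\oplus}$ lies in $Z^a_Y$. Concretely, take $\A_\F=\{a,b,c\}$ with $a\att b$, $b\att a$, $a\att c$, $c\att c$, and order $a<b<c$. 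The set $S=\{b\}$ satisfies all of $T_0,T_1,T_2$: for the critical argument $c$ one has $Z(c)=\{c\}$, the context set is $\{a,b\}$, $Y=\{b\}$, and $Z^c_{\{b\}}=\emptyset$ since no conflict-free set containing $b$ but not $a$ can put $c$ in its range. So $\psi_{c,\{b\}}$ is vacuous and $S$ is a model of $T$, yet $S^{\oplus}=\{a,b\}\subsetneq\{a,b,c\}=\{a\}^{\oplus}$, so $S$ is not a stage extension. Your theory $T$ is exactly the one from Theorem~\ref{thm:finitary existence stg}, and what it axiomatizes is the \stgo condition (SCC-local range-maximality given the attacks from outside), not the global stage condition.

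The paper's proof avoids this by decoupling maximality from compactness. It first constructs the target range $D$ directly by transfinite recursion along $<$: at stage $a$, put $a$ into $D$ iff some conflict-free $B$ has $\bigl(D\cap\{b:b<a\}\bigr)\cup\{a\}\subseteq B^{\oplus}$. This greedy construction immediately guarantees that no conflict-free set has range strictly containing $D$. Only then does one invoke compactness, with a much simpler theory: $T_0$, $T_1$, and $T_2=\{Q_a:a\in D\}$, saying merely that $S$ is conflict-free and $D\subseteq S^{\oplus}$. Finite satisfiability is witnessed by the set $S_b$ that justified putting the $<$-largest relevant $b$ into $D$. Any model then gives a conflict-free $S$ with $S^{\oplus}\supseteq D$; by maximality of $D$, equality holds and $S$ is stage. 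The point is that the global maximality condition cannot be packaged into finitary propositional axioms referring to $S$ alone, but it \emph{can} be secured in advance by a separate recursive construction; the compactness step is then only asked to produce a witness for a fixed target, which is a local condition.
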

\begin{proof}
    We will build a stage extension $S$ in two passes. First we will decide which elements should be in $S^\oplus$, ensuring that this is a maximal set. Then we will find a set $S$ which puts each of these elements in $S^\oplus$.

    Fix an ordering $<$ on $\F$ as in Lemma \ref{lem:orderingF}. We will construct a set $D\subseteq A_\F$ by recursively (in $<$) deciding for each $a\in \A_\F$ whether or not to put $a$ into $D$. We consider each element in order. When we consider $a$, we put $a$ into $D$ if and only if there exists some set $B$ which is conflict-free so that $(D\cap \{b \mid b<a\}\cup \{a\})\subseteq B^\oplus$. 
    It is immediate from the definition that there is no set $B$ so that $B^\oplus\supsetneq D$. Thus any conflict-free set $S$ so that $S^\oplus=D$ is a stage extension.

    We construct a propositional theory $T$ in the language with predicates $P_a$ and $Q_a$ for each $a\in \A_\F$. We interpret $P_a$ as saying that $a\in S$ and $Q_a$ as saying that $a\in S^\oplus$.

    Let $T_0$ be the theory containing the sentences $\neg (P_a\wedge P_b)$ for each pair $a\att b$. Let $T_1$ be the theory containing the sentences $Q_a\leftrightarrow P_a\vee \bigvee_{b\att a} P_b$. Let $T_2$ be the theory containing the sentences $Q_a$ for each $a\in D$. Let $T=T_0\cup T_1\cup T_2$. In particular, $T$ says that $S$ is a conflict-free extension so that $S^\oplus$ contains $D$.

    We will use compactness to show that $T$ is consistent. Let $T'$ be a finite subset of $T$. Let $X$ be the set of elements $a$ so that $P_a$ or $Q_a$ is mentioned in $T'$. Let $G$ be the AF $F\vert_{X\cup X^-}$. We argue that $G$ has a conflict-free extension $S$ so that $D\cap X\subseteq S^\oplus$. Let $b$ be the $<$-largest element of $D\cap X$. Then to put $b$ into $D$, we saw that there exists some set $S_b\subseteq \A_\F$ which is conflict-free and $(D\cap \{a \mid a<b\})\cup \{b\}\in S_b^\oplus$. Let $S=S_b\cap (X\cup X^-)$. Then $S$ is conflict-free. For any $x\in D\cap X$, there must be some $y\in S_0$ so that $y=x$ or $y\att x$. In the latter case, $y\in S_b\cap X^-\subseteq S$. Thus $D\cap X\subseteq S^\oplus$. It follows that $T'$ is a consistent sub-theory of $T$.

    By compactness, $T$ is a consistent theory, so we let $\pi$ be a model of $T$. Then define $S$ to be the set of $a$ so that $\pi$ makes $P_a$ true. Then $S$ is conflict-free by $T_0$. $S^\oplus$ contains $D$ by $T_1\cup T_2$. Thus $S$ is a stage extension of $\F$.
\end{proof}

\end{document}